\pdfoutput=1

\documentclass[english,11pt]{article}
\input{macros}

\begin{document}
\title{\Large Pseudo-Labeling for Unsupervised Domain Adaptation with Kernel GLMs}


\author{
    Nathan Weill\thanks{Department of IEOR, Columbia University. Email: \texttt{nmw2142@columbia.edu}}
    \quad 
    Kaizheng Wang\thanks{Department of IEOR and Data Science Institute, Columbia University. Email: \texttt{kw2934@columbia.edu}}
}

\date{\vspace{-1em} This version: March 2026}

\maketitle

\begin{abstract}
We propose a principled framework for unsupervised domain adaptation under covariate shift in kernel Generalized Linear Models (GLMs), encompassing kernelized linear, logistic, and Poisson regression with ridge regularization. Our goal is to minimize prediction error in the target domain by leveraging labeled source data and unlabeled target data, despite differences in covariate distributions. We partition the labeled source data into two batches: one for training a family of candidate models, and the other for building an imputation model. This imputation model generates pseudo-labels for the target data, enabling robust model selection. We establish non-asymptotic excess-risk bounds that characterize adaptation performance through an ``effective labeled sample size,'' explicitly accounting for the unknown covariate shift. Experiments on synthetic and real datasets demonstrate consistent performance gains over source-only baselines.
\end{abstract}

\noindent \textbf{Keywords:} Covariate shift, kernel GLMs, pseudo-labeling, unsupervised domain adaptation.

\section{Introduction}

The standard machine learning framework relies on the fundamental assumption that training and test datasets are drawn from the same underlying distribution \citep{Vapnik1999-VAPTNO}. However, in realistic deployment scenarios, this assumption is frequently violated. A canonical instance of distribution shift is \emph{covariate shift} \citep{SHIMODAIRA2000227, ben-david_theory_2010}, where the marginal distribution of features varies between the source and target domains, while the conditional distribution of the output label remains stable. This phenomenon is pervasive in high-stakes applications, from personalized medicine to computer vision. It is well-documented that covariate shift can severely degrade predictive performance \citep{koh2021wildsbenchmarkinthewilddistribution}.

In many applications, obtaining labels for the target population is prohibitively expensive, whereas collecting unlabeled covariates is straightforward. This motivates the study of unsupervised domain adaptation (UDA), where the learning algorithm must generalize to the target domain using labeled source data and only unlabeled target covariates. We investigate UDA for ridge-regularized Generalized Linear Models (GLMs) and their kernel extensions. In this regime, simply applying a model trained on the source is often suboptimal, even if the model is well-specified. Standard learning algorithms tune regularization hyperparameters to balance bias and variance on the \emph{source} distribution, ignoring regions of the covariate space where the target density is high but source data is scarce. Consequently, achieving optimal target performance requires an adaptive strategy that leverages unlabeled target data to recalibrate the estimator. 

\paragraph{Contributions}
We address this challenge by developing a pseudo-labeling framework tailored for kernel GLMs. The approach enables model selection for the target domain using only unlabeled data. Our contributions are two-fold:

\begin{itemize}
    \item \textbf{Methodology:} We propose a procedure that partitions the labeled source data into two batches. The first batch is used to train a family of candidate models, while the second trains an imputation model to generate pseudo-labels for the target covariates. These pseudo-labels serve as a proxy for ground truth, allowing us to select the candidate model that minimizes the estimated target risk. We provide precise guidance for tuning the imputation model to optimize the final selection.
    
    \item \textbf{Theory:} We derive non-asymptotic excess risk bounds for ridge-regularized kernel GLMs in the UDA setting. Our bounds quantify the information transfer via an ``effective labeled sample size'' that accounts for the discrepancy between source and target spectral properties. This theoretical result demonstrates that our method automatically adapts to the unknown covariate shift.
\end{itemize}

\paragraph{Related Work}

The theoretical landscape of transfer learning largely focuses on settings where the learner has access to labeled target data or a known target covariate distribution \citep{ben-david_theory_2010,kpotufe2021marginal}. Such assumptions simplify adaptation by allowing for direct risk estimation or precise density ratio calculation. In contrast, we address the strictly unsupervised regime, where the target distribution is unknown and observed only through a finite set of unlabeled covariates. This necessitates inferring the structure of the shift and adapting the estimator solely from the geometry of the unlabeled data, without the guidance of target labels.

The necessity of adaptation under covariate shift depends on the modeling regime. In the \emph{parametric} setting, if the model is well-specified, standard Maximum Likelihood Estimation (MLE) on the source is asymptotically efficient for the target, rendering adaptation unnecessary \citep{mleisallyouneed}. However, if the parametric model is mis-specified, the source MLE is biased for the target; the standard remedy is importance weighting \citep{SHIMODAIRA2000227, sugiyama_direct_2008} or its doubly robust variants utilizing pseudo-outcomes \citep{JMLR:v24:22-0700}. While statistically consistent, these methods often suffer from high variance and the difficulty of estimating density ratios in high dimensions.

In the \emph{nonparametric} regime we consider, the situation differs. Even if the model is well-specified (e.g., the true function lies in an RKHS), regularization is required to ensure finite-sample generalization. Recent works have established theoretical guarantees for reweighting-based adaptation in kernel ridge regression (KRR) \citep{ma_optimally_2023} and kernel methods with more general losses \citep{feng_towards_2023}. However, these approaches remain dependent on the existence and accurate estimation of density ratios. An alternative that avoids density estimation is to adapt the regularization directly. \citet{wang2023pseudo} addressed this for KRR using pseudo-labels to estimate the target error for selecting the optimal regularization. Our work operates in this same nonparametric regime, extending the pseudo-labeling paradigm to the broader and technically more demanding class of kernel GLMs.

The strategy of filling missing labels with model predictions has driven empirical success in deep learning \citep{lee2013pseudolabel,sohn2020fixmatch}. Theoretical analysis of pseudo-labeling has mostly focused on classification under cluster assumptions or margin conditions \citep{liu2021cycle,selftrainingwei,cai2021theory}, which do not translate to the continuous responses covered by our GLM framework.

\paragraph{Notations}
The constants $c_1, c_2, C_1, C_2, \cdots$ may differ from line to line. We use the symbol $[n]$ as a shorthand for $\{1,2, \cdots, n\}$ and $|\cdot|$ to denote the absolute value of a real number or cardinality of a set. For nonnegative sequences $\left\{a_n\right\}_{n=1}^{\infty}$ and $\left\{b_n\right\}_{n=1}^{\infty}$, we write $a_n \lesssim b_n$ or $a_n=O\left(b_n\right)$ if there exists a positive constant $C$ such that $a_n \leq C b_n$. In addition, we write $a_n \asymp b_n$ if $a_n \lesssim b_n$ and $b_n \lesssim a_n ; a_n=o\left(b_n\right)$ if $a_n=O\left(c_n b_n\right)$ for some $c_n \rightarrow 0$. Notations with tildes (e.g., $\widetilde{O}$ ) hide logarithmic factors. For a matrix $\boldsymbol{A}$, we use $\|\boldsymbol{A}\|_2=\sup _{\|\boldsymbol{x}\|_2=1}\|\boldsymbol{A} \boldsymbol{x}\|_2$ to denote its spectral norm. For a bounded linear operator $\boldsymbol{A}$ between two Hilbert spaces $\mathbb{H}_1$ and $\mathbb{H}_2$, we define its operator norm $\|\boldsymbol{A}\|=\sup _{\|\boldsymbol{u}\|_{\mathbb{H}_1}=1}\|\boldsymbol{A} \boldsymbol{u}\|_{\mathbb{H}_2}$. For a bounded, self-adjoint linear operator $\boldsymbol{A}$ mapping a Hilbert space $\mathbb{H}$ to itself, we write $\boldsymbol{A} \succeq 0$ if $\langle\boldsymbol{u}, \boldsymbol{A} \boldsymbol{u}\rangle \geq 0$ holds for all $\boldsymbol{u} \in \mathbb{H}$; in that case, we define $\|\boldsymbol{u}\|_{\boldsymbol{A}}=\sqrt{\langle\boldsymbol{u}, \boldsymbol{A} \boldsymbol{u}\rangle}$. For any $\boldsymbol{u}$ and $\boldsymbol{v}$ in a Hilbert space $\mathbb{H}$, their tensor product $\boldsymbol{u} \otimes \boldsymbol{v}$ is a linear operator from $\mathbb{H}$ to itself that maps any $\boldsymbol{w} \in \mathbb{H}$ to $\langle\boldsymbol{v}, \boldsymbol{w}\rangle \boldsymbol{u}$. Define $\|X\|_{\psi_\alpha}=\sup _{p \geq 1}\left\{p^{-1 / \alpha} \mathbb{E}^{1 / p}|X|^p\right\}$ for a random variable $X$ and $\alpha \in\{1,2\}$. If $\boldsymbol{X}$ is a random element in a separable Hilbert space $\mathbb{H}$, then we let $\|\boldsymbol{X}\|_{\psi_\alpha}=\sup _{\|\boldsymbol{u}\|_{\mathbb{H}}=1}\|\langle\boldsymbol{u}, \boldsymbol{X}\rangle\|_{\psi_\alpha}$, $\alpha \in\{1,2\}$.

\section{Problem Setup}

\subsection{Generalized linear models under covariate shift}

Let $\{(\bx_i,y_i)\}_{i=1}^n$ and $\{(\bx_{0i},y_{0i})\}_{i=1}^{n_0}$ denote samples drawn from the source and target distributions, respectively. Here, $\bx_i, \bx_{0i} \in \mathcal{X}$ are covariates in a compact metric space $\cX$, and $y_i, y_{0i} \in \mathbb{R}$ are real-valued responses. In the Unsupervised Domain Adaptation (UDA) setting, the target labels $\{y_{0i}\}_{i=1}^{n_0}$ are unobserved. Our goal is to learn a predictive model using the labeled source data $\{(\bx_i,y_i)\}_{i=1}^n$ and the unlabeled target covariates $\{\bx_{0i}\}_{i=1}^{n_0}$ that achieves low error on the target population.

We assume that both domains share a common Generalized Linear Model (GLM) \citep{glm_article} in a Reproducing Kernel Hilbert Space (RKHS) $\cF$. Specifically, conditioned on $\bx$, the response $y$ is drawn from an exponential family distribution whose density function has the form:
\begin{equation}\label{eq:glm-loglik}
p(y | \bx) = h(y,\tau) \exp \bigg(
\frac{ y f^*(\bx) - \logpar(f^*(\bx))}{d(\tau)}
\bigg)
,
\end{equation}
where $\logpar: \mathbb{R} \to \mathbb{R}$ is a known, strictly convex log-partition function; $h$ and $d$ are known functions; $\tau>0$ is a (possibly unknown) dispersion parameter; and $f^*$ belongs to the class $\cF$.
The conditional mean of the response is given by the canonical link:
\begin{equation}\label{eq:glm-mean}
    \mathbb{E}[y \mid \bx] = \logpar'(f^*(\bx)).
\end{equation}
Up to an affine transform,
the corresponding negative log-likelihood function is:
\begin{equation}\label{eq:glm-loss}
    \ell(f(\bx), y) := \logpar(f(\bx)) - y f(\bx), \quad \forall f \in \cF.
\end{equation}
Classical examples include \emph{linear regression} [$\logpar(u)=u^2/2$], \emph{logistic regression} [$\logpar(u)=\log(1+e^u)$], and \emph{Poisson regression} [$\logpar(u)=e^u$].

The RKHS $\cF$ is generated by a continuous, symmetric, positive semi-definite kernel $K: \mathcal{X} \times \mathcal{X} \rightarrow \mathbb{R}$. By the Moore-Aronszajn Theorem \citep{arons}, there exists a feature space (Hilbert space) $\mathbb{H}$ with inner product $\langle\cdot, \cdot\rangle_{\mathbb{H}}$ and a mapping $\Phi: \mathcal{X} \rightarrow \mathbb{H}$ such that $K(\bx, \bx') = \langle\Phi(\bx), \Phi(\bx')\rangle_{\mathbb{H}}$ for all $\bx, \bx' \in \mathcal{X}$. The RKHS is defined as the class of linear functionals on the feature space:
\begin{equation}\label{eq: rkhs isometry}
    \cF = \left\{ f_{\btheta}: \mathcal{X} \to \mathbb{R} \mid f_{\btheta}(\bx) = \langle \Phi(\bx), \btheta \rangle_{\mathbb{H}}, \, \btheta \in \mathbb{H} \right\}.
\end{equation}

Equipped with the norm $\|f_{\btheta}\|_{\cF} := \|\btheta\|_{\mathbb{H}}$, $\cF$ is a Hilbert space isometric to $\mathbb{H}$.

Below are some frequently used kernels and associated RKHS's \citep{wainwright}.

\begin{example}[Linear and affine kernels]\label{ex: linear kernel}
Let $\cX \subset \RR^d$. The linear kernel $K(\bz, \bw) = \bz^\top \bw$ and the affine kernel $K(\bz, \bw) = 1+\bz^\top \bw$ yield the RKHS $\cF$ of all linear and affine functions respectively, of dimension $D=d$ and $D=d+1$ respectively.
\end{example}

\begin{example}[Polynomial kernels]\label{ex: polynomial kernel}
Let $\cX \subset \RR^d$. The inhomogeneous polynomial kernel of degree $m\ge 2$ is $K(\bz, \bw) = (1+\bz^\top \bw)^m$. It yields the RKHS $\cF$ of all polynomials of degree $m$ or less, of dimension $D=\binom{d+m}{m}$.
\end{example}

The function spaces in \Cref{ex: linear kernel,ex: polynomial kernel} are finite-dimensional. The following example is infinite-dimensional. 

\begin{example}[First-order Sobolev kernel]\label{ex: sobolev kernel}
Let $\cX = [0,1]$ and $K(\bz, \bw) = \min\{\bz,\bw\}$. Then the first-order Sobolev space is
$$\cF = \acc{f : [0,1] \rightarrow \RR \mid f(0)=0, \int_0^1 \abs{f'(x)}^2dx < \infty}.$$
\end{example}


\medskip

We adopt the following standard assumptions for the UDA setting:

\begin{assumption}[GLM in RKHS and Random Design]\label{setup}
The source dataset $\cD = \{(\bx_i, y_i)\}_{i=1}^n$ and target dataset $\cD_0 = \{(\bx_{0i}, y_{0i})\}_{i=1}^{n_0}$ are independent, with samples drawn i.i.d. within each domain.
\begin{itemize}
    \item \textbf{Covariate Shift:} The covariate distributions for source and target are $\cP$ and $\cQ$, respectively. We assume $\supp(\cQ) \subseteq \supp(\cP)$. The second-moment operators $\bSigma = \mathbb{E}_{\bx \sim \cP}[\Phi(\bx) \otimes \Phi(\bx)]$ and $\bSigma_0 = \mathbb{E}_{\bx \sim \cQ}[\Phi(\bx) \otimes \Phi(\bx)]$ are trace-class.
    \item \textbf{Well-Specified Model:} There exists a true function $f^* = f_{\btheta^*} \in \cF$ that generates the responses in both domains according to the distribution in \eqref{eq:glm-loglik}.
    \item \textbf{Finite Noise Variance:} Define noise variables $\res_i=-y_i+\logpar'\paren{f^*(\boldsymbol{x}_i)}$ and $\res_{0 i}=-y_{0 i}+\logpar'\paren{f^*(\boldsymbol{x}_{0i})}$. We assume that conditioned on the covariates, $\{\varepsilon_i\}_{i=1}^n$ and $\{\varepsilon_{0i}\}_{i=1}^{n_0}$ have finite second moments.
\end{itemize}
\end{assumption}

\subsection{Ridge regularization for kernel GLMs}

For any predictive model $f \in \mathcal{F}$, we evaluate its performance via the expected negative log-likelihood on the target population:
\begin{align}\label{eq: glm risk}
    R(f) = \mathbb{E}_{(\bx, y) \sim \cQ} [\logpar(f(\bx)) - y f(\bx)]
         = \mathbb{E}_{\bx \sim \cQ} [\logpar(f(\bx)) - f(\bx) \logpar'(f^*(\bx))]. 
\end{align}
Under Assumption \ref{setup}, $R(f)$ is minimized by $f^*$. We define the \emph{excess risk} as:
\begin{align}\label{eq: glm excess risk}
    \cR(f) = R(f) - R(f^*)
           = \mathbb{E}_{\bx \sim \cQ} \Big[ \logpar(f(\bx)) - \logpar(f^*(\bx))
       - \logpar'(f^*(\bx))(f(\bx) - f^*(\bx)) \Big].
\end{align}
This quantity, which is the expected Bregman divergence defined by $\logpar$, measures the discrepancy between the predicted conditional distribution and the truth.

A standard approach to estimating $f^*$ is to minimize the regularized empirical risk on the source data:
\begin{equation}\label{regularized glm bis}
    \hat{f}_\lambda = \argmin_{f \in \mathcal{F}} \left\{ \frac{1}{n} \sum_{i=1}^n \left( \logpar(f(\bx_i)) - y_i f(\bx_i) \right) + \frac{\lambda}{2}\|f\|_{\mathcal{F}}^2 \right\},
\end{equation}
where $\lambda > 0$ is a regularization hyperparameter balancing data fit and smoothness. By the Representer Theorem \citep{scholkopf2001generalized}, the solution lies in the span of the source data: $\hat{f}_\lambda(\cdot) = \sum_{i=1}^n \alpha_i K(\bx_i, \cdot)$. The coefficient vector $\hat{\balpha} \in \mathbb{R}^n$ is the solution to the convex optimization problem:
\begin{equation}\label{eq: alpha space}
    \min_{\balpha \in \RR^n} \left\{ \frac{1}{n} \left( \mathbf{1}^{\top}\ba(\balpha) - \by^{\top}\bK \balpha \right) + \frac{\lambda}{2} \balpha^{\top} \bK \balpha \right\},
\end{equation}
where $\bK_{ij} = K(\bx_i, \bx_j)$, $\by = (y_1, \dots, y_n)^\top$, and $\ba(\balpha) = (\logpar([\bK \balpha]_1), \dots, \logpar([\bK \balpha]_n))^\top$.

For source-only learning, $\lambda$ is typically selected to minimize the source validation error. However, under covariate shift, an optimal penalty for the source may be suboptimal for the target, particularly if the target distribution is concentrated in directions where the source provides weak coverage. 
Ideally, if we had access to target labels $\{y_{0i}\}_{i=1}^{n_0}$, we would select the parameter $\lambda$ minimizing the empirical target risk:
\begin{equation}\label{eq: hold-out validation}
    \frac{1}{n_0}\sum_{i=1}^{n_0}\paren{\logpar(\hat f_\lambda(\bx_{0i}))-y_{0i}\hat f_\lambda(\bx_{0i})}.
\end{equation}
Due to missing labels, we require a strategy to estimate the target performance using only the unlabeled covariates.

\section{Methodology}\label{section: methodology}

We propose an adaptive method for unsupervised domain adaptation with kernel ridge GLM (KRGLM). The core idea is to construct a computable proxy for the empirical risk \eqref{eq: hold-out validation}. Note that the conditional mean response $\EE (y_{0i} | \bx_{0i} ) = \logpar'(f^* (\bx_{0i} ) )$ is given by the true model $f^*$. We will build an auxiliary ``imputation model'' $\tilde{f}$ trained on part of the labeled source data and then use the estimated conditional mean $\tilde{y}_{0i} = \logpar'(\tilde{f} (\bx_{0i} ) )$ to replace $y_{0i}$ in \eqref{eq: hold-out validation}. The full procedure is detailed in Algorithm \ref{alg:pseudo-krr}.

\begin{algorithm}[h]
\caption{Pseudo-Labeling for Kernel GLM}
\label{alg:pseudo-krr}
\begin{algorithmic}[1]
\STATE \textbf{Input}: Source data $\{(\bx_i, y_i)\}_{i=1}^n$, unlabeled target covariates $\{\bx_{0i}\}_{i=1}^{n_0}$, training split size $n_1 \in [n]$, candidate penalty grid $\Lambda \subset (0,\infty)$, imputation penalty $\tilde{\lambda} > 0$.

\STATE \textbf{Step 1 (Data Splitting)}. Randomly partition the source data into two disjoint subsets: $\mathcal{D}_1$ of size $n_1$ (for candidate training) and $\mathcal{D}_2$ of size $n_2 = n - n_1$ (for imputation).

\STATE \textbf{Step 2 (Training)}. 
\STATE \quad (a) \textbf{Candidate Models}: For each $\lambda \in \Lambda$, train a candidate $\hat{f}_{\lambda}$ on $\mathcal{D}_1$ by solving the program
\[
\min_{f \in \mathcal{F}} \bigg\{ \frac{1}{n_1}\sum_{(\bx,y)\in \mathcal{D}_1}(\logpar(f(\bx)) - y f(\bx)) + \frac{\lambda}{2}\|f\|_{\mathcal F}^2 \bigg\}.
\]
\STATE \quad (b) \textbf{Imputation Model}: Train the imputation model $\tilde f$ on $\mathcal{D}_2$ with penalty $\tilde{\lambda}$ by solving the program
\[
\min_{f \in \mathcal{F}} \bigg\{ \frac{1}{n_2}\sum_{(\bx,y)\in \mathcal{D}_2}(\logpar(f(\bx)) - y f(\bx)) + \frac{\tilde\lambda}{2}\|f\|_{\mathcal F}^2 \bigg\}.
\]

\STATE \textbf{Step 3 (Pseudo-Labeling)}. Generate pseudo-labels for the target data using the conditional mean of the imputation model:
\[
\tilde y_{0i} = \logpar'(\tilde f(\bx_{0i})), \quad \text{for } i = 1, \dots, n_0.
\]

\STATE \textbf{Step 4 (Model Selection)}. Select the candidate $\hat{\lambda} \in \Lambda$ that minimizes the pseudo-target risk:
\begin{equation}\label{eq: risk proxy}
\frac{1}{n_0}\sum_{i=1}^{n_0}\paren{\logpar(\hat f_\lambda(\bx_{0i})) - \tilde{y}_{0i} \hat f_\lambda(\bx_{0i})} .
\end{equation}

\STATE \textbf{Output}: The selected model $\hat{f} = \hat{f}_{\hat\lambda}$.
\end{algorithmic}
\end{algorithm}

\paragraph{Soft vs. Hard Labeling.} 
For classification tasks such as logistic regression, our method utilizes ``soft'' pseudo-labels $\tilde{y}_{0i} = 1/ (1 + e^{-\widetilde{f}(\bx_{0i}) }) \in (0, 1)$---representing the estimated conditional probability---rather than ``hard'' labels $
\mathbf{1}(\widetilde{f}(\bx_{0i}) \geq 0) \in 
\{0, 1\}$. This distinction is critical. While hard pseudo-labeling is common in semi-supervised learning to enforce low-entropy decision boundaries \citep{lee2013pseudolabel,sohn2020fixmatch,liu2021cycle,selftrainingwei,cai2021theory}, our objective is to minimize the negative log-likelihood to recover the true conditional distribution. Using hard labels would destroy the calibration information necessary for minimizing the log-likelihood, introducing severe bias into the risk proxy \eqref{eq: risk proxy} by forcing candidate models to fit extreme probabilities (0 or 1) rather than the true underlying uncertainty.

\paragraph{Implementation and Tuning Strategy.}
The procedure separates the role of model training (Step 2a) from validation (Step 2b \& 4), ensuring that the data used to score the models ($\mathcal{D}_2$) is independent of the data used to train them ($\mathcal{D}_1$). The computational cost is dominated by solving the kernel GLM optimization, which is efficient for standard losses. 
Our theoretical analysis (presented in \cref{theory}) provides specific guidance on parameter choices:
\begin{itemize}
    \item \textbf{Split Ratio}: We typically set $n_1 = \lfloor n/2 \rfloor$, using half the source data for training candidates and half for imputation. This balanced split is standard in split-sample validation methods.
    
    \item \textbf{Candidate Grid ($\Lambda$)}: The grid should be sufficiently dense to cover the bias-variance spectrum. We typically set $\Lambda$ as a geometric sequence ranging from $O(n^{-1})$ (low bias, high variance) to $O(1)$ (high bias, low variance), with a cardinality of roughly $O(\log n)$.
    
    \item \textbf{Imputation Penalty ($\tilde{\lambda}$)}: This is the critical hyperparameter. Unlike standard supervised learning, where $\tilde{\lambda}$ would be tuned to minimize source prediction error, here it must be chosen to optimize the \emph{model selection} capability of the pseudo-labels. Our theory suggests setting $\tilde{\lambda} \asymp n^{-1}$ up to logarithmic factors. This rate typically corresponds to ``undersmoothing'' the imputation model---prioritizing low bias in the pseudo-labels over low variance---which is crucial for accurately ranking the candidate models on the target domain.
\end{itemize}

\section{Theoretical Guarantees on the Adaptivity}\label{theory}

This section develops theoretical guarantees for \cref{alg:pseudo-krr}. We show that the resulting estimator adapts both to the target distribution's structure and to the unknown covariate shift. Our analysis introduces an effective sample size that quantifies how informative the labeled source data are for the target regression task.

\subsection{Assumptions}
Our analysis of kernel ridge GLM under covariate shift (\cref{setup}) necessitates only mild regularity conditions, that one can verify are satisfied in most common scenarios.

First, we need some assumptions on the kernel function $K$. Mercer's theorem (see for instance \cite{wainwright}) guarantees the existence of an orthonormal eigen-expansion of the PSD kernel with respect to a measured function class when it is continuous and satisfies the Hilbert-Schmidt condition.

\begin{assumption}[Kernel regularity]\label{kernel assumptions}
We assume Mercer's theorem holds under both source and target distributions, impose a uniform boundedness assumptions on the eigenfunctions, and a polynomial decay of the eigenvalues:
    \begin{itemize}
        \item For any $\bx, \bz \in \cX$, $K(\bx, \bz)$ can be expanded as
        $$\sum_{j=1}^{\infty} \mu_j \phi_j(\bx) \phi_j(\bz) = \sum_{j=1}^{\infty} \mu_{0j} \phi_{0j}(\bx) \phi_{0j}(\bz)$$
        where $\left(\mu_j\right)_{j=1}^{\infty}$ and $\left(\mu_{0j}\right)_{j=1}^{\infty}$ are sequences of positive eigenvalues \footnote{we order them as \(\mu_1\ge\mu_2\ge\cdots>0\) and \(\mu_{01}\ge\mu_{02}\ge\cdots>0\). Note that they are also the eigenvalues of the second moment operators $\bSigma$ and $\bSigma_0$ respectively.}, $\left(\phi_j\right)_{j=1}^{\infty}$ is an orthonormal sequence of eigenfunctions in $L^2(\mathcal{X} ; \cP)$, and $\left(\phi_{0j}\right)_{j=1}^{\infty}$ is an orthonormal sequence of eigenfunctions in $L^2(\mathcal{X} ; \cQ)$. The convergence of the infinite series hold absolutely and uniformly.
        \item $\exists M_1<\infty, \forall j, \norm{\phi_j}_{\infty} \le M_1, \norm{\phi_{0j}}_{\infty} \le M_1$. 
        \item $\exists A, A'>0,\, \exists \alpha \ge1, \,\forall j\ge 1, \mu_j \le A j^{-2}, \, \mu_{0j} \le A' j^{-2\alpha} $.
    \end{itemize}
\end{assumption}

The polynomial upper-bound on the spectral decay is a smoothness condition for the kernel: a faster decay (larger $\alpha$) corresponds to a smoother RKHS, as
high-frequency eigendirections contribute less to the kernel. It is satisfied by the first-order Sobolev kernel on $\cX=[0,1]$ (\cref{ex: sobolev kernel}) with $\alpha=1$ under $\cQ = \cU[0,1]$. It is also trivially
satisfied for any $\alpha$ by finite-rank kernels such as linear and polynomial kernels (\Cref{ex: linear kernel,ex: polynomial kernel}) which have at most $D$
non-zero eigenvalues.

The uniform boundedness of eigenfunctions holds for linear and polynomial kernels on compact domains, as well as for the first-order Sobolev kernel, whose eigenfunctions are sinusoidal. 


Under \cref{kernel assumptions}, the feature map induced by the kernel $K$ can be written as 
$$\Phi :\bx \mapsto \left(\sqrt{\mu_1} \phi_1(\bx), \, \sqrt{\mu_2} \phi_2(\bx), \, \ldots\right).$$

\begin{remark}
    These assumptions on the kernel imply that the covariates are bounded almost surely: there exists a constant $M_2 < \infty$ such that $\norm{\Phi(\bx)} \le M_2$ holds almost surely for $\bx$ from $\cP$ and $\cQ$.
\end{remark} 

\bigskip
We also assume bounded noise and signal. The noise is only assumed sub-exponential instead of sub-gaussian to include Poisson regression in our analysis.

\begin{assumption}[Sub-exponential noise]\label{subexponentialassumption}
Conditioned on $\bX$, the noise vector \(\bres\) satisfies $\norm{\bres}_{\psi_1} \le \resv \le \infty$.
\end{assumption}


\begin{assumption}[Bounded signal]\label{trueparam}
$\norm{f^*}_{\cF} \le R < \infty$.
\end{assumption}

\begin{remark}
\Cref{kernel assumptions,trueparam} imply that for any $\bx \in \supp(\cP)$, $|f^*(\bx) | \le C:=RM_2$.
\end{remark} 

Finally, we assume some smoothness and convexity conditions for the log-partition function:

\begin{assumption}[Local strong convexity and boundedness]\label{convexityassumption}
    We assume that the log-partition function $\logpar$ is strictly convex, smooth, and strongly convex in compact regions: for any $\rho >0$, we have $k_{\rho}:= \inf_{|z| \le (\rho +1)C} \logpar''(z) >0$. Define $k=k_1$. Moreover, $\logpar''$ is assumed to be upper bounded and Lipschitz on compact regions: let $K_{\rho}:= \sup_{|z| \le (\rho +1)C} \logpar''(z)<\infty$, $L_{\rho}:= \sup_{|z| \le (\rho +1)C} \logpar'''(z)<\infty$, and define $K=K_1, \, L=L_1$. Then, $\forall |z| \le 2C, k\le \logpar''(z) \le K$ and $\forall |z_1|, |z_2| \le 2C, \abs{\logpar''(z_1)-\logpar''(z_2)} \le L\abs{z_1-z_2}$. 
\end{assumption}

One can easily check that \cref{convexityassumption} is satisfied for common GLMs. Table \ref{tab:glm_properties} provides simple bounds for the corresponding constants in the three canonical examples.

\begin{table}[t]
  \caption{Properties of the log-partition function $a$ on $[-2C, 2C]$.}
  \label{tab:glm_properties}
  \begin{center}
    \begin{small}
      \begin{sc}
        \begin{tabular*}{\columnwidth}{@{\extracolsep{\fill}} lcccc @{}}
          \toprule
          \textbf{Model} & $a(z)$ & $k$ & $K$ & $L$ \\
          \midrule
          Linear & $\frac{z^2}{2}$ & $1$ & $1$ & $0$ \\
          Logistic & $\log(1 + e^z)$ & $\frac{e^{2C}}{(1+e^{2C})^2}$ & $1/4$ & $0.1$ \\
          Poisson & $e^z$ & $e^{-2C}$ & $e^{2C}$ & $e^{2C}$ \\
          \bottomrule
        \end{tabular*}
      \end{sc}
    \end{small}
  \end{center}
  \vskip -0.1in
\end{table}

\subsection{Main results}

Our main theorem shows that our method adapts to the unknown covariate shift.

\begin{theorem}[Excess risk]\label{main theorem}
    Let \Cref{setup,kernel assumptions,subexponentialassumption,convexityassumption,trueparam} hold. Define for $\mu^2=M_1^6\sigma^2$
    \begin{equation}\label{effective sample size}
        n_{\text{eff}} = \sup \bigg\{ t\le n ~\bigg|~ t\bSigma_0 \preceq n\bSigma + \frac{\mu^2}{k}\bI \bigg\}.
    \end{equation}
Choose any $\delta \in (0, 1/e]$. Consider Algorithm \ref{alg:pseudo-krr} with $n_1=n/2$,  $\tilde \lambda =\mu^2\log^7(n)\log(n_0/\delta)/n$, and $$\Lambda = \acc{2^{j-1}\mu^2/n : 1\le j\le \lceil \log_2n\rceil +1}$$
    Then there exists a function $\zeta$ polylogarithmic in $(n, n_0, \delta^{-1})$ multiplied by a polynomial in $\frac{K}{k}$, such that \wpd:

    \begin{align}\label{eq: main theorem}
    R(\hat f) -R(f^*) 
     \le \zeta \cdot \inf_{\rho >0} \left\{ \rho R^2 + \frac{\resv^2}{n_{\text{eff}}} \sum_{j=1}^{\infty} \frac{\mu_{0j}}{k\mu_{0j}+\rho} \right\} + \zeta \cdot R^2(1+L^2)\mu^2 \left( \frac{1}{n_{\text{eff}}} + \frac{1}{n_0} \right)
    \end{align}

\end{theorem}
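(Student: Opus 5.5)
The proof has three parts: an oracle inequality for the pseudo-label selection step, an analysis of each candidate $\hat f_\lambda$, and an analysis of the imputation model $\tilde f$. Condition on $\mathcal D_1$. The candidates $\{\hat f_\lambda\}_{\lambda\in\Lambda}$ are then fixed, and $\tilde f$ and the target covariates are independent of them.

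\textbf{Step 1: selection error.} Write the empirical target risk as $\hat R(f)=\frac1{n_0}\sum_i(\logpar(f(\bx_{0i}))-\logpar'(f^*(\bx_{0i}))f(\bx_{0i}))$ and the pseudo-risk as $\tilde R(f)$. Their difference is
\[
\tilde R(f)-\hat R(f)=\frac1{n_0}\sum_i\big(\logpar'(f^*)-\logpar'(\tilde f)\big)(\bx_{0i})\,f(\bx_{0i}).
\]
Since $\hat\lambda$ minimizes $\tilde R$, for any $\lambda\in\Lambda$ the basic comparison gives
\[
\hat R(\hat f_{\hat\lambda})-\hat R(f^*)\le \hat R(\hat f_\lambda)-\hat R(f^*)+\frac1{n_0}\sum_i\big(\logpar'(\tilde f)-\logpar'(f^*)\big)(\bx_{0i})\,(\hat f_{\hat\lambda}-\hat f_\lambda)(\bx_{0i}).
\]
I would bound the cross term by Cauchy--Schwarz in $L^2(\hat\cQ_{n_0})$ and the mean value theorem, using $\logpar''\le K$. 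This gives $K\|\tilde f-f^*\|_{\hat\cQ}(\|\hat f_{\hat\lambda}-f^*\|_{\hat\cQ}+\|\hat f_\lambda-f^*\|_{\hat\cQ})$. Local strong convexity lets empirical excess risks dominate $\frac k2\|\cdot-f^*\|_{\hat\cQ}^2$, provided all functions stay in a sup-norm ball of radius $2C$; this has to be checked for $\tilde f$ and for every $\hat f_\lambda$ with $\lambda\ge\mu^2/n$. Applying AM--GM then yields
\[
\hat\cR(\hat f)\lesssim \tfrac Kk\min_{\lambda}\hat\cR(\hat f_\lambda)+\tfrac{K^2}{k}\|\tilde f-f^*\|^2_{\hat\cQ}.
\]
Finally, I would pass from empirical $\hat\cQ$ quantities to population $\cQ$ quantities by concentration over the $O(\log n)$ grid. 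Because $\Phi$ is bounded by $M_2$, a Bernstein or union bound incurs the $R^2\mu^2/n_0$ term.

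\textbf{Step 2: candidate bounds.} I would prove a one-step localized analysis of kernel ridge GLMs under shift. Set $\bSigma_\lambda=\bSigma+\lambda\bI$. Linearize the first-order condition around $\btheta^*$, with the empirical Hessian $\hat\bH=\frac1{n_1}\sum_i\logpar''(f^*(\bx_i))\Phi(\bx_i)\otimes\Phi(\bx_i)$, which satisfies $\hat\bH\succeq k\hat\bSigma$. The error $\hat\btheta_\lambda-\btheta^*$ then splits into three pieces: a bias term $\lambda(\hat\bH+\lambda)^{-1}\btheta^*$, a variance term $(\hat\bH+\lambda)^{-1}\frac1{n_1}\sum\res_i\Phi(\bx_i)$, and a Taylor remainder controlled by $L$.

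Operator concentration gives $\hat\bSigma+\lambda\succeq\frac12(\bSigma+\lambda)$ once $\lambda\gtrsim\mu^2\log/n$; this is what the lower end of the grid and the $M_1^6$ factor in $\mu^2$ are for. The target excess risk is at most $K\|\hat\btheta-\btheta^*\|^2_{\bSigma_0}$. For the bias, $\|\lambda(\hat\bH+\lambda)^{-1}\btheta^*\|_{\bSigma_0}^2\lesssim \lambda R^2\,\|\bSigma_0^{1/2}(k\bSigma+\lambda)^{-1/2}\|^2$. For the variance, the sub-exponential Bernstein inequality in Hilbert space gives $\frac{\sigma^2}{n}\mathrm{tr}\big(\bSigma_0(k\bSigma+\lambda)^{-1}\big)$.

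Next I would use the definition of $n_{\text{eff}}$: $n\bSigma+\frac{\mu^2}{k}\bI\succeq n_{\text{eff}}\bSigma_0$. With $\rho=n\lambda/n_{\text{eff}}$, this gives $n(k\bSigma+\lambda)\succeq n_{\text{eff}}(k\bSigma_0+\rho)$ whenever $\lambda\ge\mu^2/n$. Substituting turns the variance into $\frac{\sigma^2}{n_{\text{eff}}}\sum_j\frac{\mu_{0j}}{k\mu_{0j}+\rho}$ and the bias into $\rho R^2$. The grid is geometric over $[\mu^2/n,\mu^2]$, so the minimizing $\rho$ is matched up to a factor $2$. Values of $\rho$ above the grid are trivial, and values below it are absorbed into the $\mu^2/n_{\text{eff}}$ term.

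\textbf{Step 3: imputation error.} The same expansion at $\tilde\lambda\asymp\mu^2\,\mathrm{polylog}/n$ bounds $\|\tilde f-f^*\|^2_{\cQ}$. The bias is $\tilde\lambda R^2\cdot n/n_{\text{eff}}\lesssim R^2\mu^2\,\mathrm{polylog}/n_{\text{eff}}$. The variance is the same trace at the smallest penalty, which is bounded by the infimum term (take $\rho=\tilde\lambda n/n_{\text{eff}}$). The Taylor remainder, which produces the $L^2$ factor, is also $R^2L^2\mu^2/n_{\text{eff}}$ up to logs.

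The main obstacle I expect is the nonlinear remainder with $\tilde\lambda$ this small. It requires showing $\|\tilde f-f^*\|_\infty\le C$, so that Assumption \ref{convexityassumption} applies, with only an $O(1/n)$ penalty. I would first try a localization/fixed-point argument in the $\bSigma_{\tilde\lambda}$-norm, using $\|f\|_\infty\le M_1\sqrt{\sum_j\mu_j}\cdots$-type embeddings. Uniform boundedness of the eigenfunctions converts the $\bSigma_\lambda$-norm into sup-norm at the price of the polylog factors, which is why the powers $\log^7 n$ and $M_1^6$ appear. A union bound over all events at level $\delta/(\log n)$ then finishes the proof.
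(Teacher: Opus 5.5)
\textbf{There is a genuine gap: the imputer's variance enters your bound through its full $L^2$ error.}

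Your Step~1 is valid but too lossy, and Step~3 does not close the gap it opens. Bounding the cross term by Cauchy--Schwarz against $\|\tilde f-f^*\|_{\hat{\mathcal Q}}$ makes the selection overhead the full squared $L^2$ error of the imputer, variance included.

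You then assert that this variance is bounded by the infimum term ``by taking $\rho=\rho_{\min}$''. Here the variance is $V(\rho_{\min})=\frac{\sigma^2}{n_{\mathrm{eff}}}\sum_j\frac{\mu_{0j}}{k\mu_{0j}+\rho_{\min}}$ with $\rho_{\min}=\tilde\lambda n/n_{\mathrm{eff}}$. That inequality runs the wrong way. $V$ is decreasing in $\rho$, so plugging in $\rho_{\min}$ only shows $\inf_\rho\{\rho R^2+V(\rho)\}\le\rho_{\min}R^2+V(\rho_{\min})$. In fact $V(\rho_{\min})$ is the largest variance on the relevant range.

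Concretely, $\tilde\lambda\asymp\mu^2\,\mathrm{polylog}/n$ gives $\rho_{\min}\asymp\mu^2\,\mathrm{polylog}/n_{\mathrm{eff}}$. Under $\mu_{0j}\asymp j^{-2\alpha}$ this yields $V(\rho_{\min})\asymp n_{\mathrm{eff}}^{-1+1/(2\alpha)}$ up to logs. That dominates both the infimum, of order $n_{\mathrm{eff}}^{-2\alpha/(2\alpha+1)}$, and the $1/n_{\mathrm{eff}}$ overhead. For $\alpha=1$ it is $n_{\mathrm{eff}}^{-1/2}$ against $n_{\mathrm{eff}}^{-2/3}$.

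So with the prescribed undersmoothed $\tilde\lambda$, your argument proves a strictly weaker bound than the theorem. Repairing it inside your framework would mean tuning $\tilde\lambda$ to the unknown $n_{\mathrm{eff}}$ and target spectrum. That is exactly the selection problem the algorithm is meant to solve.

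\textbf{The missing idea.} The imputer's variance should enter only through a scalar sub-exponential parameter, never through its $\ell_2$ norm. Decompose $\mathbf X_0(\tilde{\boldsymbol\theta}-\boldsymbol\theta^*)=\mathbf B+\mathbf V$. Here $\mathbf V=\frac1{n_2}\mathbf X_0\bar{\mathbf H}_2^{-1}\mathbf X_2^\top\boldsymbol\varepsilon_2$ is linear in the noise $\boldsymbol\varepsilon_2$ of $\mathcal D_2$, with $\bar{\mathbf H}_2$ the ridge Hessian at $\boldsymbol\theta^*$. The term $\mathbf B$ collects the ridge bias and the nonlinear remainder.

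Conditionally on all covariates and on $\mathcal D_1$, the $m$ candidate evaluation vectors $\mathbf f_j$ are fixed. Hence each $\langle\mathbf f_j-\mathbf f_{j'},\mathbf V\rangle$ is sub-exponential with parameter $\tilde\sigma\|\mathbf f_j-\mathbf f_{j'}\|_2$, where $\tilde\sigma=\|\mathbf V\|_{\psi_1}$. A union bound over the grid then costs only $\log(m/\delta)$.

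The key quantity is $\tilde\sigma^2\lesssim\frac{n_0\sigma^2}{kn_2}\|\hat{\mathbf S}_{2,\tilde\lambda}\|$, with $\hat{\mathbf S}_{2,\tilde\lambda}=(k\hat{\boldsymbol\Sigma}_2+\tilde\lambda\mathbf I)^{-1/2}\hat{\boldsymbol\Sigma}_0(k\hat{\boldsymbol\Sigma}_2+\tilde\lambda\mathbf I)^{-1/2}$. This is an operator norm, not a trace. Moreover $\|\hat{\mathbf S}_{2,\tilde\lambda}\|\lesssim n/n_{\mathrm{eff}}+n/n_0$ up to logs, even at $\tilde\lambda\asymp\mu^2\,\mathrm{polylog}/n$. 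This is why undersmoothing is harmless and the overhead is $1/n_{\mathrm{eff}}+1/n_0$. Only $\|\mathbf B\|_2^2$, at most $n_0\tilde\lambda R^2\|\hat{\mathbf S}_{2,\tilde\lambda}\|$ plus the remainder, is paid in $\ell_2$.

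To exploit this you must also decouple the mean-value weights $a''(q_i)$, which depend on $\tilde f$ and hence on $\mathbf V$. Replace them by $a''(f^*(\mathbf x_{0i}))$ and pay $L\eta\|\mathbf V\|_2$. This produces the $L^2n_0\eta^4/k$ term. It therefore requires a genuine sup-norm rate $\eta\lesssim n^{-1/4}$ for $\tilde f$ on the target points (the paper's pointwise-rate theorem), not merely $\|\tilde f-f^*\|_\infty\le C$.

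Your Step~2 and the grid-to-continuum bridging are otherwise in line with the paper.
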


The proof is deferred to \cref{proof of main theorem}.

\begin{remark}[Interpretation of Effective Sample Size]
The quantity $n_{\text{eff}}$ defined in \cref{effective sample size} serves as a data-dependent measure of the covariate shift. It quantifies the extent to which the source covariance $\bSigma$ ``covers'' the target covariance $\bSigma_0$. Crucially, this alignment is not merely a check of input support, but is measured through the \textit{geometry of the kernel}: the covariance inequality dictates how well the source data captures variance along the principal directions of the RKHS relative to the target distribution.
 
\begin{itemize}
    \item  If the source dominates the target (i.e., $\bSigma_0 \preceq B \bSigma$), we have $n_{\text{eff}} \ge n/B$, recovering standard fast rates.
    \item If the target $\bSigma_0$ has high energy in eigen-directions where the source $\bSigma$ decays rapidly (a mismatch in kernel geometry), $n_{\text{eff}} \ll n$, reflecting the difficulty of adaptation. For instance, consider the first-order Sobolev space (\cref{ex: sobolev kernel}) with $\cP = \cU[0,1]$ and $\cQ = \delta_{0.5}$. It is proved in Section B.2 of \citet{wang2023pseudo} that $n_{\text{eff}} \asymp \sqrt{n}$.  
\end{itemize}
\end{remark}

The infimum over $\rho$ in \eqref{eq: main theorem} captures the optimal bias-variance tradeoff for KRGLM on $n_{\mathrm{eff}}$ labeled target samples: the term $\rho R^2$ is the regularization bias, which increases with $\rho$, while $\frac{\sigma^2}{n_{\mathrm{eff}}}\sum_{j=1}^{\infty}\frac{\mu_{0j}}{k\mu_{0j}+\rho}$ is the variance, which decreases with $\rho$ and reflects how well the target eigendirections are captured given $n_{\mathrm{eff}}$ effective samples. The infimum over $\rho$ selects the regularization level that optimally balances these two competing terms, yielding the same rate as if one had direct access to $n_{\mathrm{eff}}$ labeled target samples. The second term in \eqref{eq: main theorem} reflects the extra error incurred by the imperfect pseudo-labels. The following corollary shows that this overhead is negligible compared to the first term in our non-parametric setting.

\begin{corollary}[Convergence Rate with Effective Sample Size]
\label{cor:poly_decay_neff}
Given the polynomial spectral decay assumption $\mu_{0j} \le A j^{-2\alpha}$, and provided $n$ and $n_0/\log n$ are sufficiently large, the excess risk satisfies with probability at least $1 - n^{-1}$:
\begin{align}
    R(\hat f) - R(f^*) \lesssim n_{\text{eff}}^{-\frac{2\alpha}{2\alpha + 1}} + n_0^{-1}.
\end{align}
\end{corollary}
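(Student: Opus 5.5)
The corollary should follow from \Cref{main theorem} by choosing $\delta = n^{-1}$ and then evaluating the infimum over $\rho$ under the polynomial decay $\mu_{0j}\le A' j^{-2\alpha}$. With this $\delta$, the factor $\zeta$ is polylogarithmic in $(n,n_0)$ times a constant polynomial in $K/k$. We read the symbol $\lesssim$ in the corollary as absorbing these polylogarithmic factors, in the $\widetilde O$ convention of the Notations paragraph. All model constants ($R,\resv,\mu,k,K,L$) are treated as fixed. The assumption that $n_0/\log n$ is large ensures $\delta=1/n\le 1/e$ is admissible and that the $\log(n_0/\delta)$ inside $\tilde\lambda$ is harmless.

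The main step is bounding the effective dimension $d(\rho):=\sum_{j\ge1}\frac{\mu_{0j}}{k\mu_{0j}+\rho}$. Split the sum at $J=\lceil (A'/\rho)^{1/(2\alpha)}\rceil$.
\begin{itemize}
\item For $j\le J$, bound each term by $1/k$, which contributes at most $J/k$.
\item For $j>J$, bound each term by $\mu_{0j}/\rho\le A' j^{-2\alpha}/\rho$. Since $\alpha\ge1$, the tail is summable and contributes $\lesssim A' J^{1-2\alpha}/\rho \asymp \rho^{-1/(2\alpha)}$.
\end{itemize}
Hence $d(\rho)\lesssim \rho^{-1/(2\alpha)}$ for $\rho\le 1$.

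The first term of \eqref{eq: main theorem} is then at most a constant times
\[
\rho + n_{\text{eff}}^{-1}\rho^{-1/(2\alpha)}.
\]
Take $\rho = n_{\text{eff}}^{-2\alpha/(2\alpha+1)}$, which is below $1$ once $n_{\text{eff}}\ge1$. Both terms then become $n_{\text{eff}}^{-2\alpha/(2\alpha+1)}$.

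For the second term of \eqref{eq: main theorem}, note that $1/n_{\text{eff}}\le n_{\text{eff}}^{-2\alpha/(2\alpha+1)}$ because $n_{\text{eff}}\ge1$. The $1/n_0$ part stays as it is. Summing the two terms gives the claimed bound.

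One point needs checking: that $n_{\text{eff}}\ge 1$, so that the supremum in \eqref{effective sample size} is nondegenerate. By the Remark following \Cref{kernel assumptions}, $\bSigma_0\preceq M_2^2\bI$. So $t=k\mu^2/M_2^2$ is feasible, giving $n_{\text{eff}}\gtrsim 1$, at least a positive constant, which suffices for the comparisons above up to constants.

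The main obstacle is purely bookkeeping: making sure every factor hidden in $\zeta$ is only polylogarithmic, and that the condition on $n_0/\log n$ is exactly what is needed. In particular, with $\delta=1/n$, $\log(n_0/\delta)=\log n_0+\log n$ must stay polylogarithmic. I would first try to state explicitly that these logs are absorbed into $\lesssim$. If a log-free statement were intended, the argument above would still give the rate up to $\operatorname{polylog}(n,n_0)$ factors, and no sharper conclusion can be extracted from \Cref{main theorem} as stated.
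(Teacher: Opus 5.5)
Your argument is correct and is essentially the paper's proof. The paper likewise bounds $\frac{\mu_{0j}}{k\mu_{0j}+\rho}\le \frac1k\min\{1,k\mu_{0j}/\rho\}$, splits the sum at $J\asymp\rho^{-1/(2\alpha)}$, and takes $\rho=n_{\text{eff}}^{-2\alpha/(2\alpha+1)}$; it leaves implicit the choice $\delta=n^{-1}$, the absorption of $1/n_{\text{eff}}$, and the lower bound on $n_{\text{eff}}$, all of which you make explicit.

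One small slip: the feasible witness in \eqref{effective sample size} is $t=\mu^2/(kM_2^2)$, since you need $tM_2^2\le \mu^2/k$, not $t=k\mu^2/M_2^2$. This does not affect your conclusion that $n_{\text{eff}}$ is bounded below by a positive constant.
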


The $\lesssim$ only hides polylogarithmic factors in $n$ and $n_0$, and constants depending on the GLM parameters. The proof is deferred to \cref{proof of main coro}. 

\begin{remark}[Optimality and adaptivity]
The bound established in \cref{cor:poly_decay_neff} is sharp. It is easily seen in the bounded density ratio regime $\mathrm{d}\cQ / \mathrm{d}\cP \le B$ for some constant $B \ge 1$, implying $n_{\text{eff}} \ge n/B$. In this setting, we recover the rate $(n/B)^{-\frac{2\alpha}{2\alpha+1}}$, which matches the minimax lower bound for KRR with square loss derived in Theorem~2 of \citet{ma_optimally_2023}. While their lower bound holds for estimators with full knowledge of $\cQ$ and $B$, our Algorithm~\ref{alg:pseudo-krr} achieves this rate without any prior information on the target distribution $\cQ$ or the shift magnitude $B$, demonstrating its adaptivity.
\end{remark}

\section{Proof Sketch}\label{sec: proof sketch}

Our main result hinges on two ingredients.
\Cref{generic oracle inequality} establishes an in-sample oracle inequality for model
selection via pseudo-labels under the GLM loss, characterizing how the
imperfection of the pseudo-labels propagates into the excess risk of the
selected model. This result is
generic and independent of the RKHS structure. \Cref{generic GLM estimator}
then shows how, in our kernel GLM setting, each term of this oracle inequality
--- the imputation error decomposition and the candidate excess risk bounds
--- can be controlled via a Taylor expansion of the regularized empirical
loss's gradient around $f^*$, yielding the pointwise convergence rate of
\cref{consistency rate} as a key intermediate result. \Cref{sec: putting things
together} explains how these two ingredients are combined to yield
\cref{main theorem}.
\subsection{Generic model selection analysis with GLM loss}\label{generic oracle inequality}

At the core of our analysis, we need to understand how the pseudo-labeler impacts the model selection. We consider the task of selecting an optimal candidate from a set of $m$ deterministic models $f_1, \dots, f_m$ to approximate an unknown regression function $f^*$, whose conditional mean is characterized through the canonical link $\logpar'(f^*(\bx))$. We work under a fixed design of unlabeled samples $\{\bx_i\}_{i=1}^n$: by treating the candidate models and design points as deterministic and the imputer $\tilde{f}$ as the sole source of randomness, we isolate the effect of pseudo-label errors from the finite-sample fluctuations of the target covariates, which are handled separately as described in \cref{sec: putting things together}.

We decompose the pseudo-labels as 
\begin{equation}
    \tilde{f}(\bx_i):= f^*(\bx_i) + \biasi + \varii,
\end{equation} 
where $\vari = (\varii)_i$ is a centered random vector and $\bias=(\biasi)_i$ is a deterministic term. They represent the random error and bias of the imputation model. 

Recall that the in-sample risk in a GLM framework is given by the negative log-likelihood:

\begin{equation}
    R_{in}(f) = \frac{1}{n} \sum_{i=1}^n (\logpar(f(\bx_i)) - \logpar'(f^*(\bx_i)) f(\bx_i)),
\end{equation}
and the in-sample excess risk is thus given by
\begin{align}\label{eq: in-sample risk}
\cR_{in}(f)=\frac{1}{n} \sum_{i=1}^n\ \Big[\logpar(f(\bx_i))-\logpar(f^*(\bx_i))-\logpar'(f^*(\bx_i))[f(\bx_i)-f^*(\bx_i)]
\Big].
\end{align}

Since the true function $f^*$ is unknown, we will use as in \cref{eq: risk proxy} a proxy for the in-sample excess risk, a pseudo excess risk for model selection purposes:
\begin{align}
\widehat{\cR}_{in}(f) = \frac{1}{n} \sum_{i=1}^n\ \Big[\logpar(f(\bx_i))- 
\logpar(\tilde f(\bx_i)) -\logpar'(\tilde f(\bx_i))[f(\bx_i)-\tilde f(\bx_i)]
\Big].
\end{align}
The pseudo-labeling strategy is thus to choose
\begin{equation}
    \hat{j}=\argmin_{j\in[m]}\hat{\cR}_{in}(f_j),
\end{equation}
and we denote the selected model $\hat{f} = f_{\hat{j}}$.
\bigskip

We will only impose some regularity conditions, that are verified by the covariate shift setting of the previous section, as shown in the appendix.

\begin{assumption}\label{assumptions oracle}
We make the following assumptions:
\begin{enumerate}
    \item The image of covariates by the true model are bounded almost surely, i.e. there exists a constant $C>0$ such that $|f^*(\bx_i)|\le C$, $\forall i$.

    \item $\logpar$ is twice differentiable, locally strongly convex and $\logpar''$ is locally upper bounded and Lipschitz, i.e. there exist finite constants $k>0, K>0, L >0$ such that $\forall x, y \in [-2C, 2C]$, $K \ge \logpar''(x) \ge k$ and $|\logpar''(x)-\logpar''(y)|\le L|x-y|$.

    \item With probability at least $1-\delta/2$, $\max_{i\in [n]}|\tilde{f}(\bx_i) - f^*(\bx_i)| \le \eta$, for a constant $\eta < \min\{k / L, C, \sqrt{kK}/L\}$.

    \item The imputation random error $\vari = (\varepsilon_1,...,\varepsilon_n)^\top$ is a zero-mean sub-exponential random vector: $\exists \variv>0$, $\norm{\vari}_{\psi_1} \le \variv$.

\end{enumerate}
\end{assumption}

\bigskip

Now the objective is to bound the excess risk of the selected model $\hat{f}$, relative to the excess risk of the model selected by an oracle that knows the true function $f^*$. We denote by $f_{j^*}$ this oracle model, with
$$j^* = \argmin_{j\in[m]} \cR_{in}(f_j).$$ 

The following theorem is an oracle inequality for the in-sample excess risk of the selected model:
\begin{restatable}[Oracle inequality]{theorem}{oraclenonapprox}\label{oracle general non approx}
Let $\delta \in (0,1)$. Then there exists a universal constant $C_1>0$ such that under \cref{assumptions oracle}, the following holds \wpd:
\begin{equation}\label{eq: oracle inequality}
\begin{split}
  \cR_{in}(\hat f) \le \inf_{\epsilon>0} \left \{(1+\epsilon)\cR_{in}(f_{j^*}) + \frac{C_1K}{nk}\paren{1+\frac{1}{\epsilon}}\xi(\variv, \bias, \eta, \delta) \right \},  
\end{split}
\end{equation}
with 
$$\xi(\variv, \bias, \eta, \delta) = K\variv^2\log^2(\frac{m}{\delta}) + K\norm{\bias}^2+ \frac{L^2n\eta^4}{k}.$$

\end{restatable}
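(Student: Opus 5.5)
The plan is to compare the pseudo excess risk with the true excess risk. Their difference turns out to be, up to terms that do not depend on $f$, a linear functional of $f$. Optimality of $\hat j$ then yields a basic inequality with a single cross term. That cross term is controlled by Taylor expansion, a sub-exponential union bound, and the local strong convexity of $\logpar$.

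\textbf{Step 1 (basic inequality).} Expanding the definitions of $\widehat{\cR}_{in}$ and $\cR_{in}$ gives, for every $f$,
\[
\widehat{\cR}_{in}(f)-\cR_{in}(f) = c(\tilde f, f^*) + \frac1n\sum_{i=1}^n \big(\logpar'(f^*(\bx_i))-\logpar'(\tilde f(\bx_i))\big) f(\bx_i),
\]
where $c(\tilde f,f^*)$ does not depend on $f$. Since $\widehat{\cR}_{in}(\hat f)\le \widehat{\cR}_{in}(f_{j^*})$, we obtain
\[
\cR_{in}(\hat f)\le \cR_{in}(f_{j^*}) + \frac1n\sum_{i=1}^n \big(\logpar'(\tilde f(\bx_i))-\logpar'(f^*(\bx_i))\big)\Delta_i ,
\]
with $\Delta_i=\hat f(\bx_i)-f_{j^*}(\bx_i)$.

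\textbf{Step 2 (linearizing the pseudo-label error).} Work on the event of \cref{assumptions oracle}(3), which has probability at least $1-\delta/2$. There $|\tilde f(\bx_i)-f^*(\bx_i)|\le\eta<C$, so $\tilde f(\bx_i)\in[-2C,2C]$. A second-order Taylor expansion then gives
\[
\logpar'(\tilde f(\bx_i))-\logpar'(f^*(\bx_i))=\logpar''(f^*(\bx_i))(\biasi+\varii)+r_i,\qquad |r_i|\le L\eta^2/2 .
\]
The cross term therefore splits into a bias part, a noise part and a remainder.
\begin{itemize}
\item The bias part is at most $K\|\bias\|\,\|\Delta\|/n$ by Cauchy--Schwarz.
\item The remainder is at most $L\eta^2\|\Delta\|/(2\sqrt n)$.
\item For the noise part, each fixed $j$ gives a linear form in $\vari$ with weights $\logpar''(f^*(\bx_i))(f_j-f_{j^*})(\bx_i)$. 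Since the $f_j$ are deterministic, this form is sub-exponential with $\psi_1$-norm at most $K\variv\|f_j-f_{j^*}\|$. A $\psi_1$ tail bound plus a union bound over the $m$ candidates then shows that, with probability at least $1-\delta/2$, it is at most $C_0K\variv\log(m/\delta)\|f_j-f_{j^*}\|$ simultaneously for all $j$, in particular for the random $\hat j$.
\end{itemize}
Here $\|\cdot\|$ denotes the Euclidean norm of the vector of values at the design points.

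\textbf{Step 3 (closing via strong convexity).} Apply the triangle inequality $\|\Delta\|\le\|\hat f-f^*\|+\|f_{j^*}-f^*\|$. Together with the quadratic lower bound $\cR_{in}(f)\ge \frac{k}{2n}\|f-f^*\|^2$, this gives
\[
\frac{\|\Delta\|}{\sqrt n}\lesssim k^{-1/2}\Big(\sqrt{\cR_{in}(\hat f)}+\sqrt{\cR_{in}(f_{j^*})}\Big).
\]
Each product $ab$ is then bounded by $\epsilon' a^2+b^2/(4\epsilon')$. The cross term becomes at most
\[
\epsilon'\big(\cR_{in}(\hat f)+\cR_{in}(f_{j^*})\big)+\frac{C}{\epsilon' nk}\Big(K^2\variv^2\log^2(m/\delta)+K^2\|\bias\|^2+L^2n\eta^4\Big).
\]
Moving $\epsilon'\cR_{in}(\hat f)$ to the left side and reparametrizing $\epsilon'$ in terms of $\epsilon$ yields the factor $(1+\epsilon)$ and the $(1+1/\epsilon)\,\frac{K}{nk}\,\xi$ term, after absorbing constants and using $K/k\ge1$.

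\textbf{Main obstacle.} The delicate point is the quadratic lower bound in Step 3. It relies on $\logpar''\ge k$ along the segment from $f^*(\bx_i)$ to $f(\bx_i)$. This holds only if the candidate values lie in $[-2C,2C]$, which is not assumed for arbitrary $f_j$.

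I would first try a Huber-type lower bound. By convexity, the Bregman divergence at coordinates with $|f(\bx_i)-f^*(\bx_i)|>C$ grows at least linearly, like $kC|f(\bx_i)-f^*(\bx_i)|/2$. The cross-term weights are uniformly bounded by $K\eta+L\eta^2$ (plus the noise coordinates), so on those coordinates the cross term is dominated by a fraction of the excess risk. This is where the conditions $\eta<k/L$ and $\eta<\sqrt{kK}/L$ would enter. If that becomes too messy, I would instead verify in the application that all candidates stay in $[-2C,2C]$ on the target points, via the pointwise rate of \cref{consistency rate}, and state the reduction explicitly.
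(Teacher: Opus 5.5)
Your argument follows the paper's proof. Step~1 is the paper's Bregman three-point identity, combined with minimality of $\hat j$ and used with $j=j^*$. The rest has the same structure: linearize $\logpar'(\tilde f(\bx_i))-\logpar'(f^*(\bx_i))$, apply a $\psi_1$ bound with a union bound over the $m$ deterministic candidates, and convert $\|\Delta\|$ into $\sqrt{\cR_{in}(\hat f)}+\sqrt{\cR_{in}(f_{j^*})}$ by local strong convexity.

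Your linearization is a bit cleaner than the paper's. The paper uses mean-value weights $\logpar''(q_i)$, splits $\widetilde\bD=\overline\bD+(\widetilde\bD-\overline\bD)$, and bounds $\|\vari\|_2^2\le 2(n\eta^2+\|\bias\|^2)$. That is where it uses $\eta<k/L$ and $\eta<\sqrt{kK}/L$. Expanding around $\logpar''(f^*(\bx_i))$ with $|r_i|\le L\eta^2/2$ makes the noise weights deterministic at once and needs only $\eta<C$.

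For the closing step, write the basic inequality as $\cR_{in}(\hat f)\le\cR_{in}(f_{j^*})+2\tilde\xi\,\bigl(\sqrt{\cR_{in}(\hat f)}+\sqrt{\cR_{in}(f_{j^*})}\bigr)$. The paper completes the square to get $\sqrt{\cR_{in}(\hat f)}\le\sqrt{\cR_{in}(f_{j^*})}+2\tilde\xi$. This is the form reused in \cref{lemma in sample oracle}, and it gives the $(1+\epsilon)$, $(1+1/\epsilon)$ bound for every $\epsilon>0$. Your substitution $\epsilon'=\epsilon/(2+\epsilon)$ produces the factor $(2+\epsilon)^2/(4\epsilon)$, which is $\lesssim 1+1/\epsilon$ only for $\epsilon\lesssim 1$. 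This is harmless: for $\epsilon>1$, use the bound at $\epsilon=1$.

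The obstacle you flag is genuine, and the paper's proof does not resolve it either. It bounds $\frac{1}{\sqrt n}\|\bm{f}_{\hat j}-\bm{f}_j\|$ by a multiple of $\sqrt{\cR_{in}(f_{\hat j})}+\sqrt{\cR_{in}(f_j)}$ via \cref{in-sample excess risk bound}. That step tacitly requires the candidate values to lie where $\logpar''\ge k$. Your Huber-type repair cannot close this, because without such a hypothesis the statement is false. For the logistic loss, once a candidate leaves $[-2C,2C]$, both $\cR_{in}$ and $\widehat{\cR}_{in}$ grow only linearly, with slopes that differ by a relative amount of order $\eta$. So a leading constant $1+\epsilon$ with $\epsilon\to0$ is impossible, and the conditions $\eta<k/L$, $\eta<\sqrt{kK}/L$ do not help.

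Here is a concrete counterexample.
Take $C=1$, so that $k=\logpar''(2)\approx0.105$, $K=1/4$ and $L=0.1$ are admissible. Use a single design point (or $n$ identical ones) with $f^*(\bx_1)=0$. Take a deterministic imputer $\tilde f(\bx_1)=\eta=1/2$, so $\vari=0$, $\bias=1/2$, and $\eta<\min\{k/L,C,\sqrt{kK}/L\}$. Take candidates $f_1(\bx_1)=M$ and $f_2(\bx_1)=-0.8M$. As $M\to\infty$:
\begin{itemize}
\item the true risks are $\cR_{in}(f_1)=M/2+O(1)$ and $\cR_{in}(f_2)=0.4M+O(1)$, so $j^*=2$;
\item the pseudo risks are $\widehat{\cR}_{in}(f_1)=(1-\logpar'(1/2))M+O(1)\approx0.38M$ and $\widehat{\cR}_{in}(f_2)=0.8\,\logpar'(1/2)M+O(1)\approx0.50M$, using $\logpar'(1/2)\approx0.62$; so $\hat j=1$.
\end{itemize}
Hence $\cR_{in}(\hat f)=\frac54\cR_{in}(f_{j^*})+O(1)$. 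But $\xi$ does not depend on $M$, so the right-hand side of the theorem is $\cR_{in}(f_{j^*})+O(\sqrt M)$, which is smaller for large $M$.

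So take your fallback. Add to \cref{assumptions oracle} that $|f_j(\bx_i)|\le 2C$ for all $i\in[n]$ and $j\in[m]$. Then $\logpar''\ge k$ on each segment between $f^*(\bx_i)$ and $f_j(\bx_i)$, and your Steps~1--3 prove the theorem. When the result is applied, verify this hypothesis for the KRGLM candidates on the target points, e.g.\ through \cref{consistency rate} for $\lambda$ in its admissible range.
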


The proof is deferred to Appendix \ref{poracle}. It proceeds by applying the law of cosines for Bregman divergences, which decomposes the true excess risk of any candidate into its pseudo excess risk plus a cross term involving the imputation error $\tilde{f} - f^*$, controlled by the local curvature of $a$ from \cref{assumptions oracle}. The minimality of the selected model's pseudo risk then transfers to the true risk up to this cross term --- a strategy that mirrors the square loss case, but where the Bregman
structure of the GLM loss requires additional care to obtain a sharp bound.

\medskip
The additive overhead in the oracle inequality \eqref{eq: oracle inequality} corresponds to the error of the pseudo-labels. It comprises a bias term proportional to $\|\bias\|^2/n$, a variance term proportional to $\variv^2\log^2(m)/n$, and a perturbation term proportional to $L^2\eta^4$, which arises from the nonlinearity of the GLM loss and is shown to be negligible compared to the other two terms in \cref{sec: putting things together}. Note that in the case of the square loss, $L=0$, so the perturbation term vanishes and the third item in \cref{assumptions oracle} is no longer necessary. The key asymmetry between the bias and variance terms is that $\variv$ is a scalar sub-exponential proxy for the noise of the imputer, whereas $\|\bias\|^2 = \sum_{i=1}^n B_i^2$ accumulates the squared pointwise biases over all $n$ design points. Consequently, as long as $\log^2(m)/n$ is small, even a moderate pointwise bias of the imputation model can dominate the variance contribution, making bias the critical quantity to control.

Together, that answers our bias-variance trade-off for the imputation model: we should prioritize an \emph{undersmoothed} imputation model, i.e. with small bias, in order for the pseudo-labels to select a more faithful model.
In the case of kernel GLM, the imputation model should thus be trained with a small regularization parameter.

\subsection{Analysis of the KRGLM estimator}\label{generic GLM estimator}

We now show how the oracle inequality of \cref{oracle general non approx} can be applied in our kernel GLM setting by bounding each of its
terms --- the imputation error decomposition and the candidate excess risk
bounds --- via a Taylor expansion of the regularized
empirical loss's gradient. 

For any positive $\lambda$, the estimator $\hat{f}_\lambda$ of \cref{regularized glm bis} is only defined implicitly as the minimizer of a nonlinear objective, unlike the closed-form solution available in kernel ridge regression. For ease of presentation, we derive the following expansion in the finite-dimensional setting with $\cX \subset \RR^d$ and the linear kernel, writing $\bX = (\bx_1, \dots, \bx_n)^\top \in \RR^{n\times d}$ for the source design matrix, $\bvarepsilon = (\varepsilon_1, \dots, \varepsilon_n)^\top$ for the noise vector, and $f_{\btheta}(\bx) = \btheta^\top \bx$. The KRGLM minimization problem \cref{regularized glm bis} becomes
\begin{equation}\label{eq: loss theta}
    \hat{\btheta}_\lambda \in \arg\min_{\btheta \in \RR^d} \left\{ \frac{1}{n} \sum_{i=1}^n \left( \logpar(\bx_i^\top \btheta) - y_i \bx_i^\top \btheta \right) + \frac{\lambda}{2}\|\btheta\|^2 \right\}.
\end{equation}
Setting the gradient of the associated loss to zero at $\hat{\btheta}_\lambda$ and Taylor-expanding around $\btheta^*$ yields:
\begin{equation}\label{eq: taylor expansion}
    \hat{\btheta}_\lambda - \btheta^* = -\bH^{-1}\!\left(\frac{1}{n}\bX^\top \bvarepsilon + \lambda \btheta^*\right),
\end{equation}
where the data-dependent Hessian is
\begin{equation}\label{eq: hessian}
     \bH = \frac{1}{n}\sum_{i=1}^n D_{ii}\, \bx_i\bx_i^\top + \lambda \bI, \qquad D_{ii} = \int_0^1 \logpar''(\bx_i^\top(\btheta^* + t(\hat{\btheta}_\lambda - \btheta^*)))\,dt.
\end{equation}
To isolate the dependence on $\hat{\btheta}_\lambda$, we introduce the deterministic Hessian $\bar{\bH} = \frac{1}{n}\sum_{i=1}^n \logpar''(\bx_i^\top\btheta^*)\, \bx_i\bx_i^\top + \lambda \bI$, obtained by evaluating $\logpar''$ at $\btheta^*$ rather than along the path. Splitting \eqref{eq: taylor expansion} accordingly gives the decomposition:
\begin{equation}\label{eq: bvp decomposition}
    \hat{\btheta}_\lambda - \btheta^* = - \bigg[\underbrace{\lambda\bH^{-1}\btheta^*}_{\text{bias}} + \underbrace{\frac{1}{n}\bar{\bH}^{-1}\bX^\top \bvarepsilon}_{\text{variance}} + \underbrace{\frac{1}{n}(\bH^{-1}-\bar{\bH}^{-1})\bX^\top \bvarepsilon}_{\text{perturbation}} \bigg].
\end{equation}
An equivalent decomposition holds in the general setting via the isometry between the RKHS $\mathcal{F}$ and the feature space $\mathbb{H}$ \cref{eq: rkhs isometry}, with $\bX$ interpreted as a bounded linear operator; see \cref{subsection notations}.

\medskip
Recall that our \cref{alg:pseudo-krr} trains a grid
$\{\hat{f}_\lambda\}_{\lambda \in \Lambda}$ of candidate models on the first half $\cD_1$ of the source data and an imputation model $\tilde{f} = \hat{f}_{\tilde\lambda}$ on the second half $\cD_2$, which generates pseudo-labels $\tilde{f}(\bx_{0i})$ for each target covariate in $\cD_0$. To bound the terms entering the oracle inequality of \cref{oracle general non approx} applied to $\cD_0$, we project decomposition \cref{eq: bvp decomposition} onto the target second-moment operator $\bSigma_0$ for the candidate excess risk $R(\hat{f}_\lambda) - R(f^*)$, and onto the target design points $\{\bx_{0i}\}$ for the imputation error $\{\tilde{f}(\bx_{0i}) - f^*(\bx_{0i})\}_{i=1}^{n_0}$. The main difficulty is analyzing the perturbation term, since $\bH$ depends on $\hat{\btheta}_\lambda$ itself and is correlated with the noise. We resolve this in two steps. First, a localization argument (\cref{crude bound}) establishes, using only the global strict convexity of $\logpar$, that $\hat{\btheta}_\lambda$ lies in a neighborhood of $\btheta^*$ with high probability. Once in this neighborhood, the local properties of $\logpar''$ from \cref{convexityassumption} --- strong convexity, boundedness, and Lipschitz continuity --- all become available. In particular, the Lipschitz continuity ensures that the relative operator error between $\bH$ and $\bar{\bH}$ is controlled by the pointwise deviations $|\hat{f}_\lambda(\bx_i) - f^*(\bx_i)|$, reducing the perturbation analysis to a pointwise consistency problem. The following theorem provides the required bound.

\begin{restatable}[Pointwise convergence rate]{theorem}{consistency}\label{consistency rate}
    Consider a ridge-regularized kernel GLM estimator $\hat{f}_\lambda$ trained on $n$ i.i.d.\ samples from the source distribution $\cP$ via \cref{regularized glm bis}. Let $\beta \in (0,8)$, and choose $\delta \in (0, 1/e]$. Then, under \Cref{setup,kernel assumptions,subexponentialassumption,convexityassumption,trueparam}, there exist constants $c, c', c_1, c_2 >0$, such that for any given $\bx_0 \in \supp(\cP)$, when:
    \[
  \begin{cases}
    \lambda \le \frac{c_1}{R^4\log^{\beta}(n)} \\
    \lambda  \ge \frac{c_2M_1^2\resv^{2}\log(\frac{8}{\delta})(\log n)^{2+\beta/2}}{n}
  \end{cases},
\]
    the following inequality holds \wpd:
    \begin{equation}\label{eq: glm rate bias variance}
        \abs{\hat f_\lambda(\bx_0)-f^*(\bx_0)} \le cM_1 \left(\resv\log(\frac{n}{\delta})\frac{\lambda^{-1/4}}{\sqrt{n}} + \lambda^{1/4}\norm{f^*}_{\cF}\right).
    \end{equation}

When setting $\lambda$ to its lower bound, the result becomes
\begin{equation}\label{eq: glm rate}
        \abs{\hat f_\lambda(\bx_0)-f^*(\bx_0)} \le c'\sqrt{M_1^3\sigma} n^{-1/4}\paren{\log (\frac{n}{\delta})}^3.
    \end{equation}
\end{restatable}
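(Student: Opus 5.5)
The plan follows the roadmap of \cref{generic GLM estimator}: localize $\hat f_\lambda$ near $f^*$, write the first-order condition, and split $\hat f_\lambda(\bx_0)-f^*(\bx_0)=\langle \Phi(\bx_0),\hat{\btheta}_\lambda-\btheta^*\rangle$ into bias, variance and perturbation terms via \eqref{eq: bvp decomposition}, read in the feature space $\mathbb{H}$. Write $\bar{\bH}=\hat{\bSigma}_{D^*}+\lambda\bI$, where $\hat{\bSigma}_{D^*}=\frac1n\sum_i \logpar''(f^*(\bx_i))\Phi(\bx_i)\otimes\Phi(\bx_i)$, and write $\bH$ for the path-integrated Hessian.

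\textbf{Step 1 (localization).} First I would show, using only the convexity of $\logpar$ and strong convexity of the ridge objective, that $\|\hat{\btheta}_\lambda-\btheta^*\|_{\mathbb{H}}\lesssim R$ w.h.p. A convexity argument on the segment toward $\btheta^*$ suffices: the gradient at $\btheta^*$ is $-\frac1n\sum_i\varepsilon_i\Phi(\bx_i)+\lambda\btheta^*$, and the noise part has $\mathbb{H}$-norm $\lesssim \sigma M_2\log(1/\delta)/\sqrt n$ by a sub-exponential Bernstein inequality in Hilbert space. With the lower bound on $\lambda$ this is $\le c\lambda R$, so $\hat{\btheta}_\lambda$ lies in a ball of radius $O(R)$. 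This gives $|\hat f_\lambda(\bx_i)|\le (\rho+1)C$, so all $D_{ii}\in[k_\rho,K_\rho]$. I would then bootstrap to $\|\hat f_\lambda-f^*\|_\infty\le C$ using Step 3, so that the constants $k,K,L$ of \cref{convexityassumption} apply.

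\textbf{Step 2 (bias and variance).} The bias term is $\lambda|\langle\Phi(\bx_0),\bH^{-1}\btheta^*\rangle|\le \lambda\|\bH^{-1/2}\Phi(\bx_0)\|\,\|\bH^{-1/2}\btheta^*\|$. Since $\bH\succeq k\hat{\bSigma}+\lambda\bI$, we have $\|\bH^{-1/2}\btheta^*\|\le R/\sqrt\lambda$. The key quantity is then $\|(k\hat{\bSigma}+\lambda\bI)^{-1/2}\Phi(\bx_0)\|^2$. Using the Mercer expansion and $\|\phi_j\|_\infty\le M_1$, its population version is $\sum_j\mu_j\phi_j(\bx_0)^2/(k\mu_j+\lambda)\le M_1^2\sum_j \mu_j/(k\mu_j+\lambda)\lesssim M_1^2\lambda^{-1/2}$, because $\mu_j\le Aj^{-2}$. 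Replacing $\hat{\bSigma}$ by $\bSigma$ is justified by the concentration $\|(\bSigma+\lambda)^{-1/2}(\hat{\bSigma}-\bSigma)(\bSigma+\lambda)^{-1/2}\|\le 1/2$, which holds once $n\lambda\gtrsim$ polylog (an operator Bernstein bound with the effective dimension $\lambda^{-1/2}$). This gives a bias of $\lambda\cdot M_1\lambda^{-1/4}\cdot R\lambda^{-1/2}=M_1R\lambda^{1/4}$. For the variance, condition on $\bX$. Then $\frac1n\langle\bar{\bH}^{-1}\Phi(\bx_0),\sum_i\varepsilon_i\Phi(\bx_i)\rangle$ is a sub-exponential sum with weight vector norm $\frac1n(\sum_i\langle\bar{\bH}^{-1}\Phi(\bx_0),\Phi(\bx_i)\rangle^2)^{1/2}\le n^{-1/2}\|\bar{\bH}^{-1/2}\Phi(\bx_0)\|/\sqrt k$. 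Bernstein then yields $\sigma\log(1/\delta)M_1\lambda^{-1/4}/\sqrt n$.

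\textbf{Step 3 (perturbation; main obstacle).} Write $\bH^{-1}-\bar{\bH}^{-1}=\bH^{-1}(\bar{\bH}-\bH)\bar{\bH}^{-1}$, where $\bar{\bH}-\bH=\frac1n\sum_i(\logpar''(f^*(\bx_i))-D_{ii})\Phi(\bx_i)\otimes\Phi(\bx_i)$. By Lipschitzness, $\bar{\bH}-\bH$ is dominated in Loewner order by $\frac{L}{2}\max_i|\hat f_\lambda(\bx_i)-f^*(\bx_i)|\,\hat{\bSigma}$, which in turn is at most $\frac{L}{2k}\Delta\,\bar{\bH}$ with $\Delta:=\max_i|\hat f_\lambda(\bx_i)-f^*(\bx_i)|$. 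Hence the perturbation is bounded by $O(L\Delta/k)$ times the variance-type quantity $\|\bar{\bH}^{-1/2}\Phi(\bx_0)\|\,\|\bar{\bH}^{-1/2}\frac1n\sum_i\varepsilon_i\Phi(\bx_i)\|$. The difficulty is circularity: $\Delta$ is the sup-norm error we are trying to bound, and $\bH$ depends on the noise. I would resolve this with a self-bounding argument. Take a union bound of Steps 2–3 over the $n$ design points $\bx_i$ (giving $\log(n/\delta)$), and bound the second norm crudely by $\sigma\sqrt{\lambda^{-1/2}/n}$ times logs. This yields $\Delta\le E+c\,L\,\Delta\cdot E'$ with $E'\ll1$ under the upper bound $\lambda\le c_1/(R^4\log^\beta n)$ and the lower bound on $\lambda$; absorbing gives $\Delta\lesssim E$, and this also closes the bootstrap of Step 1. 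The pointwise bound at $\bx_0$ then follows from the same decomposition, giving \eqref{eq: glm rate bias variance}. Finally, plugging $\lambda\asymp \sigma^2M_1^2\log(8/\delta)\log^{2+\beta/2}n/n$ balances both terms at $n^{-1/4}$ up to logs, giving \eqref{eq: glm rate}.
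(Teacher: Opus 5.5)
Your Steps 2 and 3 are essentially the paper's argument. Both use the bias/variance/perturbation split around the deterministic Hessian $\bar{\mathbf{H}}$, control $\mathbf{H}-\bar{\mathbf{H}}$ through $\frac{L}{2k}\max_i|\hat f_\lambda(\boldsymbol{x}_i)-f^*(\boldsymbol{x}_i)|$, take a union bound over the design points, and absorb the self-referential term. Step 1, however, has a genuine gap, and everything downstream rests on it.

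The inequality $\|\frac{1}{n}\sum_i\varepsilon_i\Phi(\boldsymbol{x}_i)\|_{\mathbb{H}}\lesssim\sigma M_2\log(1/\delta)/\sqrt{n}\le c\lambda R$ requires $\lambda\gtrsim n^{-1/2}$. The theorem allows, and \eqref{eq: glm rate} uses, $\lambda\asymp\mathrm{polylog}(n)/n$. A sharper argument cannot rescue the claim itself. Take the first-order Sobolev kernel under $\mathcal{U}[0,1]$, where $\mu_j\asymp j^{-2}$, with noise variance bounded away from zero (as in the logistic model). The variance component $\bar{\mathbf{H}}^{-1}\frac{1}{n}\mathbf{X}^{\top}\boldsymbol{\varepsilon}$ then has expected squared $\mathbb{H}$-norm of order $n^{-1}\sum_j\mu_j/(\mu_j+\lambda)^2\asymp n^{-1}\lambda^{-3/2}$. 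This is of order $\sqrt{n}$ at $\lambda\asymp 1/n$: the undersmoothed estimator is pointwise accurate but far from $\boldsymbol{\theta}^*$ in RKHS norm.

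The bootstrap ``using Step 3'' cannot replace this step. The expansion with $\mathbf{H}\succeq k\hat{\boldsymbol{\Sigma}}+\lambda\mathbf{I}$ and the Lipschitz bound on $D_{ii}-\bar D_{ii}$ are valid only once $\hat f_\lambda(\boldsymbol{x}_i)$ is known to lie in a compact range. Indeed $a''\to 0$ at infinity for the logistic loss, and $a'''$ is unbounded for Poisson. So your inequality $\Delta\le E+cL\Delta E'$ is available only on $\{\Delta\le\rho C\}$. There it yields just the dichotomy $\Delta\lesssim E$ or $\Delta>\rho C$, and something must exclude the second branch.

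The missing ingredient is the paper's crude localization (\cref{crude bound}). It works directly in sup-norm over the design points and in the geometry of $\mathbf{S}:=k\hat{\boldsymbol{\Sigma}}+\lambda\mathbf{I}$ rather than of $\mathbb{H}$.
\begin{itemize}
\item The set $\Omega(\rho)=\{\boldsymbol{\theta}:\max_i|\langle\Phi(\boldsymbol{x}_i),\boldsymbol{\theta}-\boldsymbol{\theta}^*\rangle|\le\rho C\}$ is convex, contains $\boldsymbol{\theta}^*$, and satisfies $\nabla^2L_\lambda\succeq\mathbf{S}$ on it for $\rho\le 1$.
\item Hence $L_\lambda(\boldsymbol{\theta})>L_\lambda(\boldsymbol{\theta}^*)$ for $\boldsymbol{\theta}\in\Omega(\rho)$ outside the ellipsoid $E=\{\|\mathbf{S}^{1/2}(\boldsymbol{\theta}-\boldsymbol{\theta}^*)\|\le 2\|\mathbf{S}^{-1/2}\nabla L_\lambda(\boldsymbol{\theta}^*)\|\}$.
\item By Cauchy--Schwarz, $E$ lies in the interior of $\Omega(\rho)$ once $2\max_i\|\mathbf{S}^{-1/2}\Phi(\boldsymbol{x}_i)\|\,\|\mathbf{S}^{-1/2}\nabla L_\lambda(\boldsymbol{\theta}^*)\|<\rho C$.
\item Global convexity of $L_\lambda$ along segments from $\boldsymbol{\theta}^*$ then excludes any minimizer outside $\Omega(\rho)$.
\end{itemize}

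Two bounds make this quantitative. The first is the leverage bound $\|\mathbf{S}^{-1/2}\Phi(\boldsymbol{x}_i)\|\lesssim M_1\lambda^{-1/4}$, which you already have in Step 2. The second is $\|\mathbf{S}^{-1/2}\nabla L_\lambda(\boldsymbol{\theta}^*)\|\lesssim R\sqrt{\lambda}+\sigma M_1\sqrt{\log(1/\delta)}\log(n)\lambda^{-1/4}/\sqrt{n}$, from a sub-exponential quadratic-form bound and $\mathrm{Tr}(\mathbf{S}^{-1}\hat{\boldsymbol{\Sigma}})\lesssim M_1^2\lambda^{-1/2}$. Together they give $\max_i|\hat f_\lambda(\boldsymbol{x}_i)-f^*(\boldsymbol{x}_i)|\lesssim M_1R\lambda^{1/4}+\sigma M_1^2\sqrt{\log(1/\delta)}\log(n)/\sqrt{n\lambda}\lesssim\log^{-\beta/4}(n)$. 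This is precisely where the upper and lower constraints on $\lambda$ in the statement come from.

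Your $\lambda$-strong-convexity bound $\|\boldsymbol{\theta}-\boldsymbol{\theta}^*\|_{\mathbb{H}}\le\|\nabla L_\lambda(\boldsymbol{\theta}^*)\|/\lambda$ has a noise contribution $\sigma M_2/(\lambda\sqrt{n})$ that diverges at $\lambda\asymp 1/n$. The route above turns it into $\sigma M_1^2/\sqrt{n\lambda}$ up to logs, which stays small over the whole admissible range. Once this localization holds, $k,K,L$ apply along the entire path, and your Steps 2 and 3 go through as written.
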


The proof is deferred to \cref{proofres}. 

\paragraph{Bounding the oracle inequality terms via the pointwise rate.} 
We can now carry out the program described above. Projecting \eqref{eq: bvp decomposition} and applying \cref{consistency rate} to control the perturbation yields:

\begin{enumerate}
    \item \textbf{Candidate excess risk bounds.} Contracting
    \eqref{eq: bvp decomposition} against the target second-moment operator
    $\bSigma_0$ yields a bound on the excess risk
    $R(\hat{f}_\lambda) - R(f^*)$ for each candidate. Informally, after applying union bounds, we obtain up to constant and $\log$ factors:
    \begin{equation}
        \min_{\lambda \in \Lambda}\{R(\hat{f}_\lambda) - R(f^*)\} \lesssim \lambda R^2 \norm{\bS_\lambda} + \frac{\sigma^2}{n_1}\Tr(\bS_\lambda),
    \end{equation}
    where $\bS_{\lambda} = (k\bSigma + \lambda \bI)^{-1/2}\bSigma_0(k\bSigma + \lambda \bI)^{-1/2}$. A formal statement can be found in \cref{epsilon risk lemma}. Using the effective sample size to relate $\bSigma$ to $\bSigma_0$, one can easily verify that $\Tr(\bS_\lambda) \le \frac{2n}{n_\text{eff}}\Tr((k\bSigma_0 + \lambda \frac{n}{n_\text{eff}}\bI)^{-1}\bSigma_0)$ and $\norm{\bS_\lambda} \le \frac{n}{kn_\text{eff}}$, as shown in \cref{sec: use of neff}. Thus taking $n_1=n/2$, it reduces to
    \begin{equation}\label{eq: out of sample excess risk}
        \min_{\lambda \in \Lambda}\{R(\hat{f}_\lambda) - R(f^*)\} \lesssim \frac{n}{n_{\text{eff}}}\lambda R^2 + \frac{\sigma^2}{n_{\text{eff}}}\Tr((k\bSigma_0+\frac{n}{n_{\text{eff}}}\lambda \bI)^{-1}\bSigma_0),
    \end{equation}
    as proved in \cref{proof of main theorem}.
    \item \textbf{Imputation error decomposition.} 
    Following the insight of \cref{generic oracle inequality}, our \cref{alg:pseudo-krr} trains the imputation model with a small bias with respect to our $\Lambda$ grid, by choosing $\tilde \lambda \asymp n^{-1}$, which also corresponds to the optimal pointwise convergence rate as per \cref{consistency rate}. Applying \eqref{eq: bvp decomposition} to $\tilde{\btheta}=\hat{\btheta}_{\tilde \lambda}$ and projecting onto the target covariates
    $\{\bx_{0i}\}_{i=1}^{n_0}$ yields a decomposition of $\{\tilde{f}(\bx_{0i}) - f^*(\bx_{0i})\}_{i=1}^{n_0}$ into
    the bias, variance, and perturbation terms entering $\xi(\variv, \bias, \eta, \delta)$ in
    \cref{oracle general non approx}, and allows to bound them in \cref{fixed design lemma}. Informally, for this choice of $\tilde \lambda$, we find that the squared variance proxy and the square norm of the bias are of the same order:
    \begin{equation}\label{eq: bias variance bounds}
        \tilde \sigma^2 \lesssim \frac{n_0}{n_2}\norm{\hat \bS_{2,\tilde \lambda}},\quad \norm{\bB}^2\lesssim \frac{n_0}{n_2}\norm{\hat \bS_{2,\tilde \lambda}},
    \end{equation}
    where $\hat \bS_{2, \tilde\lambda} = (k\hat \bSigma_2 + \tilde\lambda \bI)^{-1/2}\hat \bSigma_0(k\hat \bSigma_2 + \tilde\lambda \bI)^{-1/2}$. To bound the perturbation term $n_0 \eta^4$, we recall that $\eta$ bounds $\max_{i\in[n_0]}\abs{\tilde{f}(\bx_{0i}) - f^*(\bx_{0i})}$ with high-probability. Substituting the rate $\eta \lesssim n_2^{-1/4}$ from \cref{consistency rate} yields
    \begin{equation}\label{eq: perturbation term bound}
        n_0 \eta^4 \lesssim \frac{n_0}{n_2}.
    \end{equation}
\end{enumerate}

\paragraph{Pointwise rate.}
We conclude this section by commenting on the rate obtained in \cref{consistency rate}. We obtain a $O(n^{-1/4})$ pointwise convergence rate of the ridge-regularized kernel GLM estimator when $\lambda \asymp n^{-1}$. This behavior is consistent with the sharp asymptotic theory for linear functionals of kernel ridge regression developed by \citet{TuoZouSS2024}, who analyze point evaluation in Sobolev-equivalent RKHSs: when the input space $\cX$ is a convex compact subset of $\RR^d$ and $\cF$ is equivalent to a Sobolev space of order $m>d/2$, they show in Theorem 7 that $\lambda \asymp n^{-1}$ yields a rate of $n^{-1/2 + d/4m}$ for linear functionals, with matching upper and lower bounds (hence optimality in their framework). In particular, the first-order Sobolev (Brownian) kernel $K(u,v) = \min\{u,v\}$ on $[0,1]$ ($d=m=1$) satisfies the quadratic eigendecay $\mu_j \lesssim j^{-2}$ and has uniformly bounded eigenfunctions, so both their results and ours predict the same $O(n^{-1/4})$ pointwise rate.

We also emphasize that this pointwise rate is slower than the $L^2(\mathcal{X} ; \cP)$-prediction rate, reflecting the additional difficulty of controlling a point evaluation functional compared to an integrated $L^2$ loss. We show in \cref{proof of L2 GLM rate} how to recover the optimal prediction rate $\norm{\hat f_\lambda - f^*}_{L^2} = O(n^{-1/3})$ in our setting, obtained for $\lambda \asymp n^{-2/3}$. This in turn matches the fast rate of $O(n^{-\frac{2r\alpha+\alpha}{2r\alpha+\alpha+1}})$ obtained by \citet{bachfastrates} with capacity condition $\alpha=2$ (corresponding to our eigendecay $\alpha=1$) and source condition $r=0$ (i.e. without any additional smoothness assumption on $f^*$ beyond membership in the RKHS) in their Corollary 9.

\subsection{Putting things together}\label{sec: putting things together}

We now combine the two previous ingredients. First, we can substitute the bounds on the imputation error
decomposition from \cref{eq: bias variance bounds,eq: perturbation term bound} with $n_2=n/2$ in the oracle inequality of \cref{oracle general non approx}. We obtain informally and up to $\log$ factors:
\begin{align*}
    R_{in}(\hat{f})-R_{in}(f^*) \lesssim \min_{\lambda \in \Lambda}
    \{R_{in}(\hat{f}_\lambda)-R_{in}(f^*)\} + \frac{1}{n}\norm{\hat \bS_{2,\tilde \lambda}}.
\end{align*}

A formal statement can be found in \cref{lemma in sample oracle}. By further bridging the empirical second-moment operator $\hat \bS_{2,\tilde \lambda}$ to its population counterparts $\bSigma$ and $\bSigma_0$ through $n_{\mathrm{eff}}$, and relating the in-sample and out-of-sample excess risks via concentration inequalities, it implies:
\begin{align*}
    R(\hat{f})-R(f^*) \lesssim \min_{\lambda \in \Lambda}
    \{R(\hat{f}_\lambda)-R(f^*)\} + O\!\left(\frac{1}{n_{\mathrm{eff}}}
    +\frac{1}{n_0}\right).
\end{align*}
A formal statement is presented in \cref{oracle inequality for pseudo-labeling}. To obtain \cref{main theorem}, it then suffices to
substitute the candidate excess risk bounds from \cref{eq: out of sample excess risk} into the first term. As seen in \cref{cor:poly_decay_neff},
the overhead due to pseudo-label errors, of order
$O(1/n_{\mathrm{eff}}+1/n_0)$, is negligible compared to the resulting
candidate excess risks of order
$O(n_{\mathrm{eff}}^{-\frac{2\alpha}{2\alpha+1}})$, determined by the complexity of the target distribution.

\section{Numerical Experiments}\label{sec: experiments}

We conduct experiments on synthetic and real data to show the benefit of our approach over source-only baselines. The code for implementing our experiments is available at \url{https://github.com/nathanweill/KRGLM/}. In particular, we implemented an ad hoc KRGLM solver that can be of independent interest; it is explained in details in \cref{app: algo details}.

\paragraph{Kernel logistic regression specifics.} In the following experiments, we focus on kernel logistic regression, for which the log-partition function is $\logpar(u)=\log(1+e^u)$, and its first derivative is the sigmoid function $\logpar'(u)=\frac{e^u}{1+e^u}=\sigma(u)$ that maps real values to $[0, 1]$. If the true underlying function is $f^*$, then $y_i \mid \bx_i \sim \text{Bernoulli}(\sigma(f^*(\bx_i)))$. We can thus denote the conditional mean as a probability $p^*_i := \sigma(f^*(\bx_i))$. Equivalently, $f^*(\bx_i) = \log(\frac{p^*_i}{1-p^*_i})=\text{logit}(p^*_i)$. As such, for any estimator $f$ of $f^*$, if we denote for each sample $p_i = \sigma(f(\bx_i))$, the in-sample version of the GLM risk \cref{eq: glm risk} recovers the usual log-loss with "soft labels" or cross-entropy between the true distribution $p^*$ and the estimate $p$:
$$R_{in}(f)=\frac{1}{n}\sum_{i=1}^n \left[\logpar(f(\bx_i))-\logpar'(f^*(\bx_i))f(\bx_i)\right]=\frac{1}{n}\sum_{i=1}^n \left[-(1-p^*_i)\log(1-p_i)-p^*_i\log(p_i)\right].$$

\subsection{Synthetic data}\label{sec: experiments synth}

For simulation experiments, our feature space $\cX$ is the unit interval $[0,1]$, and the kernel $K$ is the first-order Sobolev kernel $K(u,v) = \min \{ u, v \} $ from \cref{ex: sobolev kernel}. Let $f^{*}(x)=1.5\cos (2 \pi x), \nu_0=\mathcal{U}[0,1 / 2]$ and $\nu_1=\mathcal{U}[1 / 2,1]$. For any even integer $n \geq 2$, we follow the procedure below:
\begin{itemize}
    \item Let $B=n^{0.4}, \mathcal{P}=\frac{B}{B+1} \nu_0+\frac{1}{B+1} \nu_1$ and $\mathcal{Q}=\frac{1}{B+1} \nu_0+\frac{B}{B+1} \nu_1$. Generate $n$ i.i.d. source samples $\left\{\left(x_i, y_i\right)\right\}_{i=1}^n$ with $x_i \sim \mathcal{P}$ and $y_i \mid x_i \sim \text{Bernoulli}\left(\sigma(f^{*}\left(x_i\right))\right)$. Generate $n_0=n$ i.i.d. unlabeled target samples $\mathcal{D}_0=\left\{x_{0 i}\right\}_{i=1}^{n_0}$ from $\mathcal{Q}$.
    \item Run \cref{alg:pseudo-krr} with $n_1=n / 2, \Lambda=\left\{2^k /(10 n): k=0,1, \cdots,\left\lceil\log _2(10 n)\right\rceil\right\}$, and $\widetilde{\lambda}=1 /(10 n)$ to get an estimate $\widehat{f}$. Use the log-partition function associated to logistic regression $\logpar(u)=\log(1+e^u)$.
\end{itemize}

The $1.5$ coefficient in front of the $\cos$ in $f^*$ reflects the fact that the decision boundary for the logistic regression is crucial for training, so the source must contain enough point in the region around $0$ to learn decent pseudo-labels. The quantity $B$ controls the strength of the covariate shift. A larger sample size allows for a larger shift. We chose a relatively large covariate shift to better illustrate the relative performance of our method.

\medskip
Our method selects a candidate model that minimizes the risk estimate

$$
f \mapsto \frac{1}{\left|\mathcal{D}_0\right|} \sum_{x \in \mathcal{D}_0}\paren{\logpar\paren{f(x)}- f(x)\logpar'(\tilde f(x))}
$$
based on pseudo-labels. We compare it with an oracle selection approach whose risk estimate 
$$f \mapsto\frac{1}{\left|\mathcal{D}_0\right|} \sum_{x \in \mathcal{D}_0}\paren{\logpar\paren{f(x)}- f(x)\logpar'(f^{*}(x))}$$ 
is constructed from noiseless labels, and a naive approach that uses the empirical risk 
$$f \mapsto\frac{1}{\left|\mathcal{D}_2\right|} \sum_{(x, y) \in \mathcal{D}_2}\paren{\logpar\paren{f(x)}- yf(x)}$$
on $\mathcal{D}_2$. The latter is the standard hold-out validation method that ignores the covariate shift.

\medskip
For every $n \in\{4000,8000,16000,32000\}$, we conduct 100 independent runs of the experiment.
\begin{figure}
    \centering
    \includegraphics[width=0.7\linewidth]{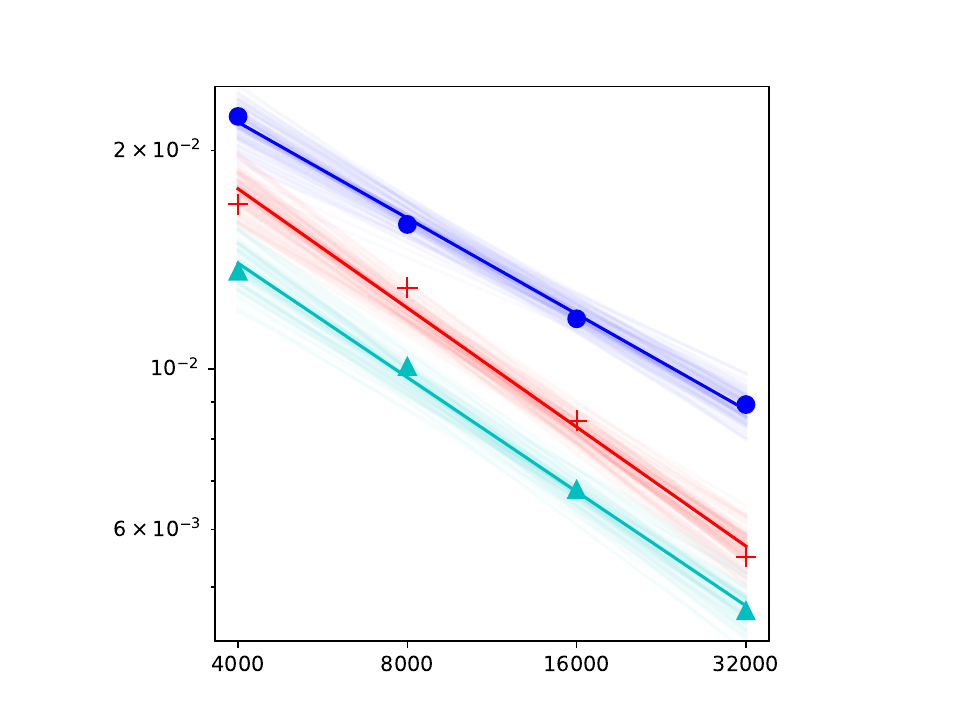}
    \caption{Comparisons of three approaches on a log-log plot. $x$-axis: $n$. $y$-axis: excess risk. Red crosses: pseudo-labeling. Cyan triangles: the oracle method. Blue circles: the naive method.}
    \label{fig:reslog}
\end{figure}

The results of our numerical experiments are illustrated in \cref{fig:reslog}, where the axes represent the sample size $n$ and the average estimated excess risk across 100 independent trials, respectively. The linear alignment of the data points on a log-log scale suggests a power-law decay of the excess risk, following an $O\left(n^{-\alpha}\right)$ rate for some $\alpha >0$. To quantify these decay rates, we applied linear regression to the log-transformed data. The estimated $\alpha$ 's for the pseudo-labeling, oracle, and naive methods are $0.546(0.047)$, $0.523 (0.051)$ and $0.439 (0.043)$, respectively, represented by the solid lines on the figure. The standard errors (reported in parentheses) were computed via a cluster bootstrap procedure \citep{bootstrap}, which involved resampling the 100 trials associated with each $n$ over 10,000 iterations.
Our proposed method achieves an error exponent that does not differ significantly from the oracle benchmark. Furthermore, our approach significantly outperforms the naive method, which suffers from poor source coverage of the target domain. To visualize the estimator stability, we have included thin shaded regions representing the linear fits of 100 bootstrap replicates.

\medskip
We included in \cref{app: synthetic data exp} the results of the same experiment with a stronger covariate shift controlled by $B=n^{0.45}$.

\subsection{Real-world data}\label{sec: experiments real}
The Raisin dataset\footnote{available at \url{https://archive.ics.uci.edu/dataset/850/raisin}} \citep{raisin_850} contains 900 instances of two raisin varieties with 7 numerical features. The task is binary classification. To introduce covariate shift, we follow a usual sub-sampling technique on the first covariate \citep{book_datasetshift_ml}: after scaling the data, for each sample $\bx_i$, look at its first component $\bx_i[0]$ and compute the rejection probability $p_i = \min(1, (\bx_i[0] - c)^2/l)$, where $c = \min_i (\bx_i[0])$ and $l>0$ is a parameter to adjust the shift strength (the smaller it is, the larger the shift). Then sample $i$ is assigned to the target out-of-distribution (OOD) set if a draw of the random variable $\text{Bernoulli}(p_i)$ returns $1$, otherwise, it is assigned to the in-distribution (ID) source set. In the rest of this experiment, we selected a shift level $l=3$.

To first realize the effect of this shift, we train a ridge-regularized logistic regression baseline only on source data, with a grid of ridge regularization parameters $\lambda$. We keep 20\% of the ID data and all of the OOD data for evaluation. 

\begin{figure}[ht]
    \vskip 0.2in
    \begin{center}
        \centerline{\includegraphics[width=\columnwidth]{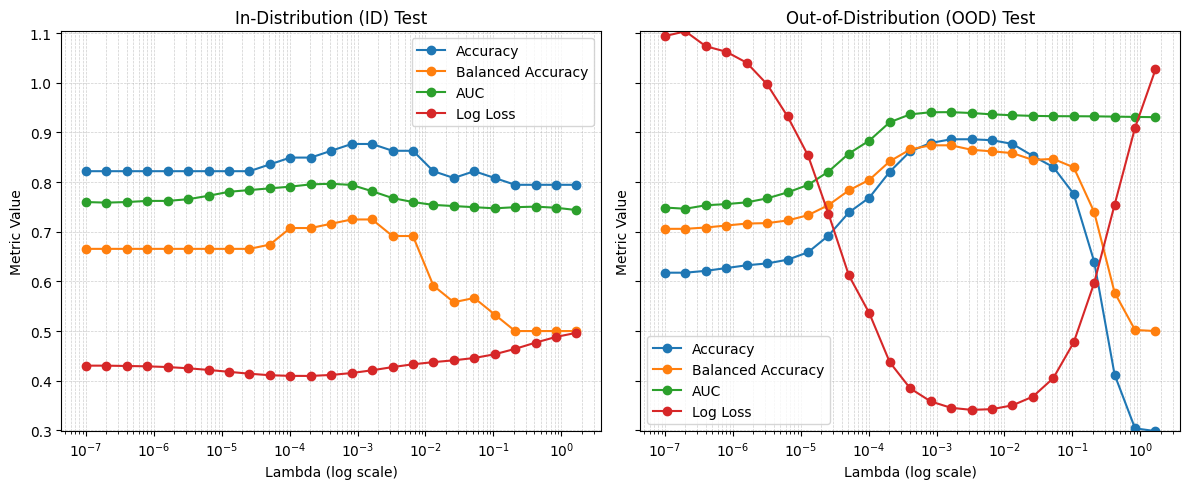}}
        \caption{Baseline logistic regression performance on ID and OOD test data, when trained only on ID data, as a function of the ridge regularization parameter $\lambda$}
        \label{fig:logistic baseline}
    \end{center}
\end{figure}

\cref{fig:logistic baseline} plots the results for four different metrics. We can observe that only a specific range of regularization (between $10^{-3}$ and $10^{-2}$) can achieve good performance on the target data, different from the optimal one for the source. The goal of our method is to find a near-optimal value without seeing the target labels used to plot these curves.

We realize our pseudo-labeling experiment in the same fashion as for the synthetic data in the previous paragraph, by comparing it to a naive method that selects the best model on source validation data, and oracle method that uses the target labels for selection. We run it for 100 different random seeds for significance and report the results in \cref{tab:risk_results}. While the underlying task in this example is binary classification, our primary objective is the regression of the conditional probability function. Consequently, we evaluate performance using the excess risk (equivalently, the log-loss), as it is a strictly proper scoring rule that captures the full calibration of the predicted probabilities. For each seed, we first split the OOD data into training (for pseudo and oracle method) and test set for evaluation purposes.

Given the limited sample size, we adopted a more robust model selection strategy: instead of a simple hold-out split, we employed repeated 2-fold cross-validation (see for instance \cite{crossvalbible, crossvalrepeat}) (6 repetitions) on the In-Distribution (ID) data. Candidate models were selected based on risk estimates averaged across all folds, and subsequently refitted on the full ID dataset using the chosen hyperparameters. While our theoretical analysis emphasizes the primacy of low bias for the imputation model, controlling variance proves critical as well in the small-sample regime to prevent performance degradation. Empirically, our pseudo-labeling approach consistently outperforms the naive baseline by effectively adapting to the underlying target distribution. More details on the method used, specific hyperparameter values, and additional experiment results are provided in \cref{app: real data exp}.

\begin{table}[t]
  \caption{Risk mean estimate across seeds with confidence interval (CI) and standard error (SE).}
  \label{tab:risk_results}
  \begin{center}
    \begin{small}
      \begin{sc}
        \begin{tabular*}{\columnwidth}{@{\extracolsep{\fill}} lccc @{}}
          \toprule
          Method & Mean & 95\% CI & SE \\
          \midrule
          Naive           & 0.502 & [0.463, 0.541] & 0.019 \\
          Pseudo-labeling & 0.428 & [0.412, 0.444] & 0.008 \\
          Oracle          & 0.373 & [0.361, 0.385] & 0.006 \\
          \bottomrule
        \end{tabular*}
      \end{sc}
    \end{small}
  \end{center}
  \vskip -0.1in
\end{table}

\bigskip

\section{Conclusion}
We presented a principled framework for unsupervised domain adaptation in kernel GLMs, utilizing a split-and-fit strategy to perform target-aware model selection via pseudo-labeling. Our theoretical contributions include non-asymptotic excess risk bounds that characterize adaptation through a novel ``effective labeled sample size,'' while our empirical results confirm consistent gains over source-only baselines.
Several future directions are worth exploring. Extending our framework to broader function classes and alternative learning procedures beyond ridge regularization would significantly expand its applicability. Also, our current analysis focuses on a one-step imputation strategy. Developing theoretical insights for iterative refinement, where pseudo-labels are progressively updated in multiple rounds (akin to self-training heuristics), could bridge the gap between rigorous statistical bounds and the empirical success of iterative self-training methods.

\section*{Acknowledgment}
The research is supported by NSF grants DMS-2210907 and DMS-2515679, and a Data Science Institute seed grant SF-181 at Columbia University.

\newpage

{
\bibliographystyle{ims}
\bibliography{bib}
}

\newpage

\begin{appendices}
    \section*{Appendices} 
    
    \startcontents[appendices]
    \printcontents[appendices]{l}{1}{\setcounter{tocdepth}{2}}
    \vspace{1em}
    \hrule
    \vspace{1em}

    \renewcommand{\theHequation}{\thesection.\arabic{equation}}
    
    \renewcommand{\theHlemma}{\thesection.\arabic{lemma}}
    \renewcommand{\theHtheorem}{\thesection.\arabic{theorem}}
    \renewcommand{\theHcorollary}{\thesection.\arabic{corollary}}
    \renewcommand{\theHfact}{\thesection.\arabic{fact}}
    
\section{Preliminaries for the proofs}
\subsection{Notations}\label[appendix]{subsection notations}
To facilitate the discussion and analysis, we introduce some notation. When $\mathcal{X}=\mathbb{R}^d$ for some $d<\infty$ and $K(\boldsymbol{z}, \boldsymbol{w})=\boldsymbol{z}^{\top} \boldsymbol{w}, \phi$ is the identity mapping. We can construct design matrices $\boldsymbol{X}=\left(\boldsymbol{x}_1, \cdots, \boldsymbol{x}_n\right)^{\top} \in$ $\mathbb{R}^{n \times d}, \boldsymbol{X}_0=\left(\boldsymbol{x}_{01}, \cdots, \boldsymbol{x}_{0 n_0}\right)^{\top} \in \mathbb{R}^{n_0 \times d}$ and the response vector $\boldsymbol{y}=\left(y_1, \cdots, y_n\right)^{\top} \in \mathbb{R}^n$. Define the index set $\mathcal{T}=\left\{i \in[n]:\left(\boldsymbol{x}_i, y_i\right) \in \mathcal{D}_1\right\}$ of the data for training candidate models. In addition, let $n_1=|\mathcal{T}|$ and $n_2=n-n_1$. Denote by $\boldsymbol{X}_1 \in \mathbb{R}^{|\mathcal{T}| \times d}$ and $\boldsymbol{X}_2 \in \mathbb{R}^{(n-|\mathcal{T}|) \times d}$ the sub-matrices of $\boldsymbol{X}$ by selecting rows whose indices belong to $\mathcal{T}$ and $[n] \backslash \mathcal{T}$, respectively. Similarly, denote by $\boldsymbol{y}_1 \in \mathbb{R}^{|\mathcal{T}|}$ and $\boldsymbol{y}_2 \in \mathbb{R}^{n-|\mathcal{T}|}$ the sub-vectors of $\boldsymbol{y}$ induced by those index sets.

The design matrices can be generalized to operators when $\mathcal{X}$ and $K$ are general. In that case, $\boldsymbol{X}$ and $\boldsymbol{X}_0$ are bounded linear operators defined through

$$
\begin{aligned}
& \boldsymbol{X}: \mathbb{H} \rightarrow \mathbb{R}^n, \boldsymbol{\btheta} \mapsto\left(\left\langle\Phi\left(\boldsymbol{x}_1\right), \boldsymbol{\btheta}\right\rangle, \cdots,\left\langle\Phi\left(\boldsymbol{x}_n\right), \boldsymbol{\btheta}\right\rangle\right)^{\top}, \\
& \boldsymbol{X}_0: \mathbb{H} \rightarrow \mathbb{R}^{n_0}, \boldsymbol{\btheta} \mapsto\left(\left\langle\Phi\left(\boldsymbol{x}_{01}\right), \boldsymbol{\btheta}\right\rangle, \cdots,\left\langle\Phi\left(\boldsymbol{x}_{0 n_0}\right), \boldsymbol{\btheta}\right\rangle\right)^{\top} .
\end{aligned}
$$

We can define $\boldsymbol{X}_1$ and $\boldsymbol{X}_2$ similarly. With slight abuse of notation, we use $\boldsymbol{X}^{\top}$ to refer to the adjoint of $\boldsymbol{X}$, i.e. $\boldsymbol{X}^{\top}: \mathbb{R}^n \rightarrow \mathbb{H}, \boldsymbol{u} \mapsto \sum_{i=1}^n u_i \Phi\left(\boldsymbol{x}_i\right)$. Similarly, we can define $\boldsymbol{X}_0^{\top}, \boldsymbol{X}_1^{\top}$ and $\boldsymbol{X}_2^{\top}$. Let $\widehat{\boldsymbol{\bSigma}}_0=\frac{1}{n_0} \sum_{i=1}^{n_0} \Phi\left(\boldsymbol{x}_{0 i}\right) \otimes \Phi\left(\boldsymbol{x}_{0 i}\right), \widehat{\boldsymbol{\bSigma}}_1=\frac{1}{n_1} \sum_{i \in \mathcal{T}} \Phi\left(\boldsymbol{x}_i\right) \otimes \Phi\left(\boldsymbol{x}_i\right)$ and $\widehat{\boldsymbol{\bSigma}}_2=\frac{1}{n_2} \sum_{i \in[n] \backslash \mathcal{T}} \Phi\left(\boldsymbol{x}_i\right) \otimes \Phi\left(\boldsymbol{x}_i\right)$. We have $\widehat{\boldsymbol{\bSigma}}_j=n_j^{-1} \boldsymbol{X}_j^{\top} \boldsymbol{X}_j$ for all $j \in\{0,1,2\}$. 

We will also use the following notation for any $\lambda >0$:
$$\hat f_\lambda (\cdot) = \ip{\Phi(\cdot)}{\hthetal}\,, \quad f^*(\cdot) = \ip{\Phi(\cdot)}{\btheta^*}\,, \quad \tilde f(\cdot) = \ip{\Phi(\cdot)}{\tilde \btheta},$$

where $\tilde \btheta = \hat \btheta_{\tilde{\lambda}}.$

Also denote $g(\btheta) := (\logpar'(\ip{\Phi(\bx_i)}{\btheta})-y_i)_i^{\top}$ the vector of residues such that $\bres_i = g_i(\btheta^*)$.

\begin{remark}[Convention]
    We will adopt the convention that the minimum of a function $f$ over an empty set is infinite: $$\min_{\lambda \in \emptyset} f(\lambda) = +\infty$$
\end{remark}


\subsection{Useful observations on the GLM excess risk}
Note that, on the domain where assumption \ref{convexityassumption} holds, such that the second derivative of the log-partition function is bounded, $k\le \logpar'' \le K$, then, for the true excess risk, we have by Taylor-Lagrange:
\begin{equation}\label{excess risk bound}
    \frac{k}{2}\mathbb{E}_{\bx \sim \mathcal{Q}}\abs{f(\bx)-f^*(\bx)}^2 \le \cR(f) \le \frac{K}{2}\mathbb{E}_{\bx \sim \mathcal{Q}}\abs{f(\bx)-f^*(\bx)}^2,
\end{equation}

and we recognize the usual mean squared estimation error under the target distribution $\mathcal{Q}$.

Likewise, for the finite-sample version, we get
\begin{equation}\label{in-sample excess risk bound}
\frac{k}{2n} \sum_{i=1}^n\paren{f(\bx_i)-f^*(\bx_i)}^2 \le \cR_{in}(f) \le \frac{K}{2n}\sum_{i=1}^n\paren{f(\bx_i)-f^*(\bx_i)}^2.
\end{equation}

\begin{remark}[Excess risk bounds]\label{remark excess risk}
    Based on \cref{excess risk bound,in-sample excess risk bound} and the notations introduced, we have for any $\lambda >0$:
    $$\frac{k}{2}\norm{\hthetal-\btheta^*}_{\Sigma_0}^2 \le \cR(\hat f_\lambda) \le \frac{K}{2}\norm{\hthetal-\btheta^*}_{\bSigma_0}^2,$$
    and
    $$\frac{k}{2}\norm{\hthetal-\btheta^*}_{\hat \Sigma_0}^2 \le \cR_{\textit{in}}(\hat f_\lambda) \le \frac{K}{2}\norm{\hthetal-\btheta^*}_{\hat \bSigma_0}^2.$$
    
\end{remark}

\begin{remark}
    The excess risk under the GLM loss is a type of Bregman divergence. The Bregman divergence of the strictly convex and continuously differentiable log-partition function $\logpar$ is defined as:
    $$D_{\logpar}(x, y):= \logpar(x)-\logpar(y)-\logpar'(y)(x-y), \forall x, y \in \Omega,$$
    where $\Omega$ is a closed convex set in $\RR$. Hence, we can rewrite the excess risk and in-sample excess risk respectively as:
    $$\cR(f)= \mathbb{E}_{\bx \sim \mathcal{Q}}\cro{D_{\logpar}(f(\bx), f^*(\bx))},$$
    and 
    $$\cR_{in}(f) = \frac{1}{n} \sum_{i=1}^n D_{\logpar}(f(\bx_i), f^*(\bx_i)).$$
    
\end{remark} 

\paragraph{Proof structure.} The proof follows the same structure as the proof sketch presented in \cref{sec: proof sketch}. The two building blocks are the in-sample oracle-inequality (\cref{oracle general non approx}) proved in \cref{proofres} and the pointwise convergence rate of the KRGLM estimator (\cref{consistency rate}) proved in \cref{poracle}. We start with \cref{proof of theorems}, that formalizes \cref{sec: putting things together} to prove our main \cref{main theorem}. To keep the logic of the proof clear, \cref{proof of theorems} states and combines a number of results that are proved in \cref{proof of appendix B}. Throughout, we make use of technical lemmas presented in \cref{lemmas}.

\section{Proof of \cref{main theorem} and corollaries}\label[appendix]{proof of theorems}

In this section, we will use the following notations for $j\in \{1,2\}$:
$$\hat \bC_{j, \lambda}=(k\hat \bSigma_j + \lambda \bI)^{-1}\hat \bSigma_j\,,\quad \hat \bS_{j, \lambda} = (k\hat \bSigma_j + \lambda \bI)^{-1/2}\hat \bSigma_0(k\hat \bSigma_j + \lambda \bI)^{-1/2},$$
their population version
$$\bC_{\lambda}=(k\bSigma + \lambda \bI)^{-1}\bSigma\,,\quad \bS_{\lambda} = (k\bSigma + \lambda \bI)^{-1/2}\bSigma_0(k\bSigma + \lambda \bI)^{-1/2},$$
as well as the hybrid
$$\hat \bU_{j, \lambda}=(k\hat \bSigma_j + \lambda \bI)^{-1/2}\bSigma_0(k\hat \bSigma_j + \lambda \bI)^{-1/2}.$$

To be able to use the in-sample oracle inequality, we first need to conduct a fixed-design analysis by treating the covariates and the data split as deterministic.

\subsection{Fixed-design analysis}

\begin{lemma}[Fixed-design KRGLM]\label{fixed design lemma}
    Let \Cref{setup,kernel assumptions,subexponentialassumption,convexityassumption,trueparam} hold except that all covariates are deterministic and the data split is fixed. Choose any $\lambda$ satisfying \cref{consistency rate} conditions:
    \[
  \begin{cases}
    \lambda \le \frac{g(n_1)}{R^4}=\frac{1}{R^4\log^{\beta}(n_1)} \\
    \lambda  \ge \paren{\frac{M_1\resv\sqrt{\log(16/\delta)}\log n_1}{\sqrt{n_1}}}^{2}\frac{1}{g(n_1)^{1/2}} = \frac{M_1^2\resv^{2}\log(\frac{16}{\delta})(\log n_1)^{2+\beta/2}}{n_1}
  \end{cases}
\]
Choose any positive semi-definite trace-class linear operator $\bQ : \HH \rightarrow \HH$ and define $\hat \bT_{1, \lambda} = (k\hat \bSigma_1 + \lambda \bI)^{-1/2}\bQ(k\hat \bSigma_1 + \lambda \bI)^{-1/2}$. Then, learning with the first half of the source dataset $\cD_1$, there is a universal constant $C$ such that for all $\delta \in (0, 1/e]$, \wpd:
$$\norm{\hthetal-\btheta^*}_{\bQ}^2 \le 4\paren{\lambda R^2\norm{\hat \bT_{1, \lambda}}+\frac{C\sigma^2\log(1/\delta)\log^2(n_1)}{kn_1}\paren{L^2R^2 M_1^4\norm{\hat \bT_{1, \lambda}} + \Tr(\hat \bT_{1, \lambda})}}.$$

\medskip
On the other hand, for $\tilde \lambda$ set to its lower bound
$$\frac{M_1^2\resv^{2}\log(\frac{16n_0}{\delta})(\log n_2)^{2+\beta/2}}{n_2}.$$
Then, learning on the second half of the source dataset $\cD_2$, $\bX_0(\tilde \btheta-\btheta^*)$, where $\hat \btheta_{\tilde \lambda} = \tilde \btheta$, can be decomposed into three terms $\bX_0(\tilde \btheta-\btheta^*) = \bB_1 + \bB_2 + \vari$, respectively bias, perturbation and variance, satisfying simultaneously, with probability at least $1-\delta/2$, for all $\delta \in (0, 1/e]$:
\begin{itemize}
    \item $\norm{\bB_1}_2^2 \le n_0 \tilde\lambda R^2\norm{\hat \bS_{2, \tilde \lambda}} \le C M_1^2\resv^{2}R^2\frac{n_0}{n_2}\norm{\hat \bS_{2, \tilde \lambda}}\log(\frac{n_0}{\delta})(\log n_2)^{6}$,
    \item $\norm{\bB_2}_2^2 \le C M_1^4 \sigma^2\frac{L^2n_0}{k^2n_2}\norm{\hat \bS_{2, \tilde \lambda}}\log^7(n_2/\delta)$,
    \item $\norm{\vari}_{\psi_1}^2 \le \frac{n_0}{n_2}\frac{\sigma^2}{k}\norm{\hat \bS_{2, \tilde\lambda}}$ (this is deterministic), and $\norm{\vari}_2^2 \le C\frac{n_0}{n_2}\frac{\sigma^2}{k}\Tr(\hat \bS_{2, \tilde\lambda})\log^3(n_2/\delta)$.
\end{itemize}

The variance term is sub-exponential (conditionally on $\bX_2$). $C$ is universal constant.
\end{lemma}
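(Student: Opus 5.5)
We will work entirely with the Taylor decomposition \eqref{eq: bvp decomposition}, now written in operator form via the notation of \cref{subsection notations}. For a sample $\cD_j$ with design operator $\bX_j$, the first-order condition together with the integral form of Taylor's theorem gives $\hthetal-\btheta^* = -\bH^{-1}(n_j^{-1}\bX_j^\top\bres + \lambda\btheta^*)$. Here $\bH = n_j^{-1}\bX_j^\top\bD\bX_j+\lambda\bI$, with $\bD$ diagonal and entries $D_{ii}=\int_0^1\logpar''$ along the segment. The comparison operator is $\bar\bH=n_j^{-1}\bX_j^\top\bar\bD\bX_j+\lambda\bI$ with $\bar D_{ii}=\logpar''(f^*(\bx_i))$.

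The first step is localization. On the event of \cref{consistency rate}, applied pointwise at each design point and union-bounded (hence the $\log(16/\delta)$ and, for the imputer, $\log(16n_0/\delta)$ factors), we have $\max_i|\hat f_\lambda(\bx_i)-f^*(\bx_i)|\le\eta$ with $\eta\lesssim \sqrt{M_1^3\sigma}\,n_j^{-1/4}\log^3(n_j/\delta)$. The constraints on $\lambda$ guarantee $\eta\le C$. Consequently every argument of $\logpar''$ lies in $[-2C,2C]$, so $k\le D_{ii},\bar D_{ii}\le K$ and $|D_{ii}-\bar D_{ii}|\le L\eta$. This gives $\bH,\bar\bH\succeq k\hat\bSigma_j+\lambda\bI=:\bG$, together with $\|\bG^{1/2}\bH^{-1}\bG^{1/2}\|\le 1$ and the same bound for $\bar\bH$.

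For the candidate bound ($j=1$) we expand $\|\hthetal-\btheta^*\|_\bQ^2\le 4(\text{bias}^2+\text{var}^2+\text{pert}^2)$ and handle the three pieces as follows.
\begin{itemize}
\item The bias is $\|\lambda\bQ^{1/2}\bH^{-1}\btheta^*\|^2 \le \lambda^2\|\hat\bT_{1,\lambda}\|\,\|\bG^{1/2}\bH^{-1}\btheta^*\|^2$. Since $\bH\succeq\bG\succeq\lambda\bI$, this is at most $\lambda R^2\|\hat\bT_{1,\lambda}\|$.
\item The variance is $n_1^{-2}\|\bQ^{1/2}\bar\bH^{-1}\bX_1^\top\bres\|^2$, a quadratic form in the sub-exponential vector $\bres$ with a deterministic matrix (because $\bar\bH$ is deterministic). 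Its mean is bounded by $\sigma^2 n_1^{-2}\Tr(\bX_1\bar\bH^{-1}\bQ\bar\bH^{-1}\bX_1^\top)\le \sigma^2\Tr(\hat\bT_{1,\lambda})/(kn_1)$. A Hanson--Wright-type inequality for sub-exponential vectors then yields the factor $\log(1/\delta)\log^2 n_1$.
\item The perturbation is $(\bH^{-1}-\bar\bH^{-1})=\bH^{-1}(\bar\bH-\bH)\bar\bH^{-1}$, where $\bar\bH-\bH=n_1^{-1}\bX_1^\top(\bar\bD-\bD)\bX_1$. This gives $\|\bG^{-1/2}(\bar\bH-\bH)\bG^{-1/2}\|\le L\eta/k$. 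Hence the perturbation is bounded by $\|\hat\bT_{1,\lambda}\|(L\eta/k)^2\,\|\bG^{1/2}\bar\bH^{-1}n_1^{-1}\bX_1^\top\bres\|^2$. The last factor is of order $\sigma^2\Tr(\hat\bC_{1,\lambda})/n_1$, and $\eta^2$ combined with the $\lambda$-range should produce the advertised $L^2R^2M_1^4\sigma^2\|\hat\bT_{1,\lambda}\|/(kn_1)$ term, up to logs.
\end{itemize}

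For the imputer ($j=2$) we set $\bQ$ to the rank-$n_0$ operator underlying $\bX_0$, i.e.\ we work with $n_0\hat\bSigma_0$, and define $\bB_1=-\tilde\lambda\bX_0\bH^{-1}\btheta^*$, $\vari=-n_2^{-1}\bX_0\bar\bH^{-1}\bX_2^\top\bres$, and $\bB_2$ the remainder. The bounds go as follows.
\begin{itemize}
\item The bound on $\|\bB_1\|^2$ follows exactly as above with $\bQ=n_0\hat\bSigma_0$; substituting the value of $\tilde\lambda$ gives the second inequality.
\item $\vari$ is linear in $\bres$ with a deterministic operator $\bA$, so $\|\vari\|_{\psi_1}\le\sigma\|\bA\|$ and $\|\bA\|^2\le n_0\|\hat\bS_{2,\tilde\lambda}\|/(kn_2)$.
\item $\|\vari\|_2^2$ is a quadratic form handled as in the candidate case, giving the trace bound.
\item $\bB_2$ is handled as the perturbation above. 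Here $\eta^2\lesssim M_1^3\sigma n_2^{-1/2}\log^6$ multiplies a variance factor of order $\sigma^2\Tr(\hat\bC_{2,\tilde\lambda})/n_2$, which is bounded using the eigendecay $\mu_j\le Aj^{-2}$. The result is the claimed $L^2\sigma^2 M_1^4 n_0\|\hat\bS_{2,\tilde\lambda}\|\log^7/(k^2n_2)$.
\end{itemize}

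The main obstacle is the perturbation term. $\bH$ depends on $\bres$, so one cannot simply condition, and we must take the supremum over the localization event. Bounding $\|\bG^{1/2}\bar\bH^{-1}\bX^\top\bres\|$ requires controlling $\Tr(\hat\bC_{j,\lambda})$ by the empirical effective dimension, which should be of order $\lambda^{-1/2}$ under quadratic eigendecay. This exponent must be traded against $\eta^2\sim n^{-1/2}$ and the $\lambda$-window so that the final factor is $O(1)$ up to logs; getting these powers to cancel is where we expect most of the care to be needed.
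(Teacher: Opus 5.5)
Your proposal is correct and follows essentially the paper's own proof. The shared ingredients are:
\begin{itemize}
\item the same bias/variance/perturbation split, with the deterministic $\bar\bH$ in the variance term;
\item localization through \cref{consistency rate};
\item the perturbation controlled by $(L\eta/k)^2$ times $\|\bar\bH^{-1/2}\bX_j^\top\bres\|^2/n_j^2$, together with $\Tr(\hat\bC_{j,\lambda})\lesssim\lambda^{-1/2}$;
\item identical bounds for the imputer.
\end{itemize}
Your resolvent identity $\bH^{-1}-\bar\bH^{-1}=\bH^{-1}(\bar\bH-\bH)\bar\bH^{-1}$ is only a rewriting of the paper's $\bar\bH^{-1/2}\bDelta(\bI+\bDelta)^{-1}\bar\bH^{-1/2}$.

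One small correction concerns the candidate part. The $n^{-1/4}$ rate you quote for $\eta$ holds only at the lower end of the $\lambda$-window, so there you must use the $\lambda$-dependent bound \eqref{eq: glm rate bias variance}. Its $\lambda^{1/4}R$ component, multiplied by $\Tr(\hat\bC_{1,\lambda})\lesssim\lambda^{-1/2}$, is exactly what produces the $R^2$ in the perturbation term.
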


The proof is deferred to \cref{proof of fixed design}. The next lemma allows us to convert the previous empirical error bounds to population guarantees, its proof is in \cref{proof of concentration}.

\begin{lemma}[Concentration of second-moment operators]\label{concentration of second lemma}
    For any $\lambda$ satisfying \cref{consistency rate} conditions when $n_1=n_2=n/2$:
    \[
  \begin{cases}
    \lambda \le \frac{c_1 g(n)}{R^4}=\frac{c_1}{R^4\log^{\beta}(n)} \\
    \lambda  \ge c_2 \paren{\frac{M_1\resv\sqrt{\log(32/\delta)}\log n}{\sqrt{n}}}^{2}\frac{1}{g(n)^{1/2}} = \frac{c_2M_1^2\resv^{2}\log(\frac{32}{\delta})(\log n)^{2+\beta/2}}{n}
  \end{cases}
\], and $\lambda_0 = \frac{16 M_2^2 \log(28n_0/\delta)}{n_0}$, we have simultaneously \wpd:
$$\hat \bSigma_0 \preceq \frac{3}{2}\bSigma_0 + \frac{\lambda_0}{2}\bI\,, \quad  k\hat \bSigma_j + \lambda \bI \succeq \frac{1}{2}(k\bSigma + \lambda \bI), \forall j \in \{1,2\}.$$
Under this event, we have for $j\in\{1,2\} $:
$$\Tr(\hCj)\le 2M_1^2\Tr(C_\lambda),$$
and
$$\norm{\hUj} \le 2 \norm{\bS_\lambda}\,, \quad \Tr(\hUj) \le 2 \Tr(\bS_\lambda),$$
and 
$$\norm{\hSj} \le 3\norm{\bS_\lambda} + \frac{\lambda_0}{\lambda}.$$

\end{lemma}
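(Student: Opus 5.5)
The proof splits into two parts: a concentration step for second-moment operators, which produces the high-probability event, and a purely deterministic operator-inequality step that works on that event.

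\textbf{Step 1: the target bound.} For $\hat\bSigma_0 \preceq \frac{3}{2}\bSigma_0 + \frac{\lambda_0}{2}\bI$ we use a Bernstein-type inequality for sums of i.i.d.\ self-adjoint operators. We apply it to the normalized operators $(\bSigma_0+\lambda_0\bI)^{-1/2}\Phi(\bx_{0i})\otimes\Phi(\bx_{0i})(\bSigma_0+\lambda_0\bI)^{-1/2}$, using the dimension-free (intrinsic-dimension) version, e.g.\ Tropp / Minsker. Each summand has operator norm at most $M_2^2/\lambda_0$ by the bounded-feature remark. The intrinsic dimension $\Tr(\bSigma_0)/\|\bSigma_0\|$ is at most a polynomial in $n_0$ here, so it only enters inside the logarithm. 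Choosing $\lambda_0 = 16M_2^2\log(28n_0/\delta)/n_0$ makes the relative deviation at most $1/2$. This gives $\hat\bSigma_0 - \bSigma_0 \preceq \frac12(\bSigma_0+\lambda_0\bI)$, which is the first claim.

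\textbf{Step 2: the source lower bounds.} We need $k\hat\bSigma_j+\lambda\bI\succeq \frac12(k\bSigma+\lambda\bI)$ for $j=1,2$. Each $\hat\bSigma_j$ is built from $n/2$ source samples. We normalize by $(k\bSigma+\lambda\bI)^{-1/2}$ on both sides and apply the same operator Bernstein bound. The summands now have norm at most $kM_2^2/\lambda$, and the effective dimension is $\Tr(\bC_\lambda)\lesssim \lambda^{-1/2}$ by the $j^{-2}$ eigendecay. The lower bound on $\lambda$ is of order $\log(1/\delta)\,\mathrm{polylog}(n)/n$, so the deviation is at most $1/2$ once the constant $c_2$ is large enough. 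One point needs care: $M_2^2$ must be compared with $M_1^2\sigma^2$ inside $c_2$, or else the bound should be phrased through the leverage $\sup_{\bx}\|(k\bSigma+\lambda\bI)^{-1/2}\Phi(\bx)\|^2 \le M_1^2\sum_j \mu_j/(k\mu_j+\lambda)\lesssim M_1^2\lambda^{-1/2}$. I expect this step to be the main obstacle: tracking the constants and logarithmic factors so that the stated lower bound on $\lambda$ is sufficient. A union bound over the three events ($j=0,1,2$) at level $\delta/3$ each finishes the probabilistic part.

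\textbf{Step 3: deterministic consequences on the event.} Write $\bA=k\bSigma+\lambda\bI$ and $\hat\bA_j=k\hat\bSigma_j+\lambda\bI$. On the event, $\hat\bA_j\succeq \frac12\bA$, hence $\hat\bA_j^{-1}\preceq 2\bA^{-1}$ and $\|\bA^{1/2}\hat\bA_j^{-1/2}\|\le\sqrt2$.
\begin{itemize}
\item For $\hUj=\hat\bA_j^{-1/2}\bSigma_0\hat\bA_j^{-1/2}$, its nonzero spectrum coincides with that of $\bSigma_0^{1/2}\hat\bA_j^{-1}\bSigma_0^{1/2}\preceq 2\,\bSigma_0^{1/2}\bA^{-1}\bSigma_0^{1/2}$. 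The latter has the same spectrum as $2\bS_\lambda$, which gives both the norm and the trace bounds.
\item For $\hSj$, combine Step 1 with the previous item: $\hSj\preceq \frac32\hUj+\frac{\lambda_0}{2}\hat\bA_j^{-1}$, and $\|\hat\bA_j^{-1}\|\le 1/\lambda$. This yields $\|\hSj\|\le 3\|\bS_\lambda\|+\lambda_0/(2\lambda)$.
\item For $\Tr(\hCj)=\sum_i \hat\mu_i/(k\hat\mu_i+\lambda)$, write $\Tr(\hat\bA_j^{-1}\hat\bSigma_j)=\Tr(\bA^{1/2}\hat\bA_j^{-1}\bA^{1/2}\cdot\bA^{-1/2}\hat\bSigma_j\bA^{-1/2})\le 2\Tr(\bA^{-1}\hat\bSigma_j)$. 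Then $\Tr(\bA^{-1}\hat\bSigma_j)=\frac1{n_j}\sum_i\|\bA^{-1/2}\Phi(\bx_i)\|^2\le M_1^2\sum_l \mu_l/(k\mu_l+\lambda)=M_1^2\Tr(\bC_\lambda)$, using $\phi_l^2\le M_1^2$ pointwise. This gives $\Tr(\hCj)\le 2M_1^2\Tr(\bC_\lambda)$.
\end{itemize}
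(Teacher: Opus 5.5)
Your proposal is correct and follows essentially the paper's route. The paper obtains the three sandwich events in one line by applying its covariance-concentration lemma (\cref{covarariance concentration}, i.e.\ Corollary D.1 of \citet{wang2023pseudo}, which is the operator Bernstein argument you sketch) with $\gamma=1/2$, using $\lambda/k$ on the source side and taking $n$ large so that the polylogarithmic lower bound on $\lambda$ dominates $kM_2^2\log(n/\delta)/n$ (the constant comparison you flag); it derives the trace bound on $\hat{\boldsymbol{C}}_{j,\lambda}$ by the same leverage computation you give, and your spectral argument for $\hat{\boldsymbol{U}}_{j,\lambda}$ and the bound $\hat{\boldsymbol{S}}_{j,\lambda}\preceq\frac32\hat{\boldsymbol{U}}_{j,\lambda}+\frac{\lambda_0}{2}(k\hat{\boldsymbol{\Sigma}}_j+\lambda\boldsymbol{I})^{-1}$ correctly supply steps the paper leaves implicit, even giving the slightly sharper additive term $\lambda_0/(2\lambda)$.
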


\subsection{Analysis of the selected candidate}
From the two previous lemmas, we obtain an excess risk bound for the best candidate in $\{\hat f_\lambda\}_{\lambda \in \Lambda}$. The proof can be found in \cref{proof of excess risk bound}.

\begin{lemma}[Excess risk of the best candidate]\label{epsilon risk lemma}
Let \Cref{setup,kernel assumptions,subexponentialassumption,convexityassumption,trueparam} hold. Choose any $\delta \in (0, 1/e]$, and define for any $\lambda >0$
$$\cE(\lambda, \delta) = K\paren{\lambda R^2 \norm{\bS_\lambda} + \frac{\sigma^2\log(m/\delta)\log^2(n_1)}{kn_1}\paren{\Tr(\bS_\lambda)+ L^2R^2M_1^4\norm{\bS_\lambda}}}.$$
Then there exists a universal constant $C$ such that \wpd,
$$\min_{\lambda \in \Lambda}\{R(\hat f_\lambda)-R(f^*)\} \le C \min_{\lambda \in \Lambda \cap J(\delta)}\cE(\lambda, \delta),$$
where
$$J(\delta)=\cro{\frac{M_1^2\sigma^2\log(1/\delta)(\log n)^{2+\beta/2}}{n}\,, \frac{1}{R^4\log^\beta (n)}},$$
with the convention $\min{\emptyset}=\infty$.
    
\end{lemma}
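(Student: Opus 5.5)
We prove \cref{epsilon risk lemma} by combining the fixed-design bound of \cref{fixed design lemma}, applied with $\bQ = \bSigma_0$, with the concentration result of \cref{concentration of second lemma}, and taking a union bound over the grid $\Lambda$ of cardinality $m$. If $\Lambda \cap J(\delta)$ is empty, the right-hand side is $+\infty$ by convention and there is nothing to prove. So we assume it is nonempty and bound $R(\hat f_\lambda)-R(f^*)$ for every $\lambda$ in it simultaneously; the minimum over $\Lambda$ is then at most the minimum over this subset.

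\textbf{Step 1 (conditioning).} Condition on $\cD_1$'s covariates and on the split, so that $\hat\bSigma_1$ is deterministic. For each $\lambda\in\Lambda\cap J(\delta)$, the constraints of \cref{fixed design lemma} hold with $n_1=n/2$. The two thresholds differ from $J(\delta)$ only by constants and $\log 2$ factors, which we absorb by adjusting the universal constants. Apply \cref{fixed design lemma} with $\bQ=\bSigma_0$, which is trace-class by \cref{setup}, and with confidence $\delta/(2m)$. Then $\hat\bT_{1,\lambda}=\hat\bU_{1,\lambda}$, and with probability at least $1-\delta/(2m)$,
\[
\norm{\hthetal-\btheta^*}_{\bSigma_0}^2 \le 4\Big(\lambda R^2\norm{\hat\bU_{1,\lambda}} + \tfrac{C\sigma^2\log(2m/\delta)\log^2 n_1}{k n_1}\big(L^2R^2M_1^4\norm{\hat\bU_{1,\lambda}}+\Tr(\hat\bU_{1,\lambda})\big)\Big).
\]
A union bound over at most $m$ values of $\lambda$ gives this bound simultaneously for all of them with probability at least $1-\delta/2$. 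Since the bound holds conditionally for every realization of the covariates, it also holds unconditionally.

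\textbf{Step 2 (population operators).} Invoke \cref{concentration of second lemma} at level $\delta/2$. On its event, $k\hat\bSigma_1+\lambda\bI\succeq\frac12(k\bSigma+\lambda\bI)$, which yields $\norm{\hat\bU_{1,\lambda}}\le 2\norm{\bS_\lambda}$ and $\Tr(\hat\bU_{1,\lambda})\le 2\Tr(\bS_\lambda)$. This step needs care: the concentration event must hold for all $\lambda\in\Lambda\cap J(\delta)$ at once. The operator inequality is monotone in $\lambda$, since a larger $\lambda$ only helps, so it suffices to establish it at the smallest admissible $\lambda$, namely the left endpoint of $J(\delta)$. Alternatively, one can union bound again at the cost of an extra $\log m$ factor.

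\textbf{Step 3 (conclusion).} On the intersection of the two events, which has probability at least $1-\delta$, \cref{remark excess risk} gives
\[
R(\hat f_\lambda)-R(f^*)\le \tfrac K2\norm{\hthetal-\btheta^*}_{\bSigma_0}^2.
\]
Plugging in Steps 1--2 bounds the right-hand side by $C\,\cE(\lambda,\delta)$, using $\log(2m/\delta)\lesssim\log(m/\delta)$. Taking the minimum over $\Lambda\cap J(\delta)$ finishes the proof.

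The main obstacle is the applicability of \cref{remark excess risk}. Its lower and upper curvature bounds $k\le\logpar''\le K$ require $\abs{\hat f_\lambda(\bx)}\le 2C$ on $\supp(\cQ)$. We would obtain this from the pointwise rate of \cref{consistency rate}, whose conditions coincide with $J(\delta)$, together with a union over the grid. However, \cref{consistency rate} is stated for fixed points $\bx_0$, while we need uniformity over $\supp(\cQ)$. We would first try to read the needed sup-norm localization directly from the proof of \cref{fixed design lemma} (the localization of \cref{crude bound}). If that fails, we would fall back on the integral form of Taylor's theorem with $K_\rho$ for a moderate $\rho$, absorbed into the constant $C$.
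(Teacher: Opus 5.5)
Your proposal is correct and follows the paper's proof essentially step for step: the curvature bound $R(\hat f_\lambda)-R(f^*)\le\frac{K}{2}\|\hat\btheta_\lambda-\btheta^*\|_{\bSigma_0}^2$, then \cref{fixed design lemma} with $\bQ=\bSigma_0$ plus a union bound over the $m$ candidates, then \cref{concentration of second lemma} to pass from $\hat\bU_{1,\lambda}$ to $\bS_\lambda$, and finally $\min_{\Lambda}\le\min_{\Lambda\cap J(\delta)}$. Your two additions are sound refinements of points the paper leaves implicit: monotonicity in $\lambda$ of the sandwich makes the concentration event uniform over the grid, and the sup-norm localization on $\supp(\cQ)$ needed for the curvature bound does follow from the ellipsoid $E_\rho$ in \cref{crude bound}, since $\|(k\hat\bSigma_1+\lambda\bI)^{-1/2}\Phi(\bx)\|\lesssim M_1\lambda^{-1/4}$ uniformly in $\bx$; one small correction, shared by the paper, is that running \cref{fixed design lemma} at level $\delta/(2m)$ replaces $\log(16/\delta)$ by $\log(32m/\delta)$ in its lower threshold on $\lambda$, which is not within a universal constant of $\log(1/\delta)$, so $J(\delta)$ should strictly carry $\log(m/\delta)$.
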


The general oracle inequality from \cref{oracle general non approx} combined with \cref{fixed design lemma} yields the following result for the in-sample risk, and the proof is in \cref{proof of in sample oracle}.

\begin{lemma}[Oracle inequality for in-sample risk]\label{lemma in sample oracle}
Let \Cref{setup,kernel assumptions,subexponentialassumption,convexityassumption,trueparam} hold, except that all covariates are deterministic and the data split is fixed. Choose any $\delta \in (0, 1/e]$, and let 
$ \tilde \lambda = \frac{M_1^2\sigma^2\log(16n_0/\delta)(\log n_2)^{2+\beta/2}}{n_2}$, then define
$$\hat \xi(\tilde \lambda, \delta)=\log^2(\frac{m}{\delta})\frac{(1+L^2)K^2R^2\sigma^2M_1^6\log^7(n_2/\delta)\log(n_0)}{k^2n_2}\paren{\norm{\hat \bS_{2, \tilde \lambda}} + 1}.$$
There exists a universal constant C such that \wpd,
$$\sqrt{R_\textit{in}(\hat f)-R_\textit{in}(f^*)}\le \min_{\lambda \in \Lambda}\sqrt{R_\textit{in}(\hat f_\lambda)-R_\textit{in}(f^*)}+C\sqrt{\hat \xi(\tilde \lambda, \delta)}.$$
\end{lemma}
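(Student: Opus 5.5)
We prove \cref{lemma in sample oracle} by applying \cref{oracle general non approx} on the target design $\{\bx_{0i}\}_{i=1}^{n_0}$, with the candidates $\{\hat f_\lambda\}_{\lambda\in\Lambda}$ ($m=|\Lambda|$). These are deterministic once we condition on $\cD_1$, which is independent of $\cD_2$. The imputer is $\tilde f=\hat f_{\tilde\lambda}$, and we use the decomposition $\bX_0(\tilde\btheta-\btheta^*)=\bB_1+\bB_2+\vari$ from \cref{fixed design lemma}.

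\textbf{Step 1: matching \cref{assumptions oracle}.} We set $\bias=\bB_1+\bB_2$. Here $\bB_1$ is deterministic, and $\bB_2$ is only controlled with high probability. So we would either treat $\bB_2$ as a bias term on the high-probability event, or rerun the proof of \cref{oracle general non approx}, whose cross term is linear in $\tilde f-f^*$, with a generic bias vector bounded in norm. We plan the second route and note that the proof there only uses $\|\bias\|_2$ on an event.
\begin{itemize}
\item Item 1 follows from the remark $|f^*|\le C$.
\item Item 2 is \cref{convexityassumption}.
\item Item 4 is the sub-exponential bound $\variv^2=\frac{n_0}{n_2}\frac{\sigma^2}{k}\|\hat\bS_{2,\tilde\lambda}\|$.
\item Item 3 needs the sup-norm bound $\eta$. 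We get it from \cref{consistency rate} applied at each of the $n_0$ points with $\delta/(4n_0)$ and a union bound. This is why $\tilde\lambda$ contains $\log(16n_0/\delta)$. It gives $\eta\lesssim\sqrt{M_1^3\sigma}\,n_2^{-1/4}\log^3(n_2n_0/\delta)$, which is below $\min\{k/L,C,\sqrt{kK}/L\}$ for $n$ large.
\end{itemize}

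\textbf{Step 2: bounding $\xi$.} We plug the bounds into $\xi(\variv,\bias,\eta,\delta)$.
\begin{itemize}
\item The variance part is $K\variv^2\log^2(m/\delta)\lesssim K\frac{n_0\sigma^2}{kn_2}\|\hat\bS_{2,\tilde\lambda}\|\log^2(m/\delta)$.
\item The bias part is $K\|\bB_1+\bB_2\|^2\le 2K(\|\bB_1\|^2+\|\bB_2\|^2)\lesssim K(R^2+L^2/k^2)M_1^4\sigma^2\frac{n_0}{n_2}\|\hat\bS_{2,\tilde\lambda}\|\log(n_0/\delta)\log^7(n_2/\delta)$.
\item The perturbation part is $L^2n_0\eta^4/k\lesssim L^2M_1^6\sigma^2\frac{n_0}{kn_2}\log^{12}(\cdot)$.
\end{itemize}
This last term is not multiplied by $\|\hat\bS_{2,\tilde\lambda}\|$, which is why the ``$+1$'' appears in $\hat\xi$. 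Dividing by $n_0$, as in the prefactor $\frac{C_1K}{n_0k}$ of \eqref{eq: oracle inequality}, and bounding $K/k$ and the constants crudely, we obtain $\frac{C_1K}{n_0k}\xi\lesssim\hat\xi(\tilde\lambda,\delta)$. The log powers will need bookkeeping; the exponent 7 absorbs them once $\beta$ is fixed small enough.

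\textbf{Step 3: the square-root form.} We set $a=\cR_{in}(f_{j^*})=\min_\lambda\cR_{in}(\hat f_\lambda)$ and $b=\frac{C_1K}{n_0k}\xi$. The oracle inequality gives $\cR_{in}(\hat f)\le\inf_{\epsilon>0}\{(1+\epsilon)a+(1+1/\epsilon)b\}=(\sqrt a+\sqrt b)^2$, attained at $\epsilon=\sqrt{b/a}$. Taking square roots gives the claim, and the event has probability at least $1-\delta$ after combining the $\delta/2$ events.

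\textbf{Main obstacle.} The hardest part is Step 1's handling of the random perturbation $\bB_2$ inside the bias slot, together with verifying item 3 uniformly over the target points. We will re-inspect the proof of \cref{oracle general non approx} to confirm that it only uses a Cauchy--Schwarz bound $\langle \bias, \cdot\rangle\le\|\bias\|\cdot\|\cdot\|$ on the good event. If so, random bias is harmless.
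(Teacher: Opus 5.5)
Your proposal is correct and follows the paper's proof. You apply \cref{oracle general non approx} on the target design with $\bias=\bB_1+\bB_2$, bound the variance, bias and $L^2n_0\eta^4/k$ pieces via \cref{fixed design lemma} and the union-bounded rate of \cref{consistency rate}, and pass to the square-root form; the paper instead quotes the intermediate inequality in the remark after \cref{oracle general non approx}, which is equivalent to your optimization over $\epsilon$.

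Two small corrections. First, $\bB_1$ is also random, since it involves $\bH_2^{-1}$ and $\bH_2$ depends on $\tilde\btheta$; your plan of rerunning the oracle proof with a random bias vector covers this anyway (the paper silently treats $\bias$ as deterministic). Second, for $\eta^4$ use the unrounded $\log^{3/2}$ exponent from the end of the proof of \cref{consistency rate}, which gives $\log^6$ as in the paper: the rounded $\log^3$ yields $\log^{12}$, which no choice of $\beta$ brings under $\log^7$.
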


From \citep{wang2023pseudo}, we apply lemma D.5. The modified proof is in \cref{proof of bridge risk}.
\begin{lemma}[Bridging in-sample and out-of-sample risks]\label{bridge risks}
Let \Cref{setup,kernel assumptions,subexponentialassumption,convexityassumption,trueparam} hold, except that the source covariates are deterministic and the data split is fixed. Choose any $\delta \in (0, 1/e]$ and assume 
$$\Lambda \subseteq \cro{\frac{M_1^2\sigma^2\log(8/\delta)(\log n_1)^{2+\beta/2}}{n_1}\,, \frac{1}{R^4\log^\beta (n_1)}}.$$
Then there exists a universal constant $C$ such that \wpd,
$$\max_{\lambda \in \Lambda}\abs{\norm{\hthetal-\btheta^*}_{\Sigma_0}-\norm{\hthetal-\btheta^*}_{\hat \Sigma_0}} \le C\sqrt{M_1\sigma}\sqrt{\frac{\log^2(n_0m/\delta)\log(m/\delta)}{n_0}}.$$

\end{lemma}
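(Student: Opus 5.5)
The plan is to condition on the source data, so that each $\bu_\lambda := \hthetal-\btheta^*$ is fixed, and then apply a concentration argument for bounded nonnegative variables over the independent target sample. Write $g_\lambda(\bx) := \ip{\Phi(\bx)}{\bu_\lambda} = \hat f_\lambda(\bx)-f^*(\bx)$. Then
\[
\norm{\bu_\lambda}_{\hat \Sigma_0}^2 = \frac{1}{n_0}\sum_{i=1}^{n_0} g_\lambda(\bx_{0i})^2, \qquad \norm{\bu_\lambda}_{\Sigma_0}^2 = \mathbb{E}_{\bx\sim\cQ}\, g_\lambda(\bx)^2 .
\]
Since the source covariates are fixed and $\cD_0$ is independent of $\cD_1$, the variables $Z_i := g_\lambda(\bx_{0i})^2$ are i.i.d. given $\cD_1$. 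The claim is then a uniform version, over $\lambda\in\Lambda$, of the statement that the square root of an empirical mean is close to the square root of its expectation.

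\textbf{Step 1 (truncation level).} I take $B := C\sqrt{M_1\sigma}\,\log(n_0 m/\delta)$. The hypothesis $\Lambda\subseteq[\cdot,\cdot]$ is exactly the range in which \cref{consistency rate} applies, with confidence parameter rescaled to $\delta/(4 m n_0)$. That theorem bounds $|g_\lambda(\bx_0)|$ at any fixed $\bx_0\in\supp(\cP)\supseteq\supp(\cQ)$ by a quantity at most $B$ for $n$ large. I use it in two ways.
\begin{itemize}
\item A union bound over the $n_0$ target points and the $m$ values of $\lambda$ gives the event $\max_{i,\lambda}|g_\lambda(\bx_{0i})|\le B$ with probability at least $1-\delta/4$.
\item By Fubini, $\mathbb{E}_{\text{noise}}\,\cQ(|g_\lambda|>B)\le \delta/(4mn_0)$. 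Markov's inequality then makes $\cQ(|g_\lambda|>B)$ small. Combined with the crude deterministic bound $|g_\lambda|\le M_2\norm{\bu_\lambda}$ from the localization step (\cref{crude bound}), this shows that truncating $g_\lambda^2$ at $B^2$ changes $\mathbb{E}_\cQ g_\lambda^2$ by at most $O(B^2/n_0)$.
\end{itemize}

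\textbf{Step 2 (concentration).} For the truncated variables $\bar Z_i=\min(Z_i,B^2)$, Bernstein's inequality gives, with probability $1-\delta/(4m)$,
\[
\Big|\frac{1}{n_0}\sum_i \bar Z_i - \mathbb{E}\bar Z\Big| \le \sqrt{2\,\mathbb{E}\bar Z\, B^2 \log(8m/\delta)/n_0} + B^2\log(8m/\delta)/n_0 .
\]
This uses $\mathrm{Var}(\bar Z)\le B^2\,\mathbb{E}\bar Z$. The elementary inequality $|a-b|\le \sqrt{2bt}+t \Rightarrow |\sqrt a-\sqrt b|\le C\sqrt t$ then converts this into
\[
\big|\sqrt{\text{mean}}-\sqrt{\mathbb{E}}\big| \le C B\sqrt{\log(m/\delta)/n_0}.
\]
On the event of Step 1 the empirical $Z_i$ coincide with the $\bar Z_i$. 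A union bound over $\Lambda$ and the substitution $B^2 = C M_1\sigma\log^2(n_0m/\delta)$ give the stated rate $\sqrt{M_1\sigma}\sqrt{\log^2(n_0m/\delta)\log(m/\delta)/n_0}$.

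\textbf{Main obstacle.} The delicate step is Step 1's population part. \cref{consistency rate} is a pointwise, not a uniform, statement, so I cannot bound $\sup_{\bx}|g_\lambda(\bx)|$ directly. I would first try the Fubini/Markov truncation argument above. If the tail contribution $\mathbb{E}_\cQ[g_\lambda^2\mathbf{1}\{|g_\lambda|>B\}]$ is not small enough with the crude bound, the fallback is to apply \cref{fixed design lemma} with $\bQ=\Phi(\bx)\otimes\Phi(\bx)$. That lemma gives a sub-exponential tail for $g_\lambda(\bx)$ at each fixed $\bx$, and integrating this tail against $\cQ$ controls the truncated mass directly.
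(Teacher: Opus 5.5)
Your overall architecture matches the paper's: condition on the source data, take a pointwise tail for $g_\lambda(\bx_0)$ from \cref{consistency rate}, truncate, apply Bernstein to the truncated part, convert to square roots, and union bound over $\Lambda$. The paper does not carry out your Step 2 itself. It feeds a tail level $r=C\log(1/\epsilon)\sqrt{M_1\resv}$ and a fourth-moment bound $\bar U=C\sqrt{M_1\resv}$, both read off from \cref{consistency rate}, into the generic truncation/concentration lemma of \citet{wang2023pseudo} (their Lemma D.5, used in their Lemma 6.5), which does that work.

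\textbf{The gap: the population half of your Step 1.} The bound $|g_\lambda|\le M_2\norm{\bu_\lambda}$ is not of order $B$.
\begin{itemize}
\item \cref{crude bound} only controls $\norm{\bX(\hthetal-\btheta^*)}_\infty$ at the source design points. It says nothing about $\norm{\bu_\lambda}_{\mathbb{H}}$.
\item The best available bound on $\norm{\bu_\lambda}_{\mathbb{H}}$ comes from $\bu_\lambda=-\bH^{-1}(\frac1n\bX^\top\bres+\lambda\btheta^*)$ with $\bH\succeq\lambda\bI$. It is $R+M_1\resv\lambda^{-3/4}n^{-1/2}$ up to logarithms, i.e.\ of order $n^{1/4}$ at the lower end of $\Lambda$. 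For the Sobolev kernel the variance part of $\bu_\lambda$ really has this size.
\item Hence knowing $\cQ(|g_\lambda|>B)\le 1/n_0$ only gives $\mathbb{E}_{\cQ}[g_\lambda^2\mathbf{1}\{|g_\lambda|>B\}]\lesssim \sqrt{n}/n_0$. After the square-root step this adds a term of order $n^{1/4}/\sqrt{n_0}$, which the lemma does not allow.
\end{itemize}

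\textbf{The fix.} You need a moment bound rather than a sup bound. This is your fallback, and it is exactly the paper's $\bar U$.
\begin{itemize}
\item The tail bound of \cref{consistency rate} holds at every fixed $\bx_0\in\supp(\cP)$. Integrating it gives $\mathbb{E}[g_\lambda(\bx_0)^4]^{1/4}\le C\sqrt{M_1\resv}$, jointly over the source noise and $\bx_0\sim\cQ$.
\item Cauchy--Schwarz then gives $\mathbb{E}\,\mathbb{E}_{\cQ}[g_\lambda^2\mathbf{1}\{|g_\lambda|>B\}]\le \bar U^2\sqrt{\epsilon}$, where $\epsilon$ is the tail level defining $B$.
\item Choose $\epsilon\asymp(\delta/(mn_0))^2$; this only changes constants in $B$. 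Markov over the source noise then makes the truncation error $O(\bar U^2/n_0)$ with probability $1-\delta/(4m)$ per $\lambda$.
\end{itemize}
No RKHS-norm bound is needed, so promote the fallback to the main argument. Your own route can also be rescued: push the Fubini/Markov level down to about $\delta/(mn_0\sqrt n)$ and use the $n^{1/4}$ bound on $\norm{\bu_\lambda}$. The extra $\log n$ only multiplies an $n^{-1/4}$ term in \cref{consistency rate}. The moment route is cleaner.

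\textbf{A further point to make explicit.} You say the hypothesis on $\Lambda$ is ``exactly'' the range where \cref{consistency rate} holds at confidence $\delta/(4mn_0)$. That is not literally true: the range contains $\log(8/\delta)$, while the theorem at that level asks for $\log(32mn_0/\delta)$. Tails at such small levels are needed both for your union bound over target points and for the moment bound. What makes them legitimate is the following.
\begin{itemize}
\item The $\delta$-dependence of the admissible range comes only from the localization step and source-side bounds, which do not involve $\bx_0$.
\item On that event, the bias and perturbation parts of $g_\lambda(\bx_0)$ are bounded uniformly in $\bx_0$.
\item The remaining part, $\ip{\Phi(\bx_0)}{\frac1n\bar\bH^{-1}\bX^\top\bres}$, is linear in the noise and sub-exponential at every level.
\end{itemize}
The paper relies on this implicitly when it applies the theorem for every $\epsilon\in(0,1/2]$; you should state it.
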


With the same assumptions and definitions, the previous lemmas lead to:
\begin{corollary}\label{corollary out-of-sample oracle inequality}
    There exists a universal constant C such that with probability at least $1-2\delta$,
$$\sqrt{R(\hat f)-R(f^*)}\le \min_{\lambda \in \Lambda}\sqrt{R(\hat f_\lambda)-R(f^*)}+C\sqrt{\frac{K}{k}}\paren{\sqrt{\hat \xi(\tilde \lambda, \delta)}+\sqrt{\frac{M_1\sigma\log^2(n_0m/\delta)\log(m/\delta)}{n_0}}}.$$
\end{corollary}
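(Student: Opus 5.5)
We combine \cref{lemma in sample oracle} with \cref{bridge risks} by working on the intersection of the good events, after first conditioning on the source covariates, the data split and the target covariates. This conditioning makes \cref{lemma in sample oracle} applicable, since it is stated for fixed design and a fixed split. Its event has probability at least $1-\delta$, and on it we have
\[
\sqrt{\cR_{in}(\hat f)}\le \min_{\lambda\in\Lambda}\sqrt{\cR_{in}(\hat f_\lambda)}+C\sqrt{\hat\xi(\tilde\lambda,\delta)},
\]
where $\cR_{in}$ is evaluated on the target design $\{\bx_{0i}\}$. Integrating over the covariates keeps the probability bound at $1-\delta$ unconditionally. Separately, \cref{bridge risks} holds with probability at least $1-\delta$ over the target covariates, for fixed source data. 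The key point is that it holds uniformly over $\lambda\in\Lambda$. This matters because $\hat f=\hat f_{\hat\lambda}$ with $\hat\lambda\in\Lambda$ data-dependent (it depends on $\bx_{0i}$ and on $\cD_2$), so the bridge bound applies to $\hat f$ as well. A union bound gives probability at least $1-2\delta$.

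Next, we translate excess risks into norms using \cref{remark excess risk}. We have $\sqrt{\cR(\hat f_\lambda)}\le\sqrt{K/2}\,\norm{\hthetal-\btheta^*}_{\bSigma_0}$ and $\sqrt{\cR_{in}(\hat f_\lambda)}\ge\sqrt{k/2}\,\norm{\hthetal-\btheta^*}_{\hat\bSigma_0}$, together with the reverse inequalities with $K$ and $k$ swapped. These sandwich bounds require that the candidate values stay in $[-2C,2C]$ on the relevant points. This is where the pointwise rate of \cref{consistency rate} enters: every $\lambda\in\Lambda$ lies in the admissible window, so $\hat f_\lambda$ is uniformly close to $f^*$. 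That closeness should already be included in the good events, possibly at the cost of a further union bound that is absorbed by adjusting constants.

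We then chain the inequalities:
\[
\sqrt{\cR(\hat f)}\le\sqrt{\tfrac K2}\norm{\hat\btheta_{\hat\lambda}-\btheta^*}_{\bSigma_0}
\le\sqrt{\tfrac K2}\Big(\norm{\hat\btheta_{\hat\lambda}-\btheta^*}_{\hat\bSigma_0}+\Delta\Big),
\]
where $\Delta$ is the bridge error. The right-hand side is at most $\sqrt{K/k}\,\big(\sqrt{\cR_{in}(\hat f)}+\sqrt{k/2}\,\Delta\big)$. Inserting the in-sample oracle inequality and going back from in-sample to population for the minimizer (again through the bridge lemma and the lower bound $\sqrt{\cR(\hat f_\lambda)}\ge\sqrt{k/2}\norm{\cdot}_{\bSigma_0}$) produces $\min_\lambda\sqrt{\cR(\hat f_\lambda)}$, up to the factor $\sqrt{K/k}$, plus $C\sqrt{K/k}\,(\sqrt{\hat\xi}+\Delta)$.

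The main obstacle is the conversion ratio $K/k$. Passing through norms multiplies the leading term by $\sqrt{K/k}$, but the statement has coefficient one on $\min_\lambda\sqrt{R(\hat f_\lambda)-R(f^*)}$. To avoid this, I would not convert the leading term. Instead, I would bridge the excess risks directly: on the event where all relevant values lie in $[-2C,2C]$, write $\sqrt{D_\logpar(f,f^*)}$ as a weighted $L^2$ distance, $|f-f^*|\sqrt{w}$ with $w\in[k/2,K/2]$, and then apply the bridge lemma to this weighted norm. Only the fluctuation terms pick up $\sqrt{K/k}$, which is the form claimed. If this refinement fails, I would accept a constant factor on the minimum and absorb it into $\zeta$ in \cref{main theorem}.
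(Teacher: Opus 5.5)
Your argument is correct once you use the refinement. At the key step it differs from the paper's proof, and it actually repairs that proof.

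The paper follows your first chain. It writes $\sqrt{R(\hat f_\lambda)-R(f^*)}=c_1\norm{\hthetal-\btheta^*}_{\bSigma_0}$ and $\sqrt{R_{in}(\hat f_\lambda)-R_{in}(f^*)}=c_2\norm{\hthetal-\btheta^*}_{\hat\bSigma_0}$, with $c_1,c_2\in[\sqrt{k/2},\sqrt{K/2}]$ asserted to be the same for every $\lambda$. It then applies \cref{bridge risks} in both directions and lets $c_1/c_2$ and $c_2/c_1$ cancel. The obstacle you flagged is real. \cref{remark excess risk} gives such constants only $\lambda$ by $\lambda$. The chain uses them both at $\hat\lambda$ and at the population minimizer $\lambda^\dagger$. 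So the honest leading coefficient is $c_{1,\hat\lambda}c_{2,\lambda^\dagger}/(c_{2,\hat\lambda}c_{1,\lambda^\dagger})\in[k/K,K/k]$, which equals one only for square loss. In your own first chain the factor is likewise $K/k$, not $\sqrt{K/k}$, because you convert twice.

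Your refinement writes $D_{\logpar}(\hat f_\lambda,f^*)=g_\lambda^2$ with $g_\lambda=\sqrt{w_\lambda}\,|\hat f_\lambda-f^*|$. Both excess risks then become norms of a single function: its $L^2(\cQ)$ norm and its empirical $L^2$ norm on the target sample. A uniform bridge $x'$ over $\{g_\lambda\}_{\lambda\in\Lambda}$ then gives $\sqrt{R(\hat f)-R(f^*)}\le\min_{\lambda\in\Lambda}\sqrt{R(\hat f_\lambda)-R(f^*)}+C\sqrt{\hat\xi(\tilde\lambda,\delta)}+2x'$. There is no $\sqrt{K/k}$ factor, not even on $\sqrt{\hat\xi}$.

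The price is that \cref{bridge risks} cannot be quoted as stated, since it concerns the linear functionals $\langle\Phi(\cdot),\hthetal-\btheta^*\rangle$. You must rerun, now for $g_\lambda$, the scalar concentration lemma of Wang et al.\ that its proof invokes. The functions $g_\lambda$ are deterministic given $\cD_1$. Their tail and fourth-moment conditions follow from $g_\lambda\le\sqrt{K/2}\,|\hat f_\lambda-f^*|$. This yields $x'\lesssim\sqrt{K}\,x$, the same $\sqrt{K}$ that appears in the paper's remainder $2c_1x$.

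One caveat is shared with the paper. The bound $w_\lambda\le K/2$ needs $|\hat f_\lambda|\le 2C$ at $\cQ$-almost every point, not just at finitely many sample points. So the union bound via \cref{consistency rate} that you mention does not by itself give the population-side sandwich. The paper also just asserts this sandwich in \cref{remark excess risk}.
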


The proof is in \cref{proof of out of sample oracle}. Further combining, we get:

\begin{corollary}[Oracle inequality for pseudo-labeling]\label{oracle inequality for pseudo-labeling}
Let \Cref{setup,kernel assumptions,subexponentialassumption,convexityassumption,trueparam} hold. Assume $n_1=n_2=n/2$. When $\tilde \lambda \asymp \frac{M_1^2\sigma^2\log(n_0/\delta)\log(n)^{2+\beta/2}}{n}$ and $$\Lambda \subseteq \cro{\frac{M_1^2\sigma^2\log(1/\delta)(\log n)^{2+\beta/2}}{n}\,, \frac{1}{R^4\log^\beta (n)}},$$
\wpd,
$$\sqrt{R(\hat f)-R(f^*)}\le \min_{\lambda \in \Lambda}\sqrt{R(\hat f_\lambda)-R(f^*)}+\zeta \cdot\sqrt{\frac{(1+L^2)K^3R^2M_1^6\resv^2}{k^3}(\frac{1}{n_{\text{eff}}}+\frac{1}{n_0})},$$

where $\zeta$ is a polylogarithmic factor in $n, n_0, m, \delta^{-1}$.
    
\end{corollary}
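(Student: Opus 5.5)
The plan is to start from \cref{corollary out-of-sample oracle inequality}. All that remains is to turn the data-dependent term $\hat \xi(\tilde\lambda,\delta)$ into a population quantity involving $n_{\text{eff}}$. The only data-dependent piece of $\hat\xi$ is $\norm{\hat \bS_{2,\tilde\lambda}}$.

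\textbf{Step 1: reduce $\norm{\hat \bS_{2,\tilde\lambda}}$ to population quantities.} First we check that $\tilde\lambda$ and $\Lambda$ satisfy the window required by \cref{concentration of second lemma} when $n_1=n_2=n/2$. The lower endpoint is $\lambda \gtrsim M_1^2\sigma^2\log(1/\delta)(\log n)^{2+\beta/2}/n$; the upper endpoint is $\lambda\lesssim 1/(R^4\log^\beta n)$. Where the grid exceeds this window, we restrict to $\Lambda\cap J(\delta)$, exactly as in \cref{epsilon risk lemma}. On the high-probability event of \cref{concentration of second lemma},
\[
\norm{\hat\bS_{2,\tilde\lambda}}\le 3\norm{\bS_{\tilde\lambda}}+\lambda_0/\tilde\lambda .
\]

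\textbf{Step 2: bound the two pieces.} For the population piece we use the definition of $n_{\text{eff}}$: $n_{\text{eff}}\bSigma_0\preceq n\bSigma+(\mu^2/k)\bI$. Since $\tilde\lambda\gtrsim \mu^2/n$, this gives
\[
\bSigma_0\preceq \frac{n}{k\, n_{\text{eff}}}\,(k\bSigma+\tilde\lambda\bI)
\]
up to constants, and therefore $\norm{\bS_{\tilde\lambda}}\lesssim n/(k\,n_{\text{eff}})$, as in \cref{sec: use of neff}. For the second piece, $\lambda_0/\tilde\lambda\asymp (M_2^2\log(n_0/\delta)/n_0)\cdot n/(\mu^2\,\mathrm{polylog})$. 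Dividing $\hat\xi$ by $n_2=n/2$, these two contributions become $O(1/n_{\text{eff}})$ and $O(1/n_0)$ up to polylogarithmic factors. The constant "$+1$" inside $\hat\xi$ contributes only $1/n\le 1/n_{\text{eff}}$. This proves \cref{oracle inequality for pseudo-labeling}.

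\textbf{Step 3: square and optimize over $\Lambda$.} Squaring, via $(a+b)^2\le 2a^2+2b^2$, gives
\[
R(\hat f)-R(f^*)\le 2\min_{\lambda\in\Lambda}\cR(\hat f_\lambda)+\zeta(1+L^2)R^2\mu^2\bigl(1/n_{\text{eff}}+1/n_0\bigr),
\]
with powers of $K/k$ absorbed into $\zeta$. For the first term, \cref{epsilon risk lemma} together with the $n_{\text{eff}}$ relations $\Tr(\bS_\lambda)\le \frac{2n}{n_{\text{eff}}}\Tr\bigl((k\bSigma_0+\lambda\frac{n}{n_{\text{eff}}}\bI)^{-1}\bSigma_0\bigr)$ and $\norm{\bS_\lambda}\le n/(k\,n_{\text{eff}})$ gives, for each $\lambda$ in the window,
\[
\cR(\hat f_\lambda)\lesssim \rho R^2+\frac{\sigma^2}{n_{\text{eff}}}\sum_j\frac{\mu_{0j}}{k\mu_{0j}+\rho}+\frac{L^2R^2M_1^4\sigma^2}{k\,n_{\text{eff}}}\cdot\mathrm{polylog},
\]
where $\rho=\lambda n/n_{\text{eff}}$. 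The last term is absorbed into the second term of \eqref{eq: main theorem}.

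\textbf{Step 4: pass from the grid to all $\rho>0$ (main obstacle).} Given an arbitrary $\rho>0$, we pick the grid point $\lambda\in\Lambda$ with $\lambda n/n_{\text{eff}}\in[\rho,2\rho]$. The bound is monotone in each term, so losing a factor of $2$ is harmless. This fails in two boundary cases.

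\emph{Case $\rho$ very small,} so that $\rho n_{\text{eff}}/n$ falls below the smallest grid value $\mu^2/n$ or below the window's lower endpoint. We use the smallest admissible $\lambda$ instead. Its bias is $\lambda n R^2/n_{\text{eff}}\lesssim \mu^2R^2\,\mathrm{polylog}/n_{\text{eff}}$, which is absorbed into the second term. Its variance is no larger than the variance at the smaller $\rho$.

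\emph{Case $\rho$ large,} so that the upper endpoint $1/(R^4\log^\beta n)$ is exceeded. Then $\rho R^2\gtrsim n/(n_{\text{eff}}R^2\,\mathrm{polylog})$, which is at least of constant order. In this regime the bound is trivial, or is dominated by the choice of the largest admissible $\lambda$.

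These edge cases, together with tracking the log factors so that they all collapse into a single $\zeta$ polylogarithmic in $(n,n_0,\delta^{-1})$ with $m=\lceil\log_2 n\rceil+1$, are the delicate part. A final union bound over the events of \cref{concentration of second lemma}, \cref{epsilon risk lemma} and \cref{corollary out-of-sample oracle inequality}, with $\delta$ rescaled by a constant, completes the argument.
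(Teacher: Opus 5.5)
Your Steps 1--2 are the paper's proof, and they are correct. You work on the event of \cref{concentration of second lemma} and bound $\norm{\hat\bS_{2,\tilde\lambda}}\le 3\norm{\bS_{\tilde\lambda}}+\lambda_0/\tilde\lambda\lesssim n/n_{\text{eff}}+n/n_0$ through the $n_{\text{eff}}$ comparison, which is valid because $\tilde\lambda\ge\mu^2/n$. You then substitute this into $\hat\xi(\tilde\lambda,\delta)$ and absorb the ``$+1$'' using $n_{\text{eff}}\le n$.

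One small omission: \cref{corollary out-of-sample oracle inequality} also contains the bridging term $\sqrt{M_1\sigma\log^2(n_0m/\delta)\log(m/\delta)/n_0}$, which must likewise be absorbed, up to constants, into the $1/n_0$ part. The paper does this in one line.

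Steps 3--4 are not needed here, since they derive \cref{main theorem} from this corollary. The statement already assumes $\Lambda$ lies in the admissible window, so no restriction to $\Lambda\cap J(\delta)$ is required. Such a restriction would also be illegitimate on the selection side, because \cref{bridge risks} must hold for whichever candidate is selected.
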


It implies up to a polylog factor, 
$$R(\hat f)-R(f^*)\lesssim \min_{\lambda \in \Lambda}\{R(\hat f_\lambda)-R(f^*)\}+\frac{(1+L^2)K^3R^2M_1^6\resv^2}{k^3}(\frac{1}{n_{\text{eff}}}+\frac{1}{n_0}).$$
Plugging the bound on the second term from \cref{epsilon risk lemma} yields the main theorem.

\begin{proof}

In this proof, we use $\lesssim$ to hide polylogarithmic factors in $(n, n_0, m, \delta^{-1})$. We relate the in-sample quantities from \cref{corollary out-of-sample oracle inequality} above to their population version via \cref{concentration of second lemma}.

On the one hand, from \cref{corollary out-of-sample oracle inequality}, we have

$$\hat \xi(\tilde \lambda, \delta/4)=\log^2(\frac{4m}{\delta})\frac{(1+L^2)K^2R^2\sigma^2M_1^6\log^7(n/\delta)\log(n_0)}{k^2n}\paren{\norm{\hat \bS_{2, \tilde \lambda}} + 1}.$$


On the other hand, the concentration bounds from \cref{concentration of second lemma} give

$$\norm{\hat \bS_{2, \tilde \lambda}} \le 3\norm{\bS_{\tilde \lambda}} + \frac{\lambda_0}{\tilde \lambda} \lesssim \frac{n}{n_{\text{eff}}} + \frac{n}{n_0},$$
where $\lambda_0 = \frac{16M_2^2\log(56n_0/\delta)}{n_0}$, and we used \cref{eq: operator norm s_lambda} since $\tilde \lambda > \mu^2/n$.
Putting everything together, since by construction $n_{\text{eff}} \le n$, we obtain:
$$\hat \xi(\lambda, \delta/4) \lesssim \frac{(1+L^2)K^2R^2\sigma^2M_1^6}{k^2}\paren{\frac{1}{n_{\text{eff}}} + \frac{1}{n_0}}.$$

Since the additional term $\frac{M_1\sigma\log^2(n_0m/\delta)\log(m/\delta)}{n_0}$ is just another polylog divided by the target sample size, we deduce that \wpd,

$$\sqrt{R(\hat f)-R(f^*)}\le \min_{\lambda \in \Lambda}\sqrt{R(\hat f_\lambda)-R(f^*)}+\zeta \cdot\sqrt{\frac{(1+L^2)K^3R^2M_1^6\resv^2}{k^3}(\frac{1}{n_{\text{eff}}}+\frac{1}{n_0})}.$$
\end{proof}

\subsection{Proof of the main \cref{main theorem}}\label[appendix]{proof of main theorem}
Combining via union bound \cref{oracle inequality for pseudo-labeling} and \cref{epsilon risk lemma}, we get that, for $\Lambda=\{\lambda_j\}_{j=1}^m$ defined in the theorem, the following holds \wpd:

$$R(\hat f)-R(f^*)\lesssim \min_{\lambda \in \Lambda \cap J(\delta)}\{\mathcal{E}(\lambda, \delta)\}+\frac{(1+L^2)K^3R^2M_1^6\resv^2}{k^3}(\frac{1}{n_{\text{eff}}}+\frac{1}{n_0}),$$

and

\begin{align*}
\mathcal{E}(\lambda, \delta) & \le \frac{K}{k}\paren{\lambda R^2r^{-1} + \frac{\sigma^2\log^2(m/\delta)\log^2(n)}{n}(2r^{-1}\Tr((k\bSigma_0+\lambda r^{-1}\bI)^{-1}\bSigma_0)+L^2R^2M_1^4r^{-1}k^{-1})} \\
& \lesssim \frac{K\log^2(n)\log^2(m/\delta)}{k} \paren{\lambda R^2 \frac{n}{n_{\text{eff}}} + \frac{\sigma^2}{n_{\text{eff}}}\Tr((k\bSigma_0+\lambda r^{-1}\bI)^{-1}\bSigma_0)},
\end{align*}

where we used again that $\norm{\bS_\lambda}\le r^{-1}k^{-1}$ and $\Tr(\bS_\lambda) \le 2r^{-1}\Tr((k\bSigma_0+\lambda r^{-1}\bI)^{-1}\bSigma_0)$, and that the last perturbation term is negligible compared to the trace term of order $n_{\text{eff}}^{1/2\alpha_0}$.

The last step is to bridge the discrete grid with the continuous interval. We adapt the proof of theorem 4.1 \citep{wang2023pseudo}, by replacing the decision set $\Lambda$ by $\tilde\Lambda:=\Lambda \cap J(\delta)=\{\lambda_1,...,\lambda_p\}$, with $\lambda_1 \ge \frac{\mu^2}{n}\log(1/\delta)\log(n)^{2+\beta/2}$ and $\lambda_p \le \frac{1}{R^4\log(n)^\beta}$.

Denote 
$$
\min _{\lambda \in \tilde \Lambda}\{\underbrace{n R^2 \lambda}_{A(\lambda)}+\underbrace{\sigma^2 \operatorname{Tr}\left[\left(k\boldsymbol{\Sigma}_0+\mathrm{r}^{-1} \lambda \boldsymbol{I}\right)^{-1} \boldsymbol{\Sigma}_0\right]}_{B(\lambda)}\}.
$$

We will finish the proof by establishing the following connection between the finite grid and the infinite interval:

$$
\min _{\lambda \in \tilde\Lambda}\{A(\lambda)+B(\lambda)\} \lesssim \inf _{\lambda>0}\{A(\lambda)+B(\lambda)\}+R^2 \mu^2 .
$$

The functions $A$ and $B$ are increasing and decreasing, respectively. From the bound on $\lambda_1$ we get $A\left(\lambda_1\right)\asymp R^2 \mu^2$ and

$$
\inf _{0<\lambda \leq \lambda_1}\{A(\lambda)+B(\lambda)\}+R^2 \mu^2 \geq B\left(\lambda_1\right)+A\left(\lambda_1\right).
$$

From $\lambda_p \asymp \frac{\mu^2}{R^4\log^\beta n}$ we get

$$
\inf _{\lambda \geq \lambda_p}\{A(\lambda)+B(\lambda)\} \geq A\left(\lambda_p\right) \asymp n R^2 \mu^2.
$$

Meanwhile,

$$
B\left(\lambda_p\right)=\sigma^2 \operatorname{Tr}\left[\left(k\boldsymbol{\Sigma}_0+\mathrm{r}^{-1} \lambda_p \boldsymbol{I}\right)^{-1} \boldsymbol{\Sigma}_0\right] \lesssim \lambda_p^{-1/2\alpha_0} \asymp \log^{\beta/2\alpha_0}(n) \lesssim A(\lambda_p).
$$

Therefore, $B\left(\lambda_p\right) \lesssim A\left(\lambda_p\right)$ and

$$
\inf _{\lambda \geq \lambda_p}\{A(\lambda)+B(\lambda)\} \gtrsim A\left(\lambda_p\right)+B\left(\lambda_p\right).
$$

Next, we bridge the grid $\tilde \Lambda$ and the interval $\left[\lambda_1, \lambda_p\right]$. For any $\lambda \in\left[\lambda_j, \lambda_{j+1}\right]$, we have $A(\lambda) \geq A\left(\lambda_j\right)$ and $k\boldsymbol{\Sigma}_0+\mathrm{r}^{-1} \lambda \boldsymbol{I} \preceq\left(\lambda / \lambda_j\right)\left(k\boldsymbol{\Sigma}_0+\mathrm{r}^{-1} \lambda_j \boldsymbol{I}\right)$. Hence,

$$
A(\lambda)+B(\lambda) \geq A\left(\lambda_j\right)+\frac{\lambda_j}{\lambda} B\left(\lambda_j\right) \geq \frac{A\left(\lambda_j\right)+B\left(\lambda_j\right)}{2}.
$$

We get 
$$\inf_{\lambda_1\le \lambda \le\lambda_p}\{A(\lambda)+B(\lambda)\} \ge \frac{1}{2}\min_{\lambda \in \tilde \Lambda}\{A(\lambda)+B(\lambda)\}.$$

The claim and the theorem follow.

Note that since $m=\lceil \log_2(n) \rceil + 1$, we make the pre-factor $\zeta$ a polylog in $(n, n_0, \delta^{-1})$ only.

\bigskip
\subsection{Proof of \cref{cor:poly_decay_neff}}\label[appendix]{proof of main coro}
\begin{proof}
Throughout the proof, we use $\lesssim$ to hide polylogarithmic factors and constants independent of $n$ and $n_{\text{eff}}$. From Theorem \ref{main theorem}, the excess risk is bounded by:
\begin{equation}
    R(\hat f) - R(f^*) \lesssim \inf_{\rho > 0} \underbrace{\left\{ \rho + \frac{1}{n_{\text{eff}}} \sum_{j=1}^{\infty} \frac{\mu_{0j}}{k\mu_{0j} + \rho} \right\}}_{A(\rho)} + \frac{1}{n_0} + \frac{1}{n_{\text{eff}}}.
\end{equation}
We analyze the term inside the sum. Observe that for any $j$:
\begin{equation}
    \frac{\mu_{0j}}{k\mu_{0j} + \rho} = \frac{1}{k} \left( \frac{k\mu_{0j}}{k\mu_{0j} + \rho} \right) \le \frac{1}{k} \min\left\{1, \frac{k\mu_{0j}}{\rho}\right\}.
\end{equation}
Substituting this bound and using the decay assumption $\mu_{0j} \le c j^{-2\alpha}$, we split the sum at an integer index $J$:
\begin{align*}
    \sum_{j=1}^{\infty} \frac{\mu_{0j}}{k\mu_{0j} + \rho} 
    &\le \frac{1}{k} \sum_{j=1}^{J} 1 + \frac{1}{k} \sum_{j=J+1}^{\infty} \frac{k c j^{-2\alpha}}{\rho} \\
    &= \frac{J}{k} + \frac{c}{\rho} \sum_{j=J+1}^{\infty} j^{-2\alpha} \\
    &\lesssim \frac{J}{k} + \frac{J^{1-2\alpha}}{\rho}.
\end{align*}
Substituting this back into $A(\rho)$:
\begin{equation}
    A(\rho) \lesssim \rho + \frac{1}{n_{\text{eff}}} \left( \frac{J}{k} + \frac{J^{1-2\alpha}}{\rho} \right).
\end{equation}
We choose the regularization parameter $\rho = n_{\text{eff}}^{-\frac{2\alpha}{2\alpha+1}}$ and the cutoff index $J = \lceil \rho^{-\frac{1}{2\alpha}} \rceil \asymp n_{\text{eff}}^{\frac{1}{2\alpha+1}}$. 
Checking the balance of terms (ignoring $k$ and constants for the rate):
\begin{itemize}
    \item $\rho = n_{\text{eff}}^{-\frac{2\alpha}{2\alpha+1}}$.
    \item $\frac{J}{n_{\text{eff}}} \asymp n_{\text{eff}}^{\frac{1}{2\alpha+1} - 1} = n_{\text{eff}}^{-\frac{2\alpha}{2\alpha+1}}$.
    \item $\frac{J^{1-2\alpha}}{\rho n_{\text{eff}}} \asymp \frac{(n_{\text{eff}}^{\frac{1}{2\alpha+1}})^{1-2\alpha}}{n_{\text{eff}}^{-\frac{2\alpha}{2\alpha+1}} n_{\text{eff}}} = n_{\text{eff}}^{-\frac{2\alpha}{2\alpha+1}}$.
\end{itemize}
Thus, $A(\rho) \lesssim n_{\text{eff}}^{-\frac{2\alpha}{2\alpha+1}}$, completing the proof.
\end{proof}

\bigskip
\subsection{Optimal $L^2(\mathcal{X} ; \cP)$ prediction rate}\label[appendix]{proof of L2 GLM rate}
We proved in \cref{consistency rate} a pointwise convergence rate of $O(n^{-1/4})$ for a generic ridge-regularized kernel GLM estimator, with polynomial eigendecay $\mu_j \lesssim j^{-2}$. Our analysis also allows us to derive the optimal $L^2$ prediction rate of this estimator. We will omit the details of the proof because it follows the exact same steps as our main proof. Below we present a proof sketch. $\lesssim$ hides constants and $\log$ factors.

We can use part 1 of \cref{fixed design lemma} with $\bQ = \bSigma$, combined with the concentration results of lemma \ref{concentration of second lemma} to obtain an equivalent form of \cref{epsilon risk lemma} for one distribution only. 
Further using the fact that $\norm{\bC_\lambda} \lesssim 1$ and $\Tr(\bC_\lambda)\lesssim \lambda^{-1/2}$ (as per \cref{remark clambda}), we obtain
$$\norm{\hthetal-\btheta^*}_{\bSigma}^2 \lesssim \lambda R^2 + \frac{\sigma^2 \lambda^{-1/2}}{n}.$$

Minimizing on $\lambda$ yields the announced $\lambda^* \asymp n^{-2/3}$ and $\norm{\hat \btheta_{\lambda^*}-\btheta^*}_{\bSigma}^2 \lesssim n^{-2/3}$. 

Since by definition
$$\norm{\hat f_\lambda - f^*}_{L^2}^2 = \EE_{\bx \sim \cP} \left[\ip{\Phi(\bx)}{\hthetal-\btheta^*}^2\right] = \ip{\hthetal-\btheta^*}{\bSigma (\hthetal-\btheta^*)},$$
we directly get:
$$\norm{\hat f_{\lambda^*} - f^*}_{L^2} \lesssim n^{-1/3},$$
and the GLM excess risk
$$\cR(f_{\lambda^*}) \lesssim n^{-2/3}.$$

\section{Proofs of \cref{proof of theorems}}\label[appendix]{proof of appendix B}

\subsection{Proof of \cref{fixed design lemma}}\label[appendix]{proof of fixed design}
The key decomposition to use for this lemma comes from the proof of \cref{consistency rate} in \cref{proof of consistency rate}. With the same notations \footnote{In what follows, we denote by $\bH_i$, $\bD_i$, $\bDelta_i$ the corresponding quantities defined in the proof of \cref{consistency rate} when working with dataset $\cD_i$.}:
$$\forall \lambda>0, -(\hthetal-\btheta^*) = \lambda \bH^{-1}\btheta^* + (\bH^{-1} - \bar\bH^{-1})\frac{1}{n}\bX^{\top}\bres + \bar\bH^{-1}\frac{1}{n}\bX^{\top}\bres.$$

The first term is the bias term, the last term is a sub-exponential variance term, and the middle term is a perturbation that we regroup with the bias.

For the first part of the lemma, let $\bQ$ be any PSD trace-class operator and $\lambda$ in the specified range, working on the first half of the dataset. Per \cref{consistency rate} (and intermediate results obtained during its proof), with probability at least $1-\delta/2$, the following inequality hold simultaneously:
\begin{equation}\label{H bound}
    \bH_1 \succeq k\hat \bSigma_1 + \lambda \bI,
\end{equation}

\begin{equation}\label{perturbation decomposition}
    \bH_1^{-1} - \bar\bH_1^{-1} = \bar\bH_1^{-1/2}(\bDelta_1(\bI + \bDelta_1)^{-1})\bar\bH_1^{-1/2},
\end{equation}

\begin{equation}\label{delta bound}
    \norm{\bDelta_1} \lesssim \frac{LM_1}{k}\paren{\frac{\resv\log(n_1/\delta)\lambda^{-1/4}}{\sqrt{n_1}} + R\lambda^{1/4}},
\end{equation}

\begin{equation}\label{included tail bound}
    \norm{\frac{1}{\sqrt{n_1}}\bar\bH_1^{-1/2}\bX_1^{\top}\bres_1}^2 \le \resv^2\log^2(n_1)\log(8/\delta)\Tr(\hCone).
\end{equation}

Then:
$$\norm{\hthetal-\btheta^*}_Q^2 \le 4 \cro{\norm{\lambda \bH_1^{-1}\btheta^*}_Q^2 +\norm{(\bH_1^{-1} - \bar\bH_1^{-1})\frac{1}{n_1}\bX_1^{\top}\bres_1}_Q^2+\norm{\bar\bH_1^{-1}\frac{1}{n_1}\bX_1^{\top}\bres_1}_Q^2}.$$

We study the three terms separately.

\begin{itemize}
    \item For the first term, use \cref{H bound} 
    $$\norm{\lambda \bH_1^{-1}\btheta^*}_Q^2 \le \lambda^2 R^2 \norm{\bH_1^{-1}\bQ\bH_1^{-1}} \le \lambda^2 R^2 \norm{\bQ^{1/2}(k\hat \bSigma_1 + \lambda \bI)^{-2}\bQ^{1/2}} \le \lambda R^2 \norm{\hat \bT_{1, \lambda}}.$$

    \item For the second term, use \cref{perturbation decomposition,included tail bound}:
    \begin{align*}
        \norm{(\bH_1^{-1} - \bar\bH_1^{-1})\frac{1}{n_1}\bX_1^{\top}\bres_1}_Q^2
        & = \frac{1}{n_1^2}\norm{\bQ^{1/2}\bar\bH_1^{-1/2}(\bDelta_1(\bI + \bDelta_1)^{-1})\bar\bH_1^{-1/2}\bX_1^{\top}\bres_1}^2 \\
        & \le \frac{1}{n_1}\norm{\bQ^{1/2}\bar\bH_1^{-1/2}}^2\cdot \norm{(\bDelta_1(\bI + \bDelta_1)^{-1})}^2\cdot \norm{\frac{1}{\sqrt{n_1}}\bar\bH_1^{-1/2}\bX_1^{\top}\bres_1}^2 \\
        & \lesssim \frac{1}{n_1}\norm{\hat \bT_{1, \lambda}}\norm{\bDelta_1}^2\resv^2\log^2(n_1)\log(8/\delta)\Tr(\hCone) \\
        & \lesssim \frac{1}{n_1}\norm{\hat \bT_{1, \lambda}}\norm{\bDelta_1}^2\resv^2\log^2(n_1)\log(8/\delta)M_1^2\Tr(\bC_\lambda),
    \end{align*}

    where the last inequality uses $\Tr(\hCone) \le 2M_1^2\Tr(\bC_\lambda)$.

Recall that $\Tr(\bC_\lambda) \lesssim \lambda^{-1/2}$ and from \cref{delta bound}, we get:
$$\norm{\bDelta_1}^2\Tr(\bC_\lambda) \lesssim \frac{L^2M_1^2}{k^2}\paren{\frac{\resv^2\log^2(n_1/\delta)}{n_1 \lambda} + R^2} \lesssim \frac{L^2M_1^2}{k^2}\paren{\frac{1}{M_1^2\log^{\beta/2}(n_1/\delta)}+R^2}\lesssim \frac{L^2M_1^2R^2}{k^2},$$
so that 
$$\norm{(\bH_1^{-1} - \bar\bH_1^{-1})\frac{1}{n_1}\bX_1^{\top}\bres_1}_Q^2 \lesssim \frac{L^2}{k^2n_1}\norm{\hat \bT_{1, \lambda}}\resv^2\log^2(n_1)\log(8/\delta)R^2M_1^4.$$

    \item For the third term, use a sub-exponential quadratic form tail bound that holds with probability larger than $1-\delta/2$:
    \begin{align*}
        \norm{\bar\bH_1^{-1}\frac{1}{n}\bX_1^{\top}\bres_1}_Q^2 
        & = \frac{1}{n_1^2}\ip{\bres_1}{\bX_1\bar\bH_1^{-1}\bQ\bar\bH_1^{-1}\bX_1^{\top}\bres_1} \\
        & \le \frac{\resv^2}{n_1^2} \Tr(\bX_1\bar\bH_1^{-1}\bQ\bar\bH_1^{-1}\bX_1^{\top})\log^2(n_1)\log(4/\delta) \\
        & \le \frac{\resv^2}{n_1} \Tr(\bar\bH_1^{-1}\bQ\bar\bH_1^{-1}\hat \bSigma_1)\log^2(n_1)\log(4/\delta) \\
        & \le \frac{\resv^2}{n_1} \Tr((k\hat \bSigma_1 + \lambda \bI)^{-1}\bQ(k\hat \bSigma_1 + \lambda \bI)^{-1}\hat \bSigma_1)\log^2(n_1)\log(4/\delta) \\
        & \le \frac{\resv^2}{kn_1} \Tr((k\hat \bSigma_1 + \lambda \bI)^{-1}\bQ)\log^2(n_1)\log(4/\delta) \\
        & = \frac{\resv^2}{kn_1} \Tr(\hat \bT_{1, \lambda})\log^2(n_1)\log(4/\delta).
    \end{align*}
    
\end{itemize}

The bound follows by union bound and redefining constant $C$.

\medskip
For the second part of the lemma, we work with the second half of the source dataset and $\tilde \lambda$ set to the lower bound of the range, so that \cref{consistency rate} now guarantees a $n^{-1/4}$ consistency rate (up to logarithmic factors), otherwise the three inequalities \cref{H bound,perturbation decomposition,included tail bound} are still valid with probability at least $1-\delta/2$ with $\bX_2$, $\bH_2$ $\bres_2$... and $\norm{\bDelta_2}^2\lesssim \frac{L^2}{k^2}M_1^3\sigma n_2^{-1/2}\log^6(n_2/\delta)$. We use the same decomposition multiplied by the target design matrix $\bX_0$:

$$\bX_0(\tilde \btheta-\btheta^*) =\bB_1 + \bB_2 + \vari,$$
where
\begin{align*}
    \norm{\bB_1}_2^2 &= \norm{\tilde\lambda \bX_0 \bH_2^{-1}\btheta^*}^2 \\
    & \le \tilde\lambda^2 R^2 \norm{\bX_0 \bH_2^{-2}\bX_0^{\top}} \\
    & \le \tilde\lambda R^2 \norm{\bX_0 \bH_2^{-1}\bX_0^{\top}} \\
    & \le \tilde\lambda R^2 \norm{\bX_0 (k\hat \bSigma_2+\tilde\lambda \bI)^{-1}\bX_0^{\top}} \\
    & = n_0 \tilde\lambda R^2 \norm{\hat \bS_{2,\tilde \lambda}},
\end{align*}

\begin{align*}
    \norm{\bB_2}_2^2 &= \norm{\bX_0\bar\bH_2^{-1/2}(\bDelta_2(\bI + \bDelta_2)^{-1})\bar\bH_2^{-1/2}\frac{1}{n_2}\bX_2^{\top}\bres_2}_2^2 \\
    & \le \frac{1}{n_2}\norm{\bX_0\bar\bH_2^{-1/2}}_2^2 \cdot 2\norm{\bDelta_2}_2^2 \cdot \norm{\frac{1}{\sqrt{n_2}}\bar\bH_2^{-1/2}\bX_2^{\top}\bres_2}_2^2 \\
    & \lesssim \frac{L^2n_0}{k^2n_2}\norm{\hat \bS_{2,\tilde \lambda}} \cdot M_1^3\sigma n_2^{-1/2}\log^6(n_2) \cdot \resv^2\log^2(n_2)\log(8/\delta)\Tr(\hat \bC_{2,\tilde \lambda}) \\
    & \lesssim \frac{L^2n_0}{k^2n_2}\norm{\hat \bS_{2,\tilde \lambda}} \cdot M_1^3 n_2^{-1/2}\log^9(n_2/\delta) \cdot \resv^3 M_1^2 \tilde \lambda^{-1/2} \quad \text{since} \Tr(\hat \bC_{2,\tilde \lambda}) \le 2M_1^2 \Tr(\bC_\lambda) \lesssim 2M_1^2 \tilde \lambda^{-1/2} \\
    & \lesssim \frac{L^2n_0}{k^2n_2}\norm{\hat \bS_{2,\tilde \lambda}} \cdot M_1^3 n_2^{-1/2}\log^9(n_2/\delta) \cdot \resv^3 M_1^2 \cdot  n_2^{1/2} M_1^{-1}\sigma^{-1}\log^{-2}(n_2/\delta)\log^{-1}(n_0/\delta) \\
    & \lesssim \frac{L^2n_0}{k^2n_2}\norm{\hat \bS_{2,\tilde \lambda}}\log^7(n_2/\delta) \resv^2 M_1^4,
\end{align*}
and finally

\begin{align*}
    \norm{\vari}_{\psi_1}^2 & = \norm{\frac{1}{n_2}\bX_0\bar\bH_2^{-1}\bX_2^{\top}\bres_2}_{\psi_1}^2 \\
    & \le \sigma^2 \norm{\frac{1}{n_2}\bX_0\bar\bH_2^{-1}\bX_2^{\top}}_2^2 \\
    & = \sigma^2 \norm{\bX_0\bar\bH_2^{-1}\frac{1}{n_2^2}\bX_2^{\top}\bX_2\bar\bH_2^{-1}\bX_0^{\top}}_2 \\
    & \le \sigma^2 \norm{\bX_0\bar\bH_2^{-1}\frac{1}{n_2k}\bX_0^{\top}}_2 \quad \text{because} \quad \frac{1}{n_2}\bX_2^{\top}\bX_2 = \hat \bSigma_2 \preceq \frac{1}{k}\bar\bH_2 \\
    & \le \frac{\sigma^2 n_0}{k n_2} \norm{\hat \bS_{2,\tilde \lambda}},
\end{align*}
with the associated tail bound for $\norm{\vari}_{2}^2$, obtained through the quadratic form tail bound of sub-exponential vectors result of \cref{subexp hanson wright}.

\subsection{Proof of \cref{concentration of second lemma}}\label[appendix]{proof of concentration}
Apply \cref{covarariance concentration} with $\gamma=1/2$ and note that when $n$ is large enough the new lower bound is larger than the one required on the lemma.

\subsection{Proof of \cref{epsilon risk lemma}}\label[appendix]{proof of excess risk bound}

Start by leveraging the observation made in \cref{excess risk bound}: $R(\hat f_\lambda) - R(f^*)\le \frac{K}{2}\norm{\hthetal-\btheta^*}_{\Sigma_0}^2$. 
Then apply \cref{fixed design lemma} with $\bQ=\bSigma_0$ and union bounds on all candidates to get error bounds with empirical second-moment operators. Further, apply \cref{concentration of second lemma} to relate them to the population versions. The two first lemmas are only valid in the range of $J(\delta)$.
Finally, use the fact that $\min_{\lambda \in \Lambda} \le \min_{\lambda \in \Lambda \cap J}$.

\subsection{Proof of \cref{lemma in sample oracle}}\label[appendix]{proof of in sample oracle}
We want to apply the in-sample oracle inequality from \cref{oracle general non approx} to our context. The samples of interest are the target covariates $\{\bx_{0i}\}_{i=1}^{n_0}$, and the pseudo-labels are obtained via the imputation model $\tilde f$ trained on the second half of the source dataset $\cD_2$. In our vector notation, the target pseudo-labels can be decomposed as in \cref{fixed design lemma}:
$$\bX_0\tilde \btheta = \bX_0\btheta^* + (\bB_1 + \bB_2) + \vari.$$

We can verify that \cref{assumptions oracle} needed to apply the oracle inequality is verified directly per \Cref{kernel assumptions,subexponentialassumption,trueparam,convexityassumption}, with $C=RM_2$.
Therefore the oracle inequality gives that $$\sqrt{R_\textit{in}(\hat f)-R_\textit{in}(f^*)} - \min_{\lambda \in \Lambda}\sqrt{R_\textit{in}(\hat f_\lambda)-R_\textit{in}(f^*)}$$ (this intermediate inequality is obtained directly in the proof of the theorem) is smaller up to a constant than
\begin{equation}\label{eq: oracle inequality applied}
    \frac{\sqrt{K}}{\sqrt{n_0k}}\sqrt{K\variv^2 \log^2(\frac{m}{\delta}) + K\norm{\bias}^2 + \frac{L^2n_0\eta^4}{k}},
\end{equation}
with $\bB = \bB_1 + \bB_2$, $\variv$ is an upper bound on $\norm{\vari}_{\psi_1}$, and $\eta$ is an upper bound on $\norm{\bX_0(\tilde \btheta - \btheta^*)}_\infty$.

We take the square and consider each term separately, bounding them using \cref{fixed design lemma} (for $\bB_1, \bB_2, \norm{\vari}_{\psi_1}$) and \cref{consistency rate} (for $\eta$).

The variance term:
\begin{align*}
    \frac{K}{n_0k}K\norm{\vari}_{\psi_1}^2 \log^2(\frac{m}{\delta})
    & \le \frac{K^2\sigma^2}{k^2}\norm{\hat \bS_{2, \tilde\lambda}}\frac{\log^2(m/\delta)}{n_2}.
\end{align*}

The bias term:
\begin{align*}
    \frac{K}{n_0k}K\norm{\bias}^2
    & \le \frac{2K^2}{n_0k}(\norm{\bB_1}^2+\norm{\bB_2}^2) \\
    & \lesssim \frac{K^2}{k}(\tilde \lambda R^2 \norm{\hat \bS_{2, \tilde\lambda}} + \frac{L^2M_1^4 \sigma^2}{k^2n_2}\norm{\hat \bS_{2, \tilde \lambda}}\log^7(n_2/\delta)) \\
    & \lesssim \frac{(1+L^2)K^2M_1^4\sigma^2R^2}{k^2}\norm{\hat \bS_{2, \tilde \lambda}}\frac{\log^7(n_2/\delta)\log(n_0)}{n_2}.
\end{align*}

Finally, the perturbation term:
\begin{align*}
    \frac{K}{n_0k}\frac{L^2n_0\eta^4}{k}
    & \lesssim \frac{L^2KM_1^6\sigma^2}{k^2}\frac{\log^{6}(n_2/\delta)}{n_2}.
\end{align*}

Combining the three terms together yields the desired result.

\subsection{Proof of \cref{bridge risks}}\label[appendix]{proof of bridge risk}
We adapt the proof of lemma 6.5 in \citep{wang2023pseudo}. The only difference is checking the assumptions of their lemma D.5. Use \cref{consistency rate} that gives for any $\epsilon \in (0, 1/2]$
$$\Pro{\abs{\ip{\Phi(\bx_0)}{\hthetal - \btheta^*}} \le cM_1 \left(\resv\log(\frac{n_1}{\epsilon})\frac{\lambda^{-1/4\alpha}}{\sqrt{n_1}} + \lambda^{1/4}\norm{\btheta^*}\right)} \ge 1-\epsilon.$$

Plugging in the two bounds on $\lambda$, we obtain with probability larger than $1-\epsilon$:
$$\abs{\ip{\Phi(\bx_0)}{\hthetal - \btheta^*}} \lesssim \sqrt{M_1\resv}\log(\frac{1}{\epsilon})n_1^{-1/4}\log^5(n_1) +g(n_1)^{1/4} \le C \sqrt{M_1\resv}\log(\frac{1}{\epsilon})$$
for $n$ large enough.

This exponential tail also yields finite moments such that 
$$\Ex{\abs{\ip{\Phi(\bx_0)}{\hthetal - \btheta^*}}^4}^{1/4} \le C\sqrt{M_1\resv}.$$

Thus the inner product satisfies the conditions of lemma 6.5 with $r=C\log(1/\epsilon)\sqrt{M_1\resv}$ and $\bar U=C\sqrt{M_1\resv}$.
The rest follows as in the original proof.

\subsection{Proof of \cref{corollary out-of-sample oracle inequality}}\label[appendix]{proof of out of sample oracle}
We recall from remark \ref{remark excess risk} that there exist constants $c_1,c_2 \in [\sqrt{\frac{k}{2}}, \sqrt{\frac{K}{2}}]$ such that $\forall \lambda$

\begin{equation}\label{out sample risk relat eq}
    \sqrt{R(\hat f_\lambda)-R(f^*)} = c_1\norm{\hthetal-\btheta^*}_{\Sigma_0},
\end{equation}
and
\begin{equation}\label{in sample risk relat eq}
    \sqrt{R_{\textit{in}}(\hat f_\lambda)-R_{\textit{in}}(f^*)} = c_2\norm{\hthetal-\btheta^*}_{\hat \Sigma_0}.
\end{equation}

Moreover, \cref{bridge risks} gives with high probability
\begin{equation}\label{bridge ris eq}
    \max_{\lambda \in \Lambda}\abs{\norm{\hthetal-\btheta^*}_{\Sigma_0}-\norm{\hthetal-\btheta^*}_{\hat \Sigma_0}} \le x,
\end{equation}
where we noted $x=\frac{C\sqrt{\resv}}{M_1^{1/4}}\sqrt{\frac{\log^2(n_0m/\delta)\log(m/\delta)}{n_0}}$

And \cref{lemma in sample oracle} gives an inequality of the form
\begin{equation}\label{oracle eq}
    \sqrt{R_\textit{in}(\hat f)-R_\textit{in}(f^*)}\le \min_{\lambda \in \Lambda}\sqrt{\cR_\textit{in}(\hat f_\lambda)-\cR_\textit{in}(f^*)}+b,
\end{equation}
where $b=C\sqrt{\hat \xi(\tilde \lambda, \delta)}$.

Plugging \cref{in sample risk relat eq,out sample risk relat eq} into \cref{bridge ris eq} gives:
$$\max_{\lambda \in \Lambda}\abs{\frac{1}{c_1}\sqrt{R(\hat f_\lambda)-R(f^*)}-\frac{1}{c_2}\sqrt{R_{\textit{in}}(\hat f_\lambda)-R_{\textit{in}}(f^*)}} \le x.$$

Thus, for all $\lambda$:
$$\sqrt{R_{\textit{in}}(\hat f_\lambda)-R_{\textit{in}}(f^*)} \le c_2 x + \frac{c_2}{c_1}\sqrt{R(\hat f_\lambda)-R(f^*)},$$
in particular 
$$\min_{\lambda \in \Lambda}\sqrt{R_{\textit{in}}(\hat f_\lambda)-R_{\textit{in}}(f^*)} \le c_2 x + \frac{c_2}{c_1}\min_{\lambda \in \Lambda}\sqrt{R(\hat f_\lambda)-R(f^*)},$$
and 
$$\sqrt{R(\hat f_\lambda)-R(f^*)} \le c_1x + \frac{c_1}{c_2}\sqrt{R_{\textit{in}}(\hat f_\lambda)-R_{\textit{in}}(f^*)},$$
in particular for $\lambda = \hat \lambda$
$$\sqrt{R(\hat f)-R(f^*)} \le c_1x + \frac{c_1}{c_2}\sqrt{R_{\textit{in}}(\hat f)-R_{\textit{in}}(f^*)}.$$

Now plugging \cref{oracle eq} into the last line, we get:
\begin{align*}
    \sqrt{R(\hat f)-R(f^*)} 
    & \le c_1x + \frac{c_1}{c_2}\cro{\min_{\lambda \in \Lambda}\sqrt{R_\textit{in}(\hat f_\lambda)-R_\textit{in}(f^*)}+b} \\
    & \le c_1x + \frac{c_1}{c_2}\cro{\paren{c_2 x + \frac{c_2}{c_1}\min_{\lambda \in \Lambda}\sqrt{R(\hat f_\lambda)-R(f^*)}}+b} \\
    & = 2c_1 x + \frac{c_1}{c_2}b + \min_{\lambda \in \Lambda}\sqrt{R(\hat f_\lambda)-R(f^*)}.
\end{align*}

The bounds on $c_1$ and $c_2$ finish the proof.

\section{Proofs of \cref{generic GLM estimator}'s pointwise convergence rate}\label[appendix]{proofres}

Recall our notations: $k$ is strong convexity constant, $M_1$ is a uniform bound on the eigenfunctions of the RKHS, $M_2$ is a uniform bound on the covariates, $\resv$ is a bound on the sub-exponential norm of the noise vector, and $R$ is a bound on the true parameter's norm.

Using the isometry between the RKHS and the feature space \cref{eq: rkhs isometry}, an equivalent formulation of \cref{regularized glm bis} is

\begin{equation}\label{regularized glm}
    \min_{\btheta \in \mathbb{H}} \acc{\frac{1}{n} \sum_{i=1}^n (\logpar(\ip{\Phi(\bx_i)}{\btheta}) - y_i \ip{\Phi(\bx_i)}{\btheta}) + \frac{\lambda}{2}\norm{\btheta}_{\mathbb{H}}^2},
\end{equation}

In what follows, denote the regularized empirical GLM loss and its minimizer as:
$$L_{\lambda}(\btheta) = \frac{1}{n} \sum_{i=1}^n (\logpar(\ip{\Phi(\bx_i)}{\btheta}) - y_i \ip{\Phi(\bx_i)}{\btheta}) + \frac{\lambda}{2}\norm{\btheta}^2\,, \quad \hthetal \in \argmin_{\btheta \in \mathbb{H}} L_{\lambda}(\btheta),$$

with gradient
$$\nabla L_{\lambda}(\btheta) = \frac{1}{n} \sum_{i=1}^n (\logpar'(\ip{\Phi(\bx_i)}{\btheta}) - y_i)\Phi(\bx_i)+ \lambda\btheta\,, \quad  \nabla L_{\lambda}(\btheta^*)= \frac{1}{n}\bX^{\top} \bres + \lambda \btheta^*,$$
and Hessian
$$\nabla^2 L_{\lambda}(\btheta) = \frac{1}{n} \sum_{i=1}^n \logpar''(\ip{\Phi(\bx_i)}{\btheta})(\Phi(\bx_i) \otimes \Phi(\bx_i)) + \lambda \bI. $$

We need the following definition in the rest of the proof:
\begin{definition}[Strong convexity region]
    Define $\Omega(\rho):=\{\btheta:\norm{\bX(\btheta-\btheta^*)}_{\infty}\le\rho C\}$ the domain over which the local strong convexity implies that $\forall \btheta \in\Omega(\rho)$, $\forall i \in [n], \logpar''(\ip{\Phi(\bx_i)}{\btheta}) \ge k_{\rho}$. Denote also $k:=k_1$, the smallest strong convexity constant when $\rho \le 1$.
\end{definition}

\begin{remark}
    It follows that $\forall \btheta \in\Omega(\rho), \nabla^2L_\lambda(\btheta) \succeq k_\rho \hSig + \lambda \bI$.
\end{remark}

To stay more general in this section, we will assume that the kernel eigenvalues in the $L^2(\cP, \cX)$ basis have a polynomial decay of exponent $\alpha \ge 1$: $\forall j, \mu_j \le Aj^{-2\alpha}$. Only at the end will we replace $\alpha$ by $1$, as per our \cref{kernel assumptions}.

\paragraph{GLM estimator properties}\label[appendix]{pestimator}
To characterize the estimation error of the estimator, we let $\btheta(t) = \btheta^* + t(\hthetal - \btheta^*)$ and write the Taylor expansion of $\nabla L_\lambda$ between $\hthetal$ and $\btheta^*$:

$$0= \nabla L_\lambda(\hthetal) = \nabla L_\lambda(\btheta^*) + \cro{\int_0^1\nabla^2 L_\lambda(\btheta(t))dt} \cdot(\hthetal - \btheta^*).$$

Then take the inverse to get an expression of:
\begin{equation}\label{estimation error}
\hthetal - \btheta^* = - \paren{\frac{1}{n}\bX^{\top}\bD\bX + \lambda \bI}^{-1}\paren{\frac{1}{n}\bX^{\top} \bres + \lambda \btheta^*},
\end{equation}
where the diagonal matrix $\bD$ has components
$$D_{ii} = \int_0^1 \logpar''\paren{\ip{\Phi(\bx_i)}{\btheta(t)}}dt.$$

However, to turn this non-linear equation into a usable expression, we need to know that $\btheta(t)$ is in the region of strong convexity to lower bound $\bD$ by $k\bI$. This motivates our first lemma which proves a crude consistency rate with high probability. This coarse-grained analysis will serve as a stepping stone to prove the main theorem.

\subsection{Coarse-grained analysis of the estimator consistency}

\begin{lemma}\label{crude bound}
    Let $g(n) = \frac{1}{\log^{\beta}(n)}\underset{n\rightarrow\infty}{\longrightarrow} 0, \quad \beta > 0$, and choose $\delta \in (0, 1/7]$. Then, under \Cref{setup,kernel assumptions,subexponentialassumption,convexityassumption,trueparam}, there exists constants $c, c_0, c_1 >0$ and $c_2\ge k$, such that when $\rho =  c_0g(n)^{1/4}$ and:
    \[
  \begin{cases}
    \lambda \le \frac{c_1 g(n)}{R^4} =  \frac{c_1}{R^4\log^{\beta}(n)}\\
    \lambda  \ge c_2 \paren{\frac{M_1^2\resv\sqrt{\log(4/\delta)}\log n}{\sqrt{n}}}^{2}\frac{1}{g(n)^{1/2}} = \frac{c_2M_1^2\resv^{2}\log(\frac{4}{\delta})(\log n)^{2+\beta/2}}{n}
  \end{cases}
\]

    We have:
    $$\Pro{\hthetal \in \mathring{\Omega}(\rho)} \ge 1- \delta,$$
    namely \wpd, $$\norm{\bX(\hthetal-\btheta^*)}_{\infty}\le c g(n)^{1/4}$$ 
\end{lemma}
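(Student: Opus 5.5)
We prove the crude localization with a convexity-based boundary argument rather than the Taylor expansion \eqref{estimation error}, which already presupposes the localization. Let $\Omega(\rho)$ be the strong convexity region and set $\bDelta = \btheta - \btheta^*$. For any $\btheta$ on the boundary $\partial\Omega(\rho)$ we have $\norm{\bX\bDelta}_\infty = \rho C$. Integrating the Hessian along the segment $[\btheta^*, \btheta]$, which stays in $\Omega(\rho)$, gives
\[
L_\lambda(\btheta) - L_\lambda(\btheta^*) \ge \ip{\nabla L_\lambda(\btheta^*)}{\bDelta} + \tfrac12 \norm{\bDelta}^2_{k_\rho\hat\bSigma + \lambda \bI}.
\]
Suppose the right-hand side is strictly positive uniformly on $\partial\Omega(\rho)$. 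Since $L_\lambda$ is convex and $\btheta^*$ lies in the interior, the minimizer cannot lie outside $\mathring{\Omega}(\rho)$. Otherwise the segment from $\btheta^*$ to $\hthetal$ would cross the boundary at a point where $L_\lambda$ exceeds $L_\lambda(\btheta^*) \ge L_\lambda(\hthetal)$, which contradicts convexity along the segment. So it suffices to show that the linear term cannot beat the quadratic term on the boundary.

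To do this, write $\bA = k_\rho\hat\bSigma + \lambda\bI$ and use Cauchy--Schwarz in the $\bA$-geometry:
\[
\abs{\ip{\nabla L_\lambda(\btheta^*)}{\bDelta}} \le \Big(\norm{\bA^{-1/2}\tfrac1n\bX^\top\bres} + \sqrt{\lambda}\,R\Big)\norm{\bDelta}_{\bA}.
\]
Here we used $\norm{\lambda\bA^{-1/2}\btheta^*} \le \sqrt\lambda R$. The noise part is controlled by the sub-exponential quadratic form bound of \cref{subexp hanson wright}. With probability $1-\delta/2$,
\[
\norm{\bA^{-1/2}\tfrac1n\bX^\top\bres}^2 \lesssim \frac{\resv^2\log^2 n\log(4/\delta)}{n}\Tr\big((k_\rho\hat\bSigma+\lambda\bI)^{-1}\hat\bSigma\big).
\]
By \cref{covarariance concentration}, this trace is at most $2M_1^2\Tr(\bC_\lambda)\lesssim M_1^2\lambda^{-1/2}$ with probability $1-\delta/2$; the eigenvalue decay with exponent $2$ gives the $\lambda^{-1/2}$. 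The linear term is therefore dominated whenever $\tfrac12\norm{\bDelta}_\bA > \varepsilon_n + \sqrt\lambda R$, where $\varepsilon_n^2 \asymp M_1^2\resv^2\log^2 n\log(4/\delta)\lambda^{-1/2}/n$.

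The remaining step, which we expect to be the main obstacle, is a lower bound on $\norm{\bDelta}_\bA$ on the boundary, i.e.\ converting the sup-norm constraint $\norm{\bX\bDelta}_\infty=\rho C$ into an $\bA$-norm lower bound. We would use the interpolation (pointwise evaluation) inequality coming from bounded eigenfunctions,
\[
\abs{\ip{\Phi(\bx)}{\bDelta}} \le \norm{(k\bSigma+\lambda\bI)^{-1/2}\Phi(\bx)}\,\norm{\bDelta}_{k\bSigma+\lambda\bI} \lesssim M_1\lambda^{-1/4}\norm{\bDelta}_{k\bSigma+\lambda\bI},
\]
using $\sum_j \mu_j\phi_j(\bx)^2/(k\mu_j+\lambda)\le M_1^2\Tr(\bC_\lambda)$. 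Combining this with the concentration $\bA\succeq\tfrac12(k\bSigma+\lambda\bI)$ (adjusting $k$ to $k_\rho\ge k$ for $\rho\le1$) gives $\norm{\bDelta}_\bA \gtrsim \rho C\lambda^{1/4}/M_1$ on the boundary.

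It then suffices to check that $\rho\lambda^{1/4}/M_1 \gg \sqrt\lambda R + \varepsilon_n$. With $\rho = c_0 g(n)^{1/4}$, the bias comparison $\rho\lambda^{1/4} \gg \sqrt\lambda R$ reads $\lambda \ll g(n)/R^4$, which is the upper constraint. The variance comparison $\rho^2\lambda^{1/2} \gg M_1^2\varepsilon_n^2$ reads $\lambda \gtrsim M_1^4\resv^2\log^2 n\log(4/\delta)/(n\,g(n)^{1/2})$, which is the lower constraint. Tracking the constants $c_0,c_1,c_2$ and the powers of $M_1$ carefully here is where the work lies. A union bound over the two events concludes with probability $1-\delta$.
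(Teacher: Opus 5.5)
Your proposal is correct and is essentially the paper's argument. It uses the same convexity/boundary-crossing step, the same Cauchy--Schwarz in the $(k_\rho\hat\bSigma+\lambda\bI)$-geometry with the noise term controlled by the sub-exponential Hanson--Wright bound, covariance concentration and the effective-dimension bound, and the same evaluation estimate $\|\bA^{-1/2}\Phi(\bx_i)\|\lesssim M_1\lambda^{-1/4}$. Your lower bound $\|\bDelta\|_{\bA}\gtrsim \rho C\lambda^{1/4}/M_1$ on $\partial\Omega(\rho)$ is just the contrapositive of the paper's Fact~2 (the ellipsoid $E_\rho$ lies inside $\mathring{\Omega}(\rho)$), so the two write-ups differ only in packaging.
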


\begin{proof}

By contradiction, assume that $\hthetal \notin \Omega(\rho)$. We need show that $\forall \btheta \notin \Omega(\rho), L_\lambda(\btheta)>L_\lambda(\btheta^*)$, for a well-chosen $\rho$, thus contradicting the optimality of $\hthetal$.
Inside $\Omega(\rho)$, we can use the local $k_\rho$-strong convexity. In this proof, we denote $\Sla:=k_\rho \hSig + \lambda \bI$.

\bigskip

Let the ellipsoid $E_\rho = \bigl\{\btheta:\|\Sla^{1/2}(\btheta-\btheta^*)\|\le2\,\|\Sla^{-1/2}\nabla L_\lambda(\btheta^*)\|\bigr\}$. 
We begin by proving the two following facts:

\begin{itemize}
    \item \textbf{Fact 1:} $\forall \btheta \in \mathring{\Omega}(\rho) \setminus E_\rho, L_\lambda(\btheta) > L_\lambda (\btheta^*)$.

    \begin{proof}
        Let $\btheta \in \mathring{\Omega}(\rho)$. Then we can apply Taylor Lagrange between $\btheta$ and $\btheta^*$ with a $\btheta_{in}=\btheta^* + t(\btheta-\btheta^*)\in \mathring{\Omega}(\rho)$ (for some $t\in(0,1)$) on the line segment connecting the two vectors. We can then use the strong convexity of $\logpar$ to write:
        \begin{align*}
            L_\lambda(\btheta) 
            &= L_\lambda(\btheta^*) + \ip{\nabla L_\lambda(\btheta^*)}{\btheta - \btheta^*} + \frac{1}{2}\ip{\btheta - \btheta^*}{\nabla^2L_\lambda(\btheta_{in})(\btheta - \btheta^*))} \\
            &\ge L_\lambda(\btheta^*) + \ip{\nabla L_\lambda(\btheta^*)}{\btheta - \btheta^*} + \frac{1}{2}\ip{\btheta - \btheta^*}{\Sla(\btheta - \btheta^*))},
        \end{align*}
        i.e. $$L_\lambda(\btheta) \ge L_\lambda(\btheta^*) + \ip{\Sla^{-1/2}\nabla L_\lambda(\btheta^*)}{\Sla^{1/2}(\btheta - \btheta^*)} + \frac{1}{2}\norm{\Sla^{1/2}(\btheta-\btheta^*)}^2.$$
        Using Cauchy-Schwarz, we get:
        $$L_\lambda(\btheta)-L_\lambda(\btheta^*) \ge \norm{\Sla^{1/2}(\btheta-\btheta^*)}\paren{\frac{1}{2}\norm{\Sla^{1/2}(\btheta-\btheta^*)} - \norm{\Sla^{-1/2}\nabla L_\lambda(\btheta^*)}},$$
        which yields the desired result.
    \end{proof}

    \item \textbf{Fact 2:} If \begin{equation}\label{inequality}
        \rho > \frac{2}{C}\max_i \acc{\norm{\Sla^{-1/2}\Phi(\bx_i)}\cdot\norm{\Sla^{-1/2}\nabla L_\lambda(\btheta^*)}},
    \end{equation}
    then $E_\rho$ is a proper subset of $\Omega(\rho)$.

    \begin{proof}
        Let $\btheta \in E_\rho, i\in [n]$. Then:
        $$\abs{\ip{\Phi(\bx_i)}{\btheta-\btheta^*}} = \abs{\ip{\Sla^{-1/2}\Phi(\bx_i)}{\Sla^{1/2}(\btheta-\btheta^*)}} \le \norm{\Sla^{-1/2}\Phi(\bx_i)}\cdot \norm{\Sla^{1/2}(\btheta-\btheta^*)}.$$
        By definition of $E_\rho$, we get:
        $$\abs{\ip{\Phi(\bx_i)}{\btheta-\btheta^*}} \le 2\norm{\Sla^{-1/2}\Phi(\bx_i)}\cdot \norm{\Sla^{-1/2}\nabla L_\lambda(\btheta^*)}.$$
        Thus, if $\rho > \frac{2}{C}\max_i \acc{\norm{\Sla^{-1/2}\Phi(\bx_i)}\cdot\norm{\Sla^{-1/2}\nabla L_\lambda(\btheta^*)}}$, $\btheta \in \mathring{\Omega}(\rho)$, and the result is proved. 
    \end{proof}
\end{itemize}

Together these two facts imply either that $\hthetal \notin \Omega(\rho)$ or that $\hthetal \in E_\rho \subset \mathring{\Omega}(\rho)$ by optimality of $\hthetal$. 
Let us prove that the first possibility is actually impossible. 

Let $\btheta \notin \Omega(\rho)$ and construct the line parametrized by $\btheta(t) = \btheta^* + t(\btheta-\btheta^*), t\in [0,1]$ linking $\btheta$ and $\btheta^*$. By continuity, it intersects $\Omega(\rho)$ and we define the intersection point as $t_b \in (0,1), \btheta(t_b) \in \partial \Omega(\rho)$. Given that $\btheta(t_b) \notin E_\rho$, we know from \textbf{Fact 1} that $L_\lambda(\btheta(t_b))\ge L_\lambda(\btheta^*)$. Then by convexity of $L_\lambda$ (true everywhere), $L_\lambda(\btheta) \ge \frac{1}{t_b}L_\lambda(\btheta(t_b)) + (1-\frac{1}{t_b})L_\lambda(\btheta^*)>L_\lambda(\btheta^*)$.

Finally, we have proved that with a well chosen $\rho$ satisfying \ref{inequality}, $\hthetal \in E_\rho \subset \mathring{\Omega}(\rho)$.

\bigskip

Let us now prove that the inequality \ref{inequality} is feasible and extract the adequate $\rho$. This is a nonlinear inequality in $\rho$. However, we know that $k_\rho$ is decreasing as $\rho$ increases (since $\rho$ expands the region of strong convexity), which makes the RHS of \ref{inequality} increasing in $\rho$. If we can assume that $\rho$ is upper bounded, say $\rho \le 1$, and then can show the existence of $\rho > \frac{2}{C}\max_i \acc{\norm{S_{\lambda, 1}^{-1/2}\Phi(\bx_i)}\cdot\norm{S_{\lambda, 1}^{-1/2}\nabla L_\lambda(\btheta^*)}}$ still smaller than $1$, we are done. We are going to prove that it is possible by computing explicitly the RHS for $\rho = 1$:

\paragraph{Computation of the RHS of \ref{inequality} for $\rho=1$}

\begin{equation} \label{rhs}
\begin{split}
RHS & = \frac{2}{C}\max_i \acc{\norm{S_{\lambda, 1}^{-1/2}\Phi(\bx_i)}\cdot\norm{S_{\lambda, 1}^{-1/2}\nabla L_\lambda(\btheta^*)}} \\
 & = \frac{2}{C}\max_i \acc{\norm{(k\hSig +\lambda \bI))^{-1/2}\Phi(\bx_i)}\cdot\norm{(k\hSig +\lambda \bI)^{-1/2}(\frac{1}{n}\bX^{\top}\bres + \lambda \btheta^*))}} \\
 & \le \frac{2}{C}\max_i \acc{\norm{(k\hSig +\lambda \bI))^{-1/2}\Phi(\bx_i)}\cdot\paren{\frac{1}{n}\norm{(k\hSig +\lambda \bI)^{-1/2}\bX^{\top}\bres} + \lambda\norm{(k\hSig +\lambda \bI)^{-1/2}\btheta^*}}},
\end{split}
\end{equation}
where $k=k_1$.

\bigskip 

Define the event $$\mathcal{A} := \acc{\frac{1}{2}(k\bSigma + \lambda \bI)\preceq k\hSig + \lambda \bI \preceq \frac{3}{2}(k\bSigma + \lambda \bI)}.$$

By lemma \ref{covarariance concentration} applied with $\gamma = 1/2$, $\lambda = \lambda/k$ and $\delta = \delta/2$, which relates the sample and population covariance, since $\delta \in (0, 1/7] $ and our choice of $\lambda$ satisfies $\lambda \ge \frac{CM_2^2\log(2n/\delta)}{n}$ where $C\ge k > 0$ for $n$ large enough, we have

$$\Pro{\mathcal{A}} \ge 1-\frac{\delta}{2}.$$

We consider the three norm terms of \ref{rhs} successively. We study the square norm to use inner products and then take the square-root. $\lesssim$ only hides universal multiplicative constants.

\begin{itemize}
    \item Under $\mathcal{A}$, $\ip{\Phi(\bx_i)}{(k\hSig + \lambda \bI)^{-1}\Phi(\bx_i)} \le 2\ip{\Phi(\bx_i)}{(k\bSigma + \lambda \bI)^{-1}\Phi(\bx_i)}$. We make use of lemma \ref{effective dimension} to get: $\ip{\Phi(\bx_i)}{(k\bSigma + \lambda \bI)^{-1}\Phi(\bx_i)} \lesssim M_1^2\lambda^{-1/2\alpha}$. Therefore, taking the square-root, we obtain
    $$\norm{(k\hSig + \lambda \bI)^{-1/2}\Phi(\bx_i)} \lesssim M_1\lambda^{-1/4\alpha}.$$

    \item Under $\mathcal{A}$, the bias term writes: $\lambda\norm{(k\hSig +\lambda \bI)^{-1/2}\btheta^*} \lesssim \lambda\norm{(k\bSigma +\lambda \bI)^{-1/2}\btheta^*}$. Then, using $(k\bSigma + \lambda \bI)^{-1} \preceq \bI/\lambda$, $$\norm{(k\bSigma + \lambda \bI)^{-1/2}\lambda \btheta^*} \le \sqrt{\lambda}\norm{\btheta^*} \lesssim \lambda^{1/2}$$ for bounded $\btheta^*$.

    \item For the squared variance term $\ip{\res}{\frac{1}{n^2}\bX(k\hSig +\lambda \bI)^{-1}\bX^{\top} \bres}$, conditioned on $\bX$, we use the quadratic form tail bound proved in lemma \ref{subexp hanson wright} for the sub-exponential random vector $\res$ and PSD matrix $A=\frac{1}{n^2}\bX(k\hSig +\lambda \bI)^{-1}\bX^{\top}$, which bounds with high probability by the trace of the PSD operator. Since the $\delta$ defined earlier verifies $\delta \le 1/7 \le 4/e$, we have:
    
    $$\Pro{\ip{\res}{\frac{1}{n^2}\bX(k\hSig +\lambda \bI)^{-1}\bX^{\top} \bres} \lesssim \resv^2 \log^2(n)\log(4/\delta)\Tr(\frac{1}{n^2}\bX(k\hSig +\lambda \bI)^{-1}\bX^{\top}) \Bigm\vert \bX} \ge 1-\frac{\delta}{2},$$
    and thus integrating over the whole event:
    $$\Pro{\ip{\res}{\frac{1}{n^2}\bX(k\hSig +\lambda \bI)^{-1}\bX^{\top} \bres} \lesssim \resv^2 \log^2(n)\log(4/\delta)\Tr(\frac{1}{n^2}\bX(k\hSig +\lambda \bI)^{-1}\bX^{\top}) \Bigm\vert \mathcal{A}} \ge 1-\frac{\delta}{2}.$$
    
    Now, under $\mathcal{A}$ according to lemma \ref{covarariance concentration}, 
    $$\Tr(\frac{1}{n^2}\bX(k\hSig +\lambda \bI)^{-1}\bX^{\top}) = \frac{1}{n}\Tr((k\hSig +\lambda \bI)^{-1}\hSig) \le \frac{2M_1^2}{n}\Tr((k\bSigma + \lambda \bI)^{-1}\bSigma).$$
    Then applying again lemma \ref{effective dimension}, we have that $$\Tr((k\bSigma + \lambda \bI)^{-1}\bSigma) = r(\lambda/k)/k \lesssim \lambda^{-1/2\alpha}.$$ Therefore:
    
    $$\Pro{\norm{\frac{1}{n}(k\hSig +\lambda \bI)^{-1/2} \bX^{\top} \bres} \lesssim \resv M_1\sqrt{\log(4/\delta)}n^{-1/2}\lambda^{-1/4\alpha}\log n \Bigm\vert \mathcal{A}} \ge 1-\frac{\delta}{2}.$$

\end{itemize}

All in all, since our analysis holds for any $i$, there exists $c >0$, such that the event 
$$\mathcal{E}:=\acc{RHS \le c M_1\lambda^{-1/4\alpha}\paren{\lambda^{1/2}\norm{\btheta^*}+\resv M_1\sqrt{\log(4/\delta)}\lambda^{-1/4\alpha}\frac{\log n}{\sqrt{n}}}}$$
has conditional probability
$$\Pro{\mathcal{E} \bigm\vert \mathcal{A}} \ge 1 - \frac{\delta}{2},$$
and 
$$\Pro{\mathcal{E}} \ge \Pro{\mathcal{E} \bigm\vert \mathcal{A}} \Pro{\mathcal{A}}  \ge \left(1 - \frac{\delta}{2}\right)^2 \ge 1-\delta.$$

\bigskip

\paragraph{Crude consistency rate.} Therefore, if we set 
\begin{equation}\label{sandwich lambda}
    \paren{\resv^{2}M_1^2\log(\frac{4}{\delta})}^{\alpha}\paren{\frac{\log n}{\sqrt{n}}}^{2\alpha}\frac{1}{g(n)^{\alpha/2}} \lesssim \lambda \lesssim \frac{g(n)}{R^4}
\end{equation}
for a slowly decaying $g(n) \to 0$ (and still respecting the condition of event $\mathcal{A}$ \footnote{For instance, we take $1/g(n)$ to be a polylog}), since $1/2 - 1/4\alpha \ge 1/4$, we have \wpd:
$$RHS \lesssim g(n)^{1/4} \to 0.$$

We can simplify inequality \ref{sandwich lambda} since it is still verified when setting $\alpha=1$, and with $g(n) = \frac{1}{\log^{\beta}(n)}, \beta>0$, so that it becomes

\begin{equation}\label{sandwich lambda bis}
    \frac{M_1^2\resv^{2}\log(\frac{4}{\delta})(\log n)^{2+\beta/2}}{n} \lesssim \lambda \lesssim \frac{1}{R^4\log^{\beta}(n)}.
\end{equation}

Namely, if we set $\rho = g(n)^{1/4}$, inequality \ref{inequality} is verified and $\hthetal \in \Omega(\rho)$ so that the following consistency rate is established \wpd:
$$\norm{\bX(\hthetal - \btheta^*)}_{\infty} \lesssim g(n)^{1/4}=\frac{1}{\log^{\beta/4}(n)}.$$

\end{proof}

\subsection{Proof of \Cref{consistency rate}}\label[appendix]{proof of consistency rate}

Building on the previous lemma, we refine the analysis of the consistency rate in the theorem.

\consistency*
\begin{remark}
    When union bounding on all the $\bx_0$ of a target $\bX_0$ of length $n_0$, we get with $g(n):=\frac{1}{\log^\beta(n)}$ that when $$\lambda  \asymp \paren{\frac{M_1\resv\sqrt{\log(8n_0/\delta)}\log n}{\sqrt{n}}}^{2}\frac{1}{g(n)^{1/2}},$$
    $$\Pro{\norm{\bX_0(\hthetal-\btheta^*)}_{\infty} \lesssim \sqrt{M_1^3 \sigma}n^{-1/4}(\log (\frac{n}{\delta}))^3} \ge 1- \delta.$$
\end{remark}
\begin{proof}
Recall \cref{estimation error}:

\begin{equation*}
\hthetal - \btheta^* = - \paren{\frac{1}{n}\bX^{\top}\bD\bX + \lambda \bI}^{-1}\paren{\frac{1}{n}\bX^{\top} \bres + \lambda \btheta^*}
\end{equation*}
where the diagonal matrix $\bD$ has components
$$D_{ii} = \int_0^1 \logpar''\paren{\ip{\Phi(\bx_i)}{\btheta^* + t(\hthetal - \btheta^*)}}dt.$$

We denote $\bH:= \frac{1}{n}\bX^{\top}\bD\bX + \lambda \bI$ in what follows.

\medskip
Since $\bD$ and $\bH$ depend on $\hthetal$, they are random and correlated with $\res$, so let us consider their deterministic (in fixed-design) versions $\bar{D}_{ii}:= \logpar''\paren{\ip{\Phi(\bx_i)}{\btheta^*}}$ and $\bar{\bH}:= \frac{1}{n}\bX^{\top}\bar{\bD}\bX + \lambda \bI$. Let $\bx_0 \in \supp(P)$ and $\Phi_0:=\Phi(\bx_0)$ the associated vector in the RKHS. We can now write a sort of bias-variance decomposition plus perturbation from \ref{estimation error}:

    \begin{equation}\label{bias variance perturbation}
        \abs{\ip{\Phi_0}{\hthetal - \btheta^*}} \le  \lambda \abs{\ip{\Phi_0}{\bH^{-1}\btheta^*}} + \abs{\ip{\Phi_0}{\frac{1}{n}\bar{\bH}^{-1}\bX^{\top} \bres}} +  \abs{\ip{\Phi_0}{\frac{1}{n}(\bH^{-1} -\bar{\bH}^{-1})\bX^{\top} \bres}}.
    \end{equation}

    We will show that the perturbation term is small.

    We let $k=k_1$ be the constant of strong convexity of the link function $\logpar$. This constant is the smallest allowed for the range of $\rho$ considered, i.e. associated to the largest region of strong convexity. $\logpar$ is thus $k$-strongly convex on $\Omega(\rho), \forall \rho \le 1$. As such, $\bar{\bD} \succeq k \bI$ and $\bar{\bH} \succeq k \hSig + \lambda \bI$, and if we let the event of lemma \ref{crude bound} be:
    $$\mathcal{B}:=\acc{\norm{\bX(\hthetal-\btheta^*)}_{\infty}\lesssim g(n)^{1/4}},$$
    which holds with probability larger than $1-\delta/2$, when 
    \[
  \begin{cases}
    \lambda \le \frac{c_1 g(n)}{\norm{\btheta^*}^4} \\
    \lambda  \ge c_2 \paren{\frac{M_1\resv\sqrt{\log(8/\delta)}\log n}{\sqrt{n}}}^{2\alpha}\frac{1}{g(n)^{\alpha/2}},
  \end{cases}
\]
 and $\delta \in (0,2/7]$, we also have a fortiori under $\mathcal{B}$ that $\bD \succeq k \bI$ and $\bH \succeq k \hSig + \lambda \bI$.
    Recall as well that under $\mathcal{B}$, the events from lemma \ref{crude bound} 
    $$\mathcal{A}_1 = \acc{\frac{1}{2}(k\bSigma + \lambda \bI)\preceq k\hSig + \lambda \bI \preceq \frac{3}{2}(k\bSigma + \lambda \bI)},$$ and $$\mathcal{A}_2 = \acc{\norm{\frac{1}{\sqrt{n}}\paren{k\hSig + \lambda \bI}^{-1/2}\bX^{\top}\bres} \lesssim M_1\resv \sqrt{\log(8/\delta)}\log(n)\lambda^{-1/4\alpha}}$$ hold as well. \textbf{We place ourselves under that event $\mathcal{B}$ in the rest of the proof}, everything is conditional on $\bX$ in particular. We study the three terms of \ref{bias variance perturbation} separately. Begin by noting the following fact.

    \begin{fact}[Relative error]\label{relative error}
   $$\forall i \in [n], \frac{\abs{D_{ii} - \bar{D}_{ii}}}{\bar{D}_{ii}} \le \frac{1}{2}$$ for $n$ large enough, the rank depending on the Lipschitz and strong convexity constant of the log-partition function $\logpar$ \footnote{E.g. for $g(n)=1/\log^4(n)$, with the logistic regression loss, $n$ must be larger than $28$. Compute other examples with explicit constants!}. It implies successively the following inequalities:
   $$-\frac{1}{2}\bar{\bH} \preceq \bH -\bar{\bH} \preceq \frac{1}{2}\bar{\bH}\,, \quad \norm{\bDelta} \le \frac{1}{2},$$
   where $\bDelta := \bar{\bH}^{-1/2}\bH\bar{\bH}^{-1/2} -I$. We also get that $\norm{\bDelta}\le \frac{L}{2k}\norm{\bX(\hthetal-\btheta^*)}_{\infty}$.
    \end{fact}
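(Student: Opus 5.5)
The plan is to work on the event $\mathcal{B}$ of \cref{crude bound}, which guarantees $\norm{\bX(\hthetal-\btheta^*)}_\infty\lesssim g(n)^{1/4}$, and to exploit the integral form of $D_{ii}$. For each $i$, every point $\btheta(t)=\btheta^*+t(\hthetal-\btheta^*)$ with $t\in[0,1]$ satisfies
\[
|\ip{\Phi(\bx_i)}{\btheta(t)}|\le C+g(n)^{1/4}\le 2C
\]
once $n$ is large enough. Hence \cref{convexityassumption} applies along the whole segment: $\logpar''\ge k$ there, and $\logpar''$ is $L$-Lipschitz there.

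\textbf{Step 1 (pointwise bound).} Writing $D_{ii}-\bar D_{ii}=\int_0^1[\logpar''(\ip{\Phi(\bx_i)}{\btheta(t)})-\logpar''(\ip{\Phi(\bx_i)}{\btheta^*})]\,dt$, the Lipschitz property gives
\[
|D_{ii}-\bar D_{ii}|\le L\int_0^1 t\,|\ip{\Phi(\bx_i)}{\hthetal-\btheta^*}|\,dt=\tfrac{L}{2}|\ip{\Phi(\bx_i)}{\hthetal-\btheta^*}|.
\]
Dividing by $\bar D_{ii}\ge k$ yields
\[
\frac{|D_{ii}-\bar D_{ii}|}{\bar D_{ii}}\le \frac{L}{2k}\norm{\bX(\hthetal-\btheta^*)}_\infty\lesssim \frac{L}{2k}g(n)^{1/4}.
\]
Since $g(n)\to0$, this is at most $1/2$ for $n$ beyond a threshold depending only on $L/k$ and the constant in \cref{crude bound}. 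This threshold is where the ``$n$ large enough'' in the statement comes from.

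\textbf{Step 2 (operator sandwich).} Let $\bE=\mathrm{diag}(D_{ii}-\bar D_{ii})$ and set $\epsilon:=\max_i|D_{ii}-\bar D_{ii}|/\bar D_{ii}$. Then $-\epsilon\bar\bD\preceq\bE\preceq\epsilon\bar\bD$. Conjugating by the bounded operator $\bX$ (a congruence, which preserves $\preceq$) gives
\[
|\ip{\bu}{(\bH-\bar\bH)\bu}|=\tfrac1n|\ip{\bX\bu}{\bE\bX\bu}|\le \tfrac{\epsilon}{n}\ip{\bX\bu}{\bar\bD\bX\bu}\le\epsilon\ip{\bu}{\bar\bH\bu},
\]
where the last inequality uses $\lambda\bI\succeq0$. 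Thus $-\epsilon\bar\bH\preceq\bH-\bar\bH\preceq\epsilon\bar\bH$, and in particular the bound holds with $\epsilon=1/2$. Next, conjugate by $\bar\bH^{-1/2}$, which exists since $\bar\bH\succeq\lambda\bI$. This gives $-\epsilon\bI\preceq\bDelta\preceq\epsilon\bI$. Because $\bDelta$ is self-adjoint, $\norm{\bDelta}\le\epsilon\le 1/2$, and combining with Step 1 also $\norm{\bDelta}\le\frac{L}{2k}\norm{\bX(\hthetal-\btheta^*)}_\infty$.

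\textbf{Main obstacle.} The only delicate point is justifying that the whole segment stays in the region $|z|\le 2C$ where the local constants $k$ and $L$ are valid. \cref{crude bound} supplies exactly this, but only for $n$ large enough that $c\,g(n)^{1/4}\le C$. I would make this threshold explicit together with the one from Step 1, namely $\frac{Lc}{2k}g(n)^{1/4}\le\frac12$, and take $n$ larger than both.
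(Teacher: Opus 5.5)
Your proposal is correct and follows the paper's proof. You use the same Lipschitz integral bound $|D_{ii}-\bar D_{ii}|\le \frac{L}{2}|\langle \Phi(x_i),\hat\theta_\lambda-\theta^*\rangle|$ on the localization event, divide by $\bar D_{ii}\ge k$, and then pass by congruence from the diagonal sandwich to $\pm\frac12\bar H$ and to $\|\Delta\|\le\frac12$.

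The only difference is cosmetic. You obtain $\|\Delta\|\le\frac{L}{2k}\|X(\hat\theta_\lambda-\theta^*)\|_\infty$ by running the sandwich with a general $\epsilon$, while the paper factors $\Delta=A^\top E A$ with $A=n^{-1/2}\bar D^{1/2}X\bar H^{-1/2}$ and $\|A\|\le 1$. The two are equivalent, and your version sidesteps the paper's slip $A^\top A=I-\lambda\bar H$, which should read $I-\lambda\bar H^{-1}$.
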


    \begin{proof}
        \begin{align*}
            \abs{D_{ii} - \bar{D}_{ii}} 
            & = \abs{\int_0^1 \paren{\logpar''\paren{\ip{\Phi(\bx_i)}{\btheta^* + t(\hthetal - \btheta^*)}}-\logpar''\paren{\ip{\Phi(\bx_i)}{\btheta^*}}}dt} \\
            & \le \int_0^1 \abs{\logpar''\paren{\ip{\Phi(\bx_i)}{\btheta^* + t(\hthetal - \btheta^*)}}-\logpar''\paren{\ip{\Phi(\bx_i)}{\btheta^*}}}dt \\
            & \le \int_0^1 L t \abs{\ip{\Phi(\bx_i)}{\hthetal - \btheta^*}}dt
            \quad\text{by Lipschitz continuity of $\logpar''$}  \\
            & = \frac{L}{2} \abs{\ip{\Phi(\bx_i)}{\hthetal - \btheta^*}} \\
            & \lesssim L g(n)^{1/4}
            \quad\text{under event $\mathcal{B}$}.
        \end{align*}

        Since we also have $\bar{D}_{ii} \ge k$, we deduce
        $$\frac{\abs{D_{ii} - \bar{D}_{ii}}}{\bar{D}_{ii}} \lesssim \frac{L}{k}g(n)^{1/4} \le \frac{1}{2}$$ for $n$ large enough.

        Since they are diagonal matrices, it implies that
        $$-\frac{1}{2}\bar{\bD} \preceq \bD -\bar{\bD} \preceq \frac{1}{2}\bar{\bD},$$
        and thus
        $$-\frac{1}{2}\bar{\bH} \preceq \bH -\bar{\bH} \preceq \frac{1}{2}\bar{\bH},$$
        and 
        $$-\frac{1}{2}\bI \preceq \bDelta \preceq \frac{1}{2}\bI,$$
        hence $\norm{\bDelta} \le \frac{1}{2}$.

        \medskip
        Also note that if we denote by $\bE = \bar{\bD}^{-1/2}(\bD-\bar{\bD})\bar{\bD}^{-1/2}$ the diagonal matrix of elements $\frac{D_{ii} - \bar{D}_{ii}}{\bar{D}_{ii}}$, and define the matrix $\bA = \frac{1}{\sqrt{n}}\bar{\bD}^{1/2}\bX\bar{\bH}^{-1/2}$, then 
        $$\bDelta = \bA^{\top} \bE \bA\,, \quad \text{and } \bA^{\top}\bA=\bI - \lambda \bar \bH\preceq\bI,$$
        so that 
        $$\norm{\bDelta} \le \norm{\bE} \le \frac{L}{2k}\norm{\bX(\hthetal-\btheta^*)}_{\infty} \lesssim g(n)^{1/4}.$$
    \end{proof}

Note also that the strong convexity property allows to lower bound $\bH, \bar{\bH}$ by $k\hSig + \lambda \bI$, so that using successively lemmas \ref{covarariance concentration}, \ref{effective dimension}, and \cref{subexp hanson wright} for the third point, we get the following fact:
\begin{fact}\label{operator bounds}
The matrix $\bA$ can be either $\bH$ or $\bar{\bH}$
    \begin{itemize}
        \item $\norm{\bA^{-1/2}\Phi_0} \lesssim M_1\lambda^{-1/4\alpha}.$
        \item $\norm{\frac{1}{\sqrt{n}}\bA^{-1/2}\bX^{\top}} \lesssim 1$.
        \item $\norm{\frac{1}{\sqrt{n}}\bA^{-1/2}\bX^{\top} \bres} \lesssim M_1\resv\sqrt{\log(8/\delta)}\lambda^{-1/4\alpha} \log n$. \footnote{Thanks to the symmetric form, we can bound $\bH$ and $\bar{\bH}$ immediately, which makes the use of the subexponential tail bound valid, no need to wait for taking the trace}
    \end{itemize}
\end{fact}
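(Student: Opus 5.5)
The three bounds in \cref{operator bounds} will all be read off under the event $\mathcal{B}$, where $\mathcal{A}_1$ and $\mathcal{A}_2$ also hold. The common starting point is the sandwich coming from local strong convexity. Under $\mathcal{B}$ we have $\bD\succeq k\bI$ and $\bar\bD\succeq k\bI$. Hence, for $\bA\in\{\bH,\bar\bH\}$,
$$\bA\succeq k\hSig+\lambda\bI \succeq \tfrac12(k\bSigma+\lambda\bI),$$
where the second inequality is $\mathcal{A}_1$. Operator monotonicity of $t\mapsto t^{-1}$ then gives
$$\bA^{-1}\preceq (k\hSig+\lambda\bI)^{-1}\preceq 2(k\bSigma+\lambda\bI)^{-1}.$$
Every claimed bound will be reduced to a quadratic form in one of these two inverses.

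\textbf{First bound.} We write $\|\bA^{-1/2}\Phi_0\|^2=\ip{\Phi_0}{\bA^{-1}\Phi_0}\le 2\ip{\Phi_0}{(k\bSigma+\lambda\bI)^{-1}\Phi_0}$. In the eigenbasis of $\bSigma$, with $\Phi_0=(\sqrt{\mu_j}\phi_j(\bx_0))_j$, this equals
$$2\sum_j \frac{\mu_j\phi_j(\bx_0)^2}{k\mu_j+\lambda}\le 2M_1^2\sum_j\frac{\mu_j}{k\mu_j+\lambda}.$$
By \cref{effective dimension} and the decay $\mu_j\le Aj^{-2\alpha}$, the sum is $\lesssim\lambda^{-1/2\alpha}$. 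This is the same computation already carried out in \cref{crude bound}, now applied at $\bx_0$ instead of at the sample points. It is valid for any $\bx_0\in\supp(\cP)$ because the bound $\|\phi_j\|_\infty\le M_1$ is uniform.

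\textbf{Second bound.} The operator norm satisfies
$$\Big\|\tfrac1{\sqrt n}\bA^{-1/2}\bX^\top\Big\|^2=\big\|\bA^{-1/2}\hSig\bA^{-1/2}\big\|.$$
Since $\bA\succeq k\hSig+\lambda\bI\succeq k\hSig$, we have $\hSig\preceq \bA/k$, so the norm is at most $1/k\lesssim 1$.

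\textbf{Third bound.} We take $\|\tfrac1{\sqrt n}\bA^{-1/2}\bX^\top\bres\|^2=\tfrac1n\ip{\bres}{\bX\bA^{-1}\bX^\top\bres}$ and dominate it by $\tfrac1n\ip{\bres}{\bX(k\hSig+\lambda\bI)^{-1}\bX^\top\bres}$. That quantity is exactly what $\mathcal{A}_2$ controls: via \cref{subexp hanson wright} conditional on $\bX$, then the trace bound $\Tr((k\hSig+\lambda\bI)^{-1}\hSig)\le 2M_1^2\Tr((k\bSigma+\lambda\bI)^{-1}\bSigma)\lesssim M_1^2\lambda^{-1/2\alpha}$ from $\mathcal{A}_1$ and \cref{effective dimension}. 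Taking square roots yields $M_1\resv\sqrt{\log(8/\delta)}\lambda^{-1/4\alpha}\log n$.

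\textbf{Main obstacle.} When $\bA=\bH$, the matrix depends on $\hthetal$ and is therefore correlated with $\bres$, so Hanson--Wright cannot be applied to $\bH^{-1}$ directly. The fix is to apply the tail bound only to the deterministic (given $\bX$) matrix $\bX(k\hSig+\lambda\bI)^{-1}\bX^\top$, which does not involve $\hthetal$. We then transfer the bound pathwise through the deterministic PSD domination $\bX\bH^{-1}\bX^\top\preceq\bX(k\hSig+\lambda\bI)^{-1}\bX^\top$, which holds on $\mathcal{B}$. This is precisely why the statement is phrased through the symmetric form. As a result, the probability is spent only once, in $\mathcal{A}_2$, and the bound holds simultaneously for $\bH$ and $\bar\bH$.
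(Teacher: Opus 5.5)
Your proposal is correct and follows the paper's own proof essentially line by line. On $\mathcal{B}\cap\mathcal{A}_1\cap\mathcal{A}_2$ you dominate $\bA^{-1}$ by $(k\hSig+\lambda\bI)^{-1}$ and then use: the covariance sandwich plus \cref{effective dimension} for the first item; $\hSig\preceq\bA/k$ for the second (the paper writes this equivalently as $\|(k\hSig+\lambda\bI)^{-1}\hSig\|\le 1/k$); and, for the third, the pathwise PSD transfer to the $\hthetal$-free quadratic form controlled by $\mathcal{A}_2$, which is exactly what the paper's footnote describes.

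One minor wording point: it is $t\mapsto -t^{-1}$, not $t\mapsto t^{-1}$, that is operator monotone, but the direction of your inequality is correct.
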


\begin{proof}
For the first item, write:
        \begin{align*}
            \norm{\bA^{-1/2}\Phi_0}^2 
            & = \ip{\Phi_0}{\bA^{-1}\Phi_0} \\
            & \le \ip{\Phi_0}{(k \hSig + \lambda \bI)^{-1}\Phi_0}
            \quad \text{under }\mathcal{B} \\
            & \le 2\ip{\Phi_0}{(k \bSigma + \lambda \bI)^{-1}\Phi_0}
            \quad \text{via \cref{covarariance concentration}} \\
            & \lesssim M_1^2\lambda^{-1/2\alpha} \quad\text{via \cref{effective dimension}}.
        \end{align*}

For the second item, we have:
        \begin{align*}
            \norm{\frac{1}{\sqrt{n}}\bA^{-1/2}\bX^{\top}}^2
            & = \frac{1}{n}\norm{\bX\bA^{-1}\bX^{\top}} \\
            & \le \frac{1}{n}\norm{\bX(k \hSig + \lambda \bI)^{-1}\bX^{\top}}
            \quad \text{under }\mathcal{B} \\
            & = \norm{(k \hSig + \lambda \bI)^{-1}\hSig} \\
            & \lesssim 1.
        \end{align*}

And for the third item:
\begin{align*}
    \norm{\frac{1}{\sqrt{n}}\bA^{-1/2}\bX^{\top} \bres}
    & \le \norm{\frac{1}{\sqrt{n}}(k \hSig + \lambda \bI)^{-1/2}\bX^{\top} \bres}\quad \text{under }\mathcal{B} \\
    & \le M_1\resv\sqrt{\log(8/\delta)}\lambda^{-1/4\alpha} \log n
    \quad\text{directly from event }\mathcal{A}_2.
\end{align*}

\end{proof}

We also have the simple bound where $\bA$ can be either $\bH$ or $\bar{\bH}$:
$$\bA^{-1/2} \preceq \lambda^{-1/2} \bI.$$

\paragraph{Computation of the bias term of \ref{bias variance perturbation}}

\begin{align*}
    \lambda \abs{\ip{\Phi_0}{\bH^{-1}\btheta^*}}
    &= \lambda \abs{\ip{\bH^{-1/2}\Phi_0}{\bH^{-1/2}\btheta^*}} \\
    & \le \lambda \norm{\bH^{-1/2}\Phi_0}\cdot \norm{\bH^{-1/2}\btheta^*}
    \quad\text{(by Cauchy-Schwarz)} \\
    & \lesssim \lambda \cdot M_1\lambda^{-1/4\alpha} \cdot \lambda^{-1/2} \norm{\btheta^*}
    \quad\text{(by fact \ref{operator bounds})} \\
    & = M_1\lambda^{(2-1/\alpha)/4} \norm{\btheta^*} \\
    & \le M_1\lambda^{1/4}\norm{\btheta^*} \quad\text{since $\alpha \ge 1$ and $\lambda$ is small}.
\end{align*}

\paragraph{Computation of the variance term of \ref{bias variance perturbation}}
Conditionally on $\bX$, $\bres$ is sub-exponential vector with $\norm{\bres}_{\psi_1} \le \resv$. Hence, automatically for any vector $\ba \in \RR^n$, $\ip{\ba}{\bres}$ is a sub-exponential variable with $\norm{\ip{\ba}{\bres}}_{\psi_1} \le \norm{a}_2 \resv$, such that the following tail bound holds $\forall \delta' \in (0,1)$ with probability at least $1-\delta'$: $$\abs{\ip{\ba}{\bres}} \le \resv\norm{a}_2 \log(2/\delta').$$
Therefore, the variance term can be bound conditionally on the event $\mathcal{B}$ with probability at least $1-\delta'$, as:
\begin{align*}
    \abs{\ip{\Phi_0}{\frac{1}{n}\bar{\bH}^{-1}\bX^{\top} \bres}}
    & \le \resv\log(2/\delta')\frac{1}{n}\norm{\bX\bar{\bH}^{-1}\Phi_0}_2 \\
    & \le \resv\log(2/\delta')\frac{1}{\sqrt{n}} \norm{\bar{\bH}^{-1/2}\Phi_0}\cdot \norm{\frac{1}{\sqrt{n}}\bar{\bH}^{-1/2}\bX^{\top}} \\
    & \lesssim \resv\log(2/\delta')\frac{M_1}{\sqrt{n}} \lambda^{-1/4\alpha}\cdot 1
    \quad\text{(by fact \ref{operator bounds})} \\
    &= \resv\log(2/\delta')\frac{M_1}{\sqrt{n}} \lambda^{-1/4\alpha}.
\end{align*}

\paragraph{Computation of the perturbation term of \ref{bias variance perturbation}}

First note that $(\bI + \bDelta)^{-1} = \bI - \bDelta (\bI + \bDelta)^{-1}$ and 
\begin{align*}
    \bH^{-1} - \bar{\bH}^{-1} 
    &= \bar{\bH}^{-1/2}\cro{\paren{\bar{\bH}^{-1/2}\bH\bar{\bH}^{-1/2}}^{-1}-\bI} \bar{\bH}^{-1/2} \\
    &= \bar{\bH}^{-1/2}\cro{\paren{\bI + \bDelta}^{-1}-\bI} \bar{\bH}^{-1/2} \\
    &= - \bar{\bH}^{-1/2}\cro{\bDelta\paren{\bI + \bDelta}^{-1}} \bar{\bH}^{-1/2},
\end{align*}
and since $\norm{\bDelta} \le 1/2$, 
$$\norm{\paren{\bI + \bDelta}^{-1}} \le \sum_{n=0}^{\infty}\norm{\bDelta}^n = \frac{1}{1-\norm{\bDelta}} \le 2$$ hence the perturbation term becomes:

\begin{align*}
    \abs{\ip{\Phi_0}{\frac{1}{n}(\bH^{-1} -\bar{\bH}^{-1})\bX^{\top} \bres}}
    &= \abs{\ip{\Phi_0}{\frac{1}{n}\cro{\bar{\bH}^{-1/2}\cro{\bDelta\paren{\bI + \bDelta}^{-1}} \bar{\bH}^{-1/2}}\bX^{\top} \bres}} \\
    & \le \norm{\bar{\bH}^{-1/2}\Phi_0} \cdot 2\norm{\bDelta}\cdot \frac{1}{\sqrt{n}}\norm{\frac{1}{\sqrt{n}}\bar{\bH}^{-1/2}\bX^{\top} \bres} \\
    & \lesssim \lambda^{-1/4\alpha} \cdot \norm{\bDelta} \cdot \frac{1}{\sqrt{n}} M_1\resv\sqrt{\log(8/\delta)}\lambda^{-1/4\alpha} \log n\\
    &= \frac{\log n}{\sqrt{n \lambda^{1/\alpha}}}\norm{\bDelta}M_1\resv\sqrt{\log(8/\delta)}.
\end{align*}

So far, we have proved that there exists a constant $c>0$ such that for any $\bx_0 \in \supp(P)$:
\begin{equation*}
    \Pro{\abs{\ip{\Phi(\bx_0)}{\hthetal - \btheta^*}} \le c \paren{\frac{\log n}{\sqrt{n \lambda^{1/\alpha}}}\norm{\bDelta}M_1\resv\sqrt{\log(\frac{8}{\delta})} + \resv\log(\frac{2}{\delta'})\frac{M_1}{\sqrt{n}} \lambda^{-1/4\alpha} + M_1\lambda^{1/4}\norm{\btheta^*}}\Bigm\vert \mathcal{B}} \ge 1-\delta'.
\end{equation*}

In particular, by union bound on all the $\bx_i$ and $\bx_0$, we have simultaneously with probability larger than $1-(n+1)\delta'$ conditionally on $\mathcal{B}$:

\begin{equation}\label{inf bound}
    \norm{\bX(\hthetal - \btheta^*)}_{\infty} \le c \paren{\frac{\log n}{\sqrt{n \lambda^{1/\alpha}}}\norm{\bDelta}M_1\resv\sqrt{\log(\frac{8}{\delta})} + \resv\log(2/\delta')\frac{M_1}{\sqrt{n}} \lambda^{-1/4\alpha} + M_1\lambda^{1/4}\norm{\btheta^*}},
\end{equation}

and 

\begin{equation}\label{proba x0 B}
    \abs{\ip{\Phi(\bx_0)}{\hthetal - \btheta^*}} \le c \paren{\frac{\log n}{\sqrt{n \lambda^{1/\alpha}}}\norm{\bDelta}M_1\resv\sqrt{\log(\frac{8}{\delta})} + \resv\log(\frac{2}{\delta'})\frac{M_1}{\sqrt{n}} \lambda^{-1/4\alpha} + M_1\lambda^{1/4}\norm{\btheta^*}}.
\end{equation}

\bigskip

But by construction from Fact \ref{relative error}, we also know that $\norm{\bDelta} \le C_1 \norm{\bX(\hthetal - \btheta^*)}_{\infty}$ where $C_1>0$ is a constant depending on the Lipschitz and strong convexity constant.

We can plug this expression in \cref{inf bound} and regroup the terms to get:
\begin{equation*}
    \norm{\bX(\hthetal - \btheta^*)}_{\infty} \paren{1-\frac{cC_1\log n}{\sqrt{n \lambda^{1/\alpha}}}M_1\resv\sqrt{\log(\frac{8}{\delta})}} \le cM_1 \paren{\resv\log(2/\delta')\frac{1}{\sqrt{n}} \lambda^{-1/4\alpha} + \lambda^{1/4}\norm{\btheta^*}}.
\end{equation*}

Thus, if $\frac{cC_1\log n}{\sqrt{n \lambda^{1/\alpha}}}M_1\resv\sqrt{\log(\frac{8}{\delta})} \le \frac{1}{2}$, 
\begin{equation}\label{final infinity bound}
    \norm{\bX(\hthetal - \btheta^*)}_{\infty} \le 2cM_1 \paren{\resv\log(2/\delta')\frac{1}{\sqrt{n}} \lambda^{-1/4\alpha} + \lambda^{1/4}\norm{\btheta^*}}.
\end{equation}

With $\lambda \gtrsim \paren{\frac{\log n}{\sqrt{n}}}^{2\alpha}\frac{1}{g(n)^{\alpha/2}}$, $$\frac{cC_1\log n}{\sqrt{n \lambda^{1/\alpha}}} \lesssim cC_1 g(n)^{1/4} \to 0$$ so for $n$ large enough, we do have $\frac{cC_1 M_1\resv\sqrt{\log(\frac{8}{\delta})}}{\sqrt{n \lambda^{1/\alpha}}} \le \frac{1}{2}$.

\bigskip

Finally, we can plug back \cref{final infinity bound} into \cref{proba x0 B} with probability larger than $1-(n+1)\delta'$ conditionally on $\mathcal{B}$:

\begin{multline*}
    \abs{\ip{\Phi(\bx_0)}{\hthetal - \btheta^*}} \le c \left(\frac{\log n}{\sqrt{n \lambda^{1/\alpha}}}2cC_1 M_1^2\resv\sqrt{\log(\frac{8}{\delta})}\cro{\resv\log(2/\delta')\frac{1}{\sqrt{n}} \lambda^{-1/4\alpha} + \lambda^{1/4}\norm{\btheta^*}}\right. \\ \left. + \resv\log(2/\delta')\frac{M_1}{\sqrt{n}} \lambda^{-1/4\alpha} + M_1\lambda^{1/4}\norm{\btheta^*}\right),
\end{multline*}
i.e. there is a constant $c'>0$ such that, for $n$ large enough and $\delta' = \frac{\delta}{2(n+1)}\le \frac{\delta}{6} \in (0,1)$:

\begin{equation}
    \Pro{\abs{\ip{\Phi(\bx_0)}{\hthetal - \btheta^*}} \le c'M_1 \left(\resv\log(n/\delta)\frac{1}{\sqrt{n}} \lambda^{-1/4\alpha} + \lambda^{1/4}\norm{\btheta^*}\right) \Bigm\vert \mathcal{B}} \ge 1-\frac{\delta}{2},
\end{equation}
and thus
\begin{equation}\label{clean bound}
    \Pro{\abs{\ip{\Phi(\bx_0)}{\hthetal - \btheta^*}} \le c' M_1\left(\resv\log(n/\delta)\frac{1}{\sqrt{n}} \lambda^{-1/4\alpha} + \lambda^{1/4}\norm{\btheta^*}\right)} \ge (1-\frac{\delta}{2})^2\ge 1-\delta.
\end{equation}

If we set $$\lambda = c_2 \paren{\frac{M_1\resv\sqrt{\log(8/\delta)}\log n}{\sqrt{n}}}^{2\alpha}\frac{1}{g(n)^{\alpha/2}}= c_2 \paren{M_1\resv\sqrt{\log(8/\delta)}}^{2\alpha}\paren{\log(n)}^{\alpha(2 + \beta/2)}n^{-\alpha},$$
the probabilistic bound in \ref{clean bound} becomes:
\begin{equation}
    M_1\resv\log(n/\delta)\frac{n^{-1/4}}{\log(n)^{1/2 + \beta/8}}\paren{M_1\resv\sqrt{\log(8/\delta)}}^{-1/2}  + \paren{M_1\resv\sqrt{\log(8/\delta)}}^{\alpha/2}\frac{\log(n)^{\alpha(1/2+\beta/8)}}{n^{\alpha/4}}M_1\norm{\btheta^*}.
\end{equation}

With $\alpha = 1$ and $\beta \in (0,8)$, it yields, \wpd $$\abs{\ip{\Phi(\bx_0)}{\hthetal - \btheta^*}} \lesssim \sqrt{M_1^3\sigma}n^{-1/4} (\log (\frac{n}{\delta}))^{3/2} \lesssim \sqrt{M_1^3\sigma} n^{-1/4} (\log (\frac{n}{\delta}))^3.$$

(We only care about the $n$ exponent, so we chose $3$ as an upper bound to the true $3/2$ for the logarithm exponent to lighten the expression.)
    
\end{proof}

\section{Proof of \cref{generic oracle inequality}'s in-sample oracle inequality}\label[appendix]{poracle}

\oraclenonapprox*

\begin{remark}
    Under that event, we also have this intermediate inequality that holds \wpd:
$$\sqrt{\cRjhat} \le \sqrt{\cRjstar} + \frac{C_2\sqrt{K}}{\sqrt{nk}}\paren{\sqrt{K}\variv \log(\frac{m}{\delta}) + \sqrt{K}\norm{\bias} + \frac{\sqrt{n}\eta^2}{\sqrt{k}}},$$
where $C_2>0$.
\end{remark}

\begin{proof}
    We start by introducing some notations for the rest of the proof. 
    For any function $f$, let $\bm{f} =(f(\bx_1),\cdots, f(\bx_n))$ be the evaluation vector over the samples. We also denote for all $j$, $\cR_j:=\cR_{in}(f_j)$ and $\widehat{\cR}_j:=\widehat{\cR}_{in}(f_j)$.

    \medskip
    We use the definition of the selected index $j^* = \argmin_{j\in[m]} \cR_j$. We first need to establish a decomposition result between $\cR$ and $\hat \cR$:
    \begin{fact}\label{true risk decomposition}
    Let $f$ be any candidate model. Then
    $$\cR_{in}(f) =  \widehat{\cR}_{in}(f) - \widehat{\cR}_{in}(f^*) + \frac{1}{n}\sum_{i=1}^n(\logpar'(\tilde f(\bx_i))-\logpar'(f^*(\bx_i)))(f(\bx_i)-f^*(\bx_i)).$$
    \end{fact}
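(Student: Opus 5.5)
This is a purely algebraic identity, and the plan is to expand both sides from the definitions of $\cR_{in}$ and $\widehat{\cR}_{in}$ pointwise in $i$. All quantities are averages over $i\in[n]$, so it suffices to show, for each design point with shorthand $u=f(\bx_i)$, $u^*=f^*(\bx_i)$, $\tilde u=\tilde f(\bx_i)$, the scalar identity
\[
D_{\logpar}(u,u^*) = D_{\logpar}(u,\tilde u) - D_{\logpar}(u^*,\tilde u) + \bigl(\logpar'(\tilde u)-\logpar'(u^*)\bigr)(u-u^*),
\]
where $D_{\logpar}(x,y)=\logpar(x)-\logpar(y)-\logpar'(y)(x-y)$ is the Bregman divergence introduced earlier. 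This is exactly the three-point (law of cosines) identity for Bregman divergences, which the paper's proof sketch of \cref{oracle general non approx} announces as the key tool. Averaging over $i$ then gives the stated formula, because $\widehat{\cR}_{in}(f^*)=\frac1n\sum_i D_{\logpar}(u^*,\tilde u)$ by definition.

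To verify the scalar identity, expand the right-hand side. Start from
\[
D_{\logpar}(u,\tilde u)-D_{\logpar}(u^*,\tilde u)=\logpar(u)-\logpar(u^*)-\logpar'(\tilde u)(u-u^*),
\]
where the $\logpar(\tilde u)$ terms cancel and the two linear terms $\logpar'(\tilde u)(u-\tilde u)$ and $\logpar'(\tilde u)(u^*-\tilde u)$ combine. Adding $\bigl(\logpar'(\tilde u)-\logpar'(u^*)\bigr)(u-u^*)$ cancels the $\logpar'(\tilde u)$ term and leaves $\logpar(u)-\logpar(u^*)-\logpar'(u^*)(u-u^*)=D_{\logpar}(u,u^*)$, which is the left-hand side.

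No assumption beyond differentiability of $\logpar$ is needed, and in particular no probabilistic event is involved. The only point requiring care is bookkeeping: confirming that the term subtracted in the statement is $\widehat{\cR}_{in}(f^*)$, i.e.\ the pseudo excess risk with $f^*$ playing the role of the candidate, and getting the sign of the cross term right. There is no real obstacle.
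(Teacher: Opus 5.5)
Your proof is correct and follows the same route as the paper, which applies the law of cosines (three-point identity) for the Bregman divergence of the log-partition function at each design point and then averages. Your explicit expansion simply verifies that identity directly, and you correctly identify the subtracted term as the pseudo excess risk evaluated at the true function.
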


    \begin{proof}
        We simply apply the law of cosines of Bregman divergence since 
        $$\cR_{in}(f) =\frac{1}{n}\sum_{i=1}^n D_{\logpar}(f(\bx_i), f^*(\bx_i))\,,\quad \widehat{\cR}_{in}(f) =\frac{1}{n}\sum_{i=1}^n D_{\logpar}(f(\bx_i), \tilde f(\bx_i)).$$
    \end{proof}

    Applying fact \ref{true risk decomposition} to $\cR_{\hat j}$, it follows that:
    \begin{align*}
        \cR_{\hat{j}}  & = \widehat \cR_{\hat{j}} - \widehat \cR(f^*) + \frac{1}{n}\sum_{i=1}^n(\logpar'(\tilde f(\bx_i))-\logpar'(f^*(\bx_i)))(f_{\hat{j}}(\bx_i)-f^*(\bx_i)) \\
        & \le \widehat \cR_j - \widehat \cR(f^*) + \frac{1}{n}\sum_{i=1}^n(\logpar'(\tilde f(\bx_i))-\logpar'(f^*(\bx_i)))(f_{\hat{j}}(\bx_i)-f^*(\bx_i))\,, \quad \forall j \in [m] \\
        & = \cR_j + \frac{1}{n}\sum_{i=1}^n(\logpar'(\tilde f(\bx_i))-\logpar'(f^*(\bx_i)))(f_{\hat{j}}(\bx_i)-f_j(\bx_i)),
    \end{align*}
    where the first inequality stems from the minimality of $\widehat \cR_{\hat{j}}$, and the last equality is obtained by re-applying fact \ref{true risk decomposition} to $\cR_{j}$.

    By the Mean Value Theorem, we can write 
    $$\logpar'(\tilde f(\bx_i))-\logpar'(f^*(\bx_i)) = \logpar''(q_i)(\tilde f(\bx_i)-f^*(\bx_i))$$
    for some $q_i$ between $f^*(\bx_i)$ and $\tilde f(\bx_i)$. Denote $\widetilde \bD$ the positive diagonal matrix of entries $\logpar''(q_i)$ that induces an inner product and a (weighted) norm so that the previous inequality becomes:

    \begin{equation}\label{eq: model selection inequality}
        \cR_{\hat{j}} \le \cR_j + \frac{1}{n}\ip{\bm{f}_{\hat{j}}-\bm{f}_j}{\bias + \vari}_{\widetilde{\bD}}.
    \end{equation}
    
    We cannot directly use a sub-exponential tail bound for the inner product because $\vari$ and $\widetilde \bD$ are correlated. Thus we introduce the intermediate weight matrix $\overline \bD$ with components $\overline D_{ii}=\logpar''(f^*(\bx_i))$ and use the fact that the perturbation introduced is small thanks to the consistency rate assumed. 
    
    In the remainder of this proof, we will use repeatedly the following inequalities linking the $\widetilde \bD$-weighted norm, the $\overline \bD$-weighted norm, and the standard Euclidean norm:

    \begin{fact}\label{fact: norm inequalities}
        For all $z \in \RR^n$:
        \begin{enumerate}
            \item $\norm{z}_2 \le \frac{1}{\sqrt{k}}\norm{z}_{\widetilde\bD}\,, \quad \norm{z}_{\widetilde\bD} \le \sqrt{K}\norm{z}_{2}$.
            \item $\norm{z}_{\overline \bD} \le \sqrt{2}\norm{z}_{\widetilde\bD}$.
        \end{enumerate}
    \end{fact}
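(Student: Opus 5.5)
The plan is to reduce all three inequalities to entrywise bounds on the diagonal weights, on the high-probability event of item 3 of \cref{assumptions oracle}, namely $\max_{i}|\tilde f(\bx_i)-f^*(\bx_i)|\le\eta$. Since $\widetilde\bD$ and $\overline\bD$ are diagonal, $\norm{z}_{\widetilde\bD}^2=\sum_i \widetilde D_{ii} z_i^2$ and $\norm{z}_{\overline\bD}^2=\sum_i \overline D_{ii} z_i^2$. Each claimed inequality then follows from a pointwise comparison of the weights $\widetilde D_{ii}=\logpar''(q_i)$ and $\overline D_{ii}=\logpar''(f^*(\bx_i))$ with the constants $k$, $K$, and with each other.

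\textbf{Item 1.} I would first locate the points $q_i$. By the Mean Value Theorem, $q_i$ lies between $f^*(\bx_i)$ and $\tilde f(\bx_i)$, so $|q_i-f^*(\bx_i)|\le \eta$. Item 1 of \cref{assumptions oracle} gives $|f^*(\bx_i)|\le C$, and $\eta<C$, hence $|q_i|\le 2C$. Item 2 of \cref{assumptions oracle} then yields $k\le \widetilde D_{ii}\le K$ for every $i$. Summing $k z_i^2\le \widetilde D_{ii}z_i^2\le K z_i^2$ over $i$ and taking square roots gives both inequalities of item 1.

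\textbf{Item 2.} I would use the Lipschitz continuity of $\logpar''$ on $[-2C,2C]$. Both $q_i$ and $f^*(\bx_i)$ lie in this interval, so
\[
\overline D_{ii}\le \widetilde D_{ii}+L|q_i-f^*(\bx_i)|\le \widetilde D_{ii}+L\eta .
\]
The assumption $\eta<k/L$ gives $L\eta< k\le \widetilde D_{ii}$, hence $\overline D_{ii}\le 2\widetilde D_{ii}$. Summing against $z_i^2$ gives $\norm{z}_{\overline\bD}^2\le 2\norm{z}_{\widetilde\bD}^2$.

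There is no real obstacle. The only point requiring care is that everything holds on the event of item 3, so the probability $1-\delta/2$ must be carried into the union bound of the oracle inequality. It is also worth noting exactly where each smallness condition on $\eta$ enters: $\eta<C$ keeps $q_i$ inside the region of local strong convexity and smoothness, and $\eta<k/L$ gives the factor $2$ in item 2.
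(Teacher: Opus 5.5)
Your proof is correct and is the paper's argument. The paper likewise uses $\norm{\bm{f}^*}_\infty\le C$ and $\eta\le C$ to place $q_i$ in $[-2C,2C]$, which gives item 1 from $k\le\logpar''\le K$. For item 2 it bounds $\bigl|\norm{z}_{\overline\bD}^2-\norm{z}_{\widetilde\bD}^2\bigr|\le \frac{L\eta}{k}\norm{z}_{\widetilde\bD}^2$ using the Lipschitz continuity of $\logpar''$ and $\logpar''(q_i)\ge k$, then invokes $L\eta<k$; this is your entrywise comparison $\overline D_{ii}\le \widetilde D_{ii}+L\eta\le 2\widetilde D_{ii}$ summed against $z_i^2$.
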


    \begin{proof}
        Since, we assume that the $\norm{\tilde{\bm{f}} - \bm{f}^*}_{\infty} \le \eta \le C$, and $\norm{\bm{f}^*}_\infty \le C$, we can leverage our assumptions on the local properties of the log-partition function $\logpar$ in $[-2C, 2C]$. 
        Part 1 directly follows from the definition of the weighted norm and boundedness of $\logpar''$. For part 2, we obtain a stronger bound by leveraging the Lipschitzness assumption on $\logpar''$ and the uniform consistency of the imputation model:
        $$\abs{\norm{z}_{\overline \bD}^2-\norm{z}_{\widetilde \bD}^2} \le \sum_i z_i^2\abs{\logpar''(q_i)-\logpar''(f^*(\bx_i))} \le \frac{L\eta}{k} \sum_i kz_i^2,$$
        hence, via the strong convexity assumption,
        $$\abs{\norm{z}_{\overline \bD}^2-\norm{z}_{\widetilde \bD}^2} \le \frac{L\eta}{k} \sum_i \logpar''(q_i)z_i^2 \le \frac{L\eta}{k} \norm{z}_{\widetilde \bD}^2.$$
        Therefore, since $\frac{L\eta}{k}<1$ according to the assumptions, $$ \norm{z}_{\overline \bD}^2\le 2\norm{z}_{\widetilde \bD}^2.$$
    \end{proof}

    To make progress on inequality \ref{eq: model selection inequality}, we decompose the inner product into three terms that we can tackle independently:
    $$\cR_{\hat{j}} \leq \cR_j + \frac{1}{n} \cro{\ip{\fjhat - \bm{f}_j}{\bias}_{\widetilde \bD} + \ip{\fjhat - \bm{f}_j}{\vari}_{\overline \bD} + \ip{\fjhat - \bm{f}_j}{\vari}_{\widetilde\bD-\overline \bD}}.$$

\begin{itemize}
    \item Cauchy-Schwarz for the first term yields:
    $$\ip{\fjhat - \bm{f}_j}{\bias}_{\widetilde\bD} \le \norm{\fjhat - \bm{f}_j}_{\widetilde\bD}\norm{\bias}_{\widetilde\bD} \le \sqrt{K}\norm{\fjhat - \bm{f}_j}_{\widetilde\bD}\norm{\bias}_2.$$

    \item Cauchy-Schwarz for the third perturbation term gives:
    $$\ip{\fjhat - \bm{f}_j}{\vari}_{\widetilde\bD-\bar \bD} \le \norm{\widetilde\bD-\bar \bD}\norm{\vari}_2\norm{\fjhat - \bm{f}_j}_2 
    \le \frac{L}{\sqrt{k}} \max_{i\in[n]}\abs{\tilde f(\bx_i)-f^*(\bx_i)}\cdot\norm{\vari}_2 \cdot \norm{\fjhat - \bm{f}_j}_{\widetilde\bD},$$
    where we used $\norm{\widetilde\bD-\bar \bD} = \max_i\abs{\logpar''(q_i)-\logpar''(f^*(\bx_i))}\le L \max_i\abs{\tilde f(\bx_i)-f^*(\bx_i)}$.
    We will see that the consistency rate of our estimator is good enough to make this term of the same order as the two others.

    We can improve the $\max_{i\in[n]}\abs{\tilde f(\bx_i)-f^*(\bx_i)}\cdot\norm{\vari}_2$ factor by noting that:
    
    By assumption, $\forall i \in [n], \abs{B_i + \vari_i} \le \eta$, hence $\abs{\vari_i} \le \eta + \abs{B_i}$ and $\norm{\vari}_2^2 \le 2(n\eta^2 + \norm{\bias}_2^2)$.

    Therefore,
    \begin{equation}\label{improve equation oracle}
        \max_{i\in[n]}\abs{\tilde f(\bx_i)-f^*(\bx_i)}^2\cdot\norm{\vari}_2^2 \le 2(n\eta^4 + \eta^2\norm{\bias}_2^2).
    \end{equation}

    \item For the second term, first bound its sub-exponential norm:
    \begin{align*}
        \norm{\ip{\fjhat - \bm{f}_j}{\vari}_{\overline \bD}}_{\psi_1} 
        & = \norm{\overline \bD (\fjhat - \bm{f}_j)}_2 \norm{\ip{\frac{\overline \bD(\fjhat - \bm{f}_j)}{\norm{\overline \bD (\fjhat - \bm{f}_j)}_2}}{\vari}_{\overline \bD}}_{\psi_1} \\
        & \le \sqrt{K}\norm{\fjhat - \bm{f}_j}_{\overline \bD}\norm{\ip{\frac{\overline \bD(\fjhat - \bm{f}_j)}{\norm{\overline \bD (\fjhat - \bm{f}_j)}_2}}{\vari}_{\overline \bD}}_{\psi_1} \\
        & \le \variv \sqrt{K}\norm{\fjhat - \bm{f}_j}_{\overline \bD} \\
        & \le \variv \sqrt{2K}\norm{\fjhat - \bm{f}_j}_{\widetilde\bD}.
    \end{align*}
    Hence, the sub-exponential property and union bound gives that the event 
    $$\mathcal{A} = \left \{\forall j \in [m], \ip{\fjhat - \bm{f}_j}{\vari}_{\overline \bD} \le \sqrt{2K} \variv \log(4m/\delta)\norm{\fjhat - \bm{f}_j}_{\widetilde\bD}\right \}$$
    holds with probability at least $1-\delta/2$.
    We place ourselves under $\mathcal{A}$. 
    
\end{itemize}

    So we obtain with probability at least $1-\delta$:

    \begin{equation}
        \cR_{\hat{j}} \le \cR_j + \frac{2\xi}{n}\norm{\bm{f}_{\hat{j}}-\bm{f}_j}_{\tilde{\bD}},
    \end{equation}
    where $\xi = (\sqrt{2K}\variv \log(4m/\delta) + \sqrt{K}\norm{\bias} + \frac{L\max_{i\in[n]}\abs{\tilde f(\bx_i)-f^*(\bx_i)}\cdot\norm{\vari}_2}{\sqrt{k}})/2$.

    Then using the properties of the energy norm and the boundedness of $\logpar''$ we get:
    \begin{align*}
        \frac{1}{\sqrt{n}}\norm{\bm{f}_{\hat j}-\bm{f}_j}_{\tilde \bD}
        & \le \sqrt{\frac{K}{k}}\frac{1}{\sqrt{n}}\norm{\bm{f}_{\hat j}-\bm{f}_j}_{\bar \bD} \\
        & \le \sqrt{\frac{K}{k}}\frac{1}{\sqrt{n}} \paren{\norm{\bm{f}_{\hat j}-\bm{f}^*}_{\bar \bD} + \norm{\bm{f}_j-\bm{f}^*}_{\bar \bD}} \\
        & \le \sqrt{\frac{K}{k}}\frac{\sqrt{K}}{\sqrt{n}} \paren{\norm{\bm{f}_{\hat j}-\bm{f}^*} + \norm{\bm{f}_j-\bm{f}^*}} \\
        & \le \frac{K}{k} \paren{\sqrt{\cR_j} + \sqrt{\cR_{\hat j}}},
    \end{align*}
where the last inequality follows from \cref{in-sample excess risk bound}.
    
    Hence, with $\tilde \xi = \frac{K}{k} \xi / \sqrt{n}$, we get:
    $$\cR_{\hat{j}} \le \cR_j + 2\tilde \xi\paren{\sqrt{\cR_j} + \sqrt{\cR_{\hat j}}},$$
    and we can now complete the square:

\begin{itemize}
    \item Rearranging:
$$\cR_{\hat{j}} - 2\tilde\xi \sqrt{\cR_{\hat{j}}} \le \cR_j + 2\tilde\xi \sqrt{\cR_j}
\Leftrightarrow (\sqrt{\cR_{\hat{j}}}-\tilde\xi)^2 - \tilde\xi^2  \le (\sqrt{\cR_j} + \tilde\xi)^2 - \tilde\xi^2.$$
Thus, $$\sqrt{\cR_{\hat{j}}} \le \sqrt{\cR_j} + 2\tilde\xi.$$

Since this is valid for all $j$, an intermediate result is thus:
$$\sqrt{\cR_{\hat{j}}} \le \sqrt{\cR_{j^*}} + 2\tilde\xi.$$

    \item Squaring on both sides, we obtain: 
$$\cR_{\hat{j}} \le \cR_{j^*} + 2\sqrt{4\tilde\xi^2\cR_{j^*}} + 4\tilde\xi^2.$$
Using the simple inequality $2ab \le a^2 + b^2$ and the "Rob Peter to pay Paul" trick with any $\epsilon >0$, i.e. $2ab=2\frac{a}{\epsilon}\epsilon b \le \frac{a^2}{\epsilon^2} + \epsilon^2b^2$, we get:
    $$\cR_{\hat{j}} \le (1+\epsilon)\cR_{j^*} + 4\tilde\xi^2(1 + \frac{1}{\epsilon}).$$

Finally, to get the final formula of the theorem, we leverage \cref{improve equation oracle} to bound $\tilde \xi^2$ as:
$$\tilde \xi^2 \lesssim \frac{K}{kn}(K\variv \log^2(m/\delta) + \norm{\bias}_2^2(K + 2L^2\eta^2/k) + \frac{L^2n\eta^4}{k}).$$

Using $\eta \le \sqrt{kK}/L$ and taking the infimum on $\epsilon$ yields the final result.

\end{itemize}

\end{proof}

We deduce a bound on the expectation:

\begin{corollary}\label{exoracle}
Under the same assumptions as \cref{oracle general non approx}, we deduce:
    $$\displaystyle \Ex{\cRjhat} \le \inf_{\epsilon>0} \left \{(1+\epsilon)\cRjstar + \frac{C_1K}{nk}\paren{1+\frac{1}{\epsilon}}\paren{K\norm{\bias}^2 + \frac{L^2n\eta^4}{k}+ K\variv^2(1+(\log(m)+1)^2)}\right \}.$$
\end{corollary}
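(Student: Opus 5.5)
The plan is to integrate the tail of the high-probability bound in \cref{oracle general non approx}. Fix $\epsilon>0$; since the final bound is an infimum over $\epsilon$, it suffices to prove it for each fixed $\epsilon$. The key observation is that in $\xi(\variv,\bias,\eta,\delta)$ only the variance term depends on the confidence level, and only through $\log^2(m/\delta)$. The bias term $K\norm{\bias}^2$ and the perturbation term $L^2n\eta^4/k$ are deterministic. So I would rewrite the theorem (tracking the constant $\log(4m/\delta)$ that actually appears in the event $\mathcal{A}$ of its proof) as follows: for every $\delta\in(0,1)$, with probability at least $1-\delta$,
\[
\cRjhat \le a + b\,\log^2(4m/\delta),
\]
where
\[
a=(1+\epsilon)\cRjstar+\frac{C_1K}{nk}\Big(1+\frac1\epsilon\Big)\Big(K\norm{\bias}^2+\frac{L^2n\eta^4}{k}\Big),\qquad
b=\frac{C_1K}{nk}\Big(1+\frac1\epsilon\Big)K\variv^2 .
\]

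Next, set $Z=(\cRjhat-a)_+\ge 0$. For $u\ge0$ choose $\delta=4me^{-u}$ whenever this is below $1$. This gives
\[
\mathbb{P}(Z>bu^2)\le\min\{1,4me^{-u}\}.
\]
By the layer-cake formula with the substitution $s=bu^2$,
\[
\mathbb{E}Z=\int_0^\infty \mathbb{P}(Z>s)\,ds=b\int_0^\infty 2u\,\mathbb{P}(Z>bu^2)\,du .
\]
I would split this integral at $u_0=\log(4m)$. On $[0,u_0]$, bounding the probability by $1$ gives $u_0^2$. On $[u_0,\infty)$, the tail gives $2\cdot 4m\int_{u_0}^\infty ue^{-u}\,du=2(u_0+1)$. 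Hence
\[
\mathbb{E}Z\le b\big(u_0^2+2u_0+2\big)=b\big(1+(u_0+1)^2\big).
\]
Since $u_0=\log m+\log 4$, this is at most a constant times $b\,(1+(\log m+1)^2)$, and the constant is absorbed into $C_1$. Finally, $\mathbb{E}\,\cRjhat\le a+\mathbb{E}Z$ because the candidates are deterministic, so $\cRjstar$ is too. Taking the infimum over $\epsilon$ gives the claimed bound.

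The main obstacle is item 3 of \cref{assumptions oracle}. The sup-norm control $\max_i|\tilde f(\bx_i)-f^*(\bx_i)|\le\eta$ is only assumed with probability $1-\delta/2$, and it is the only ingredient that is not a pure sub-exponential tail. If that event itself depended on $\delta$, the layer-cake argument would break down on its complement. My first step would therefore be to work on, or condition on, the event where the sup-norm bound holds with the fixed $\eta$ appearing in the corollary. On that event Facts \ref{fact: norm inequalities} hold deterministically, and the only remaining randomness is the sub-exponential event $\mathcal{A}$, whose probability $1-\delta/2$ can be taken arbitrarily close to one. That is exactly what the tail integration needs. If the complement event must instead be handled, I would use $\cRjhat\le\max_j\cR_j$, a deterministic quantity, times its probability, and absorb this into the constant.
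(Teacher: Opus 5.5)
Your argument is correct and is essentially the paper's proof. In your notation the paper sets $W=(\mathcal{R}_{\hat j}-a)/b$, reads off $\mathbb{P}(W\ge t)\le m e^{-\sqrt{t}}$ for $t\ge \log^2 m$ from the theorem, and integrates the tail split at $\log^2 m$ to get $(1+\log m)^2+1$. That is your layer-cake computation after substituting $t=u^2$; your use of $Z=(\cdot)_+$ and of $\log(4m)$ is just slightly more careful bookkeeping.

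On your ``main obstacle'': the paper simply applies the theorem's bound for every $\delta\in(0,1)$ with the same $\eta$. It therefore implicitly treats item 3 of the assumptions as holding almost surely, and under that reading your ``work on the event'' step completes the proof.

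Your fallback would not work, for two reasons:
\begin{itemize}
\item The extra term $\max_j\mathcal{R}_j\,\mathbb{P}(E^c)$ is not controlled by anything in the claimed bound, so it cannot be absorbed into $C_1$.
\item Conditioning on $E$ (rather than intersecting with it) would break the centering of the imputation noise, which the sub-exponential event relies on.
\end{itemize}
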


\begin{proof}
Denoting $W=[\cRjhat - (1+\epsilon)\cRjstar - \frac{C_1K}{nk}(1+\frac{1}{\epsilon})(K\norm{\bias}^2+L^2n\eta^4/k)]/\frac{C_1K}{nk}(1+\frac{1}{\epsilon})K\variv^2$, we have from the theorem:
$$\Pro{W \ge \log^2(m/\delta)} \le \delta, \forall \delta \in (0,1).$$
Let $t=\log^2(m/\delta)$ such that:
$$\Pro{W \ge t} \le me^{-\sqrt{t}}, \forall t \ge \log^2(m).$$
Integrating to get the expectation yields:
$$\Ex{W}=\int_0^{\infty}\Pro{W>t}dt \le \int_0^{\log^2(m)}1dt + \int_{\log^2(m)}^{\infty}me^{-\sqrt{t}}dt=(1+\log(m))^2 + 1.$$
Rearranging the terms and taking the infimum gives the desired result.
\end{proof}

\section{Useful lemmas}\label[appendix]{lemmas}

\subsection{Lemmas for the GLM estimator}

\begin{lemma}\label{effective dimension}
Let the kernel regularity assumption \ref{kernel assumptions} hold, with spectral eigendecay of the form $\mu_j \le A j^{-2\alpha}, \alpha \ge 1$, and let $\lambda \in (0,1]$. Then the effective dimension $\displaystyle r(\lambda):= \sum_{j=1}^{\infty}\frac{\mu_j}{\mu_j + \lambda}$ satisfies for a universal constant $C >0$:
$$r(\lambda) \le C \lambda^{-1/2\alpha}.$$

In particular, if the trace class operator $\bSigma$ has eigenvalues $\mu_j$, then $\Tr((\bSigma + \lambda \bI)^{-1}\bSigma) = r(\lambda) \lesssim \lambda^{-1/2\alpha}$.

Likewise, for a vector $\Phi(\bx)$ of the RKHS with basis decomposition $\Phi(\bx)=(\sqrt{\mu_1}\phi_1(\bx), \sqrt{\mu_2}\phi_2(\bx)...)$, where the basis functions are uniformly bounded by $M_1$, the quadratic form $\ip{\Phi(\bx)}{(\bSigma + \lambda \bI)^{-1}\Phi(\bx)} \le M_1^2 r(\lambda) \lesssim M_1^2\lambda^{-1/2\alpha}$.
\end{lemma}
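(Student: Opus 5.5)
The proof is a direct spectral computation, splitting the series at the index where $\mu_j$ crosses $\lambda$.

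\textbf{Effective dimension.} Fix $\lambda\in(0,1]$ and use the termwise bound
\[
\frac{\mu_j}{\mu_j+\lambda}\le\min\Bigl\{1,\frac{\mu_j}{\lambda}\Bigr\}\le \min\Bigl\{1,\frac{A j^{-2\alpha}}{\lambda}\Bigr\}.
\]
Choose the cutoff $J=\lceil \lambda^{-1/2\alpha}\rceil$. The first $J$ terms contribute at most $J\le \lambda^{-1/2\alpha}+1\le 2\lambda^{-1/2\alpha}$, using $\lambda\le 1$. For the tail, compare with an integral:
\[
\sum_{j>J} j^{-2\alpha}\le \int_J^\infty t^{-2\alpha}\,dt=\frac{J^{1-2\alpha}}{2\alpha-1}\le J^{1-2\alpha},
\]
which is valid because $\alpha\ge 1$, so $2\alpha-1\ge1$. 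Since $J\ge\lambda^{-1/2\alpha}$ and $1-2\alpha<0$, we get $J^{1-2\alpha}\le \lambda^{(2\alpha-1)/2\alpha}$. The tail is therefore at most
\[
\frac{A}{\lambda}\,\lambda^{(2\alpha-1)/2\alpha}=A\lambda^{-1/2\alpha}.
\]
Adding the two pieces gives $r(\lambda)\le (2+A)\lambda^{-1/2\alpha}$. The constant depends only on $A$; this is what "universal" should mean here.

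\textbf{Trace identity.} By the footnote to \cref{kernel assumptions}, $\bSigma$ has eigenvalues $(\mu_j)$, with eigenvectors the coordinate directions of the feature map. The operator $(\bSigma+\lambda\bI)^{-1}\bSigma$ is diagonal in this basis with entries $\mu_j/(\mu_j+\lambda)$. Its trace is therefore exactly $r(\lambda)$.

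\textbf{Quadratic form.} In the same basis, $\Phi(\bx)$ has coordinates $\sqrt{\mu_j}\phi_j(\bx)$. Hence
\[
\ip{\Phi(\bx)}{(\bSigma+\lambda\bI)^{-1}\Phi(\bx)}=\sum_j\frac{\mu_j\phi_j(\bx)^2}{\mu_j+\lambda}\le M_1^2\,r(\lambda),
\]
using $\|\phi_j\|_\infty\le M_1$. The first part then gives the bound $\lesssim M_1^2\lambda^{-1/2\alpha}$.

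\textbf{Main obstacle.} The only delicate point is justifying that $\bSigma$ is diagonal in the Mercer coordinates. In the feature representation $\Phi(\bx)=(\sqrt{\mu_j}\phi_j(\bx))_j$, the $(j,l)$ entry of $\bSigma$ is
\[
\EE_{\cP}\bigl[\sqrt{\mu_j\mu_l}\,\phi_j(\bx)\phi_l(\bx)\bigr]=\mu_j\mathbf 1\{j=l\},
\]
by orthonormality of $(\phi_j)$ in $L^2(\cP)$. Convergence of all the series involved is guaranteed by the trace-class assumption together with the uniform convergence in Mercer's theorem. The same argument applies verbatim to the $k$-scaled versions $k\bSigma+\lambda\bI$ used elsewhere: write $r(\lambda/k)/k$ and apply the bound with $\lambda/k$, which is legitimate provided $\lambda/k\le 1$.
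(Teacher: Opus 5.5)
Your proof is correct and follows essentially the same route as the paper. Both split the series at $j\approx\lambda^{-1/2\alpha}$, bound the head by the number of terms, and bound the tail by an integral comparison.

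Your ceiling cutoff $J=\lceil\lambda^{-1/2\alpha}\rceil$ makes the integral comparison fully rigorous and gives the constant $2+A$ in place of the paper's $\frac{2\alpha A^{1/2\alpha}}{2\alpha-1}$. You also verify the trace identity and the quadratic-form bound by diagonalizing the second-moment operator in the Mercer coordinates, which the paper leaves implicit.
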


\begin{proof}
    Split the sum before and after $j=\lambda^{-1/2\alpha}$ and use the asymptotic tail of the series. Without loss of generality, we assume $A=1$ (for $A\neq 1$, use $\lambda = \lambda/A$ in the formula). 
    \begin{align*}
        \sum_{j=1}^{\infty}\frac{\mu_j}{\mu_j + \lambda} 
        & = \sum_{j\le \lambda^{-1/2\alpha}}\frac{1}{1 + \lambda j^{2\alpha}} + \sum_{j> \lambda^{-1/2\alpha}}\frac{1}{1 + \lambda j^{2\alpha}} \\
        & \le \sum_{j\le \lambda^{-1/2\alpha}}1 + \sum_{j> \lambda^{-1/2\alpha}}\frac{1}{\lambda j^{2\alpha}} \\
        & \le \lambda^{-1/2\alpha} + \frac{1}{\lambda (2\alpha -1)\lambda^{-(2\alpha -1)/2\alpha}} \\
        & \le \frac{2\alpha}{2\alpha-1}\lambda^{-1/2\alpha}.
    \end{align*}
    We find that $C = \frac{2\alpha A^{1/2\alpha}}{2\alpha -1}$ in general.
\end{proof}

\begin{remark}\label{remark clambda}
    By the eigendecay for the source distribution, applying \cref{effective dimension} with $\alpha=1$ directly gives for $\lambda \in (0,1)$
    $$\Tr(\bC_\lambda) \lesssim \lambda^{-1/2}.$$
\end{remark}

\subsection{Technical lemmas}

\begin{lemma}\label{subexp hanson wright}
Let $\bx$ be a centered sub-exponential vector with $\norm{x_i}_{\psi_1} \le 1, \forall i \in [n]$, $\bA \succeq 0$ a fixed PSD matrix in $\RR^{n\times n}$. Then there exists a universal constant $C>0$, such that $\forall \delta \in (0, \frac{2}{e}]$, \wpd, 
$$\bx^{\top} \bA \bx \le C \Tr(\bA)\log^2(n)\log(\frac{2}{\delta})$$
when $n \ge \paren{\frac{4}{\delta}}^{1/3}$.

\end{lemma}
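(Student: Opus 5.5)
The plan is a truncation argument followed by a Hanson--Wright inequality for bounded entries. I will use that the coordinates $x_1,\dots,x_n$ are \emph{independent}, in addition to being centered with $\norm{x_i}_{\psi_1}\le 1$. This is the situation in which the lemma is applied: conditionally on the design, the noises $\res_i$ of distinct i.i.d.\ samples are independent. Some such hypothesis is needed. With only the coordinatewise bound, one can take $x_1=\dots=x_n$ equal to a single centered sub-exponential variable and $\bA=\frac1n\mathbf 1\mathbf 1^\top$. Then $\Tr(\bA)=1$ while $\bx^\top\bA\bx=n x_1^2$, so the claimed bound fails. I will state this explicitly at the start of the proof.

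\textbf{Step 1 (truncation).} By the $\psi_1$ tail bound and a union bound over $i\in[n]$, the event $\cE=\{\max_i|x_i|\le \tau\}$ with $\tau=c\log(4n/\delta)$ has probability at least $1-\delta/2$. Set $\bar x_i=x_i\mathbf 1(|x_i|\le\tau)$ and $z_i=\bar x_i-\EE\bar x_i$. The $z_i$ are independent, centered, and bounded by $2\tau$, so $\norm{z_i}_{\psi_2}\lesssim\tau$. Since $\EE x_i=0$, the mean shift satisfies
\[
|\EE\bar x_i|=|\EE x_i\mathbf 1(|x_i|>\tau)|\lesssim \delta/n .
\]
On $\cE$ we have $\bx=\bz+\bm m$ with $\bm m=(\EE\bar x_i)_i$, and hence $\bx^\top\bA\bx\le 2\bz^\top\bA\bz+2\bm m^\top\bA\bm m$. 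The second term is at most $2\norm{\bA}\,\norm{\bm m}_2^2\lesssim \Tr(\bA)\,\delta^2/n$, which is negligible.

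\textbf{Step 2 (Hanson--Wright).} For the bounded independent vector $\bz$, the Hanson--Wright inequality gives, with probability at least $1-\delta/2$,
\[
\bz^\top\bA\bz\le \EE\,\bz^\top\bA\bz+C\tau^2\Big(\norm{\bA}_F\sqrt{\log(2/\delta)}+\norm{\bA}\log(2/\delta)\Big).
\]
Independence and centering give $\EE\,\bz^\top\bA\bz=\sum_i A_{ii}\EE z_i^2\le C\Tr(\bA)$, because $\EE z_i^2\le\EE x_i^2\lesssim 1$. Since $\bA\succeq0$, both $\norm{\bA}_F$ and $\norm{\bA}$ are at most $\Tr(\bA)$. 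Combining with Step 1 by a union bound, with probability at least $1-\delta$,
\[
\bx^\top\bA\bx\lesssim \Tr(\bA)\Big(1+\log^2(4n/\delta)\log(2/\delta)\Big).
\]

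\textbf{Step 3 (cleaning the logarithms).} This is where the condition $n\ge(4/\delta)^{1/3}$ enters. It gives $\log(4/\delta)\le 3\log n$, so $\log(4n/\delta)\le 4\log n$ and therefore $\log^2(4n/\delta)\lesssim\log^2 n$. Moreover $\log(2/\delta)\ge\log e=1$ because $\delta\le 2/e$, so the additive constant is absorbed. Together these yield $\bx^\top\bA\bx\le C\Tr(\bA)\log^2(n)\log(2/\delta)$.

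The main obstacle is Step 1: making the truncation compatible with Hanson--Wright. This requires controlling the bias introduced by truncating and re-centering, and checking that the truncation level $\tau$ enters the bound only through $\tau^2$ and not through higher powers. If bounding the mean shift proves awkward, the fallback is to apply a sub-exponential Hanson--Wright variant directly, for example via a moment bound $\norm{\bx^\top\bA\bx-\EE\,\bx^\top\bA\bx}_{L^p}\lesssim p^2\Tr(\bA)$ for decoupled chaos. That route gives the same $\log^2$ dependence.
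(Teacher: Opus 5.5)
Your proof is correct and follows essentially the paper's route: truncate each coordinate at a logarithmic level, recenter, apply a sub-Gaussian quadratic-form bound, and use $n \ge (4/\delta)^{1/3}$ to clean the logarithms. The paper differs only in details: it truncates at the $\delta$-free level $4\log n$, spends the condition on $n$ on making the truncation failure probability $2n^{-3} \le \delta/2$, and uses the cruder one-sided bound $\tilde{x}^\top A \tilde{x} \lesssim \|\tilde{x}\|_{\psi_2}^2\, \mathrm{Tr}(A) \log(2/\delta)$ from Lemma D.1 of \citet{wang2023pseudo} instead of the centered Hanson--Wright inequality, whereas you truncate at $\log(4n/\delta)$ and spend the condition on $\log(4n/\delta) \le 4\log n$.

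Your explicit independence hypothesis is warranted. The paper's proof uses it implicitly when it asserts that a centered vector with coordinates bounded by $2C\log n$ has (vector) $\psi_2$-norm $\lesssim \log n$, and your equal-coordinates example shows the statement with only coordinatewise bounds fails without it.
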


\begin{proof}
    We will truncate the sub-exponential variables to get sub-gaussian variables and bound the gap between the quadratic forms. Then use lemma D.1 in \citep{wang2023pseudo}. 
    \begin{itemize}
        \item First by definition of sub-exponential, $$\forall i, \Pro{\abs{x_i}>C\log n} \le 2 n^{-C}$$
    for any constant $C>0$. Hence if we define the truncated version $\bar{\bx}$ of components $\bar{x}_i := x_i \mathds{1}(\abs{x_i}\le C\log n)$, by union bound, $\bx=\bar{\bx}$ with probability larger than $1-2n^{1-C}$. Denote this event $\mathcal{A}$. Also, by definition of $\bar{\bx}$, we have immediately that $\norm{\bar{\bx}} \le C \sqrt{n}\log n$ almost surely.

        \item $\bar{\bx}$ is not centered anymore so let its mean be $\bmu$. Then $\forall i, \abs{\mu_i} = \Ex{x_i \mathds{1}(\abs{x_i}\le C\log n)} = \Ex{x_i \mathds{1}(\abs{x_i}> C\log n)}$ since $\Ex{x_i}=0$. So Cauchy-Schwarz yields:
        $$\abs{\mu_i} \le \sqrt{\Ex{\abs{x_i}^2}\Pro{\abs{x_i}> C\log n}} \le 2\sqrt{2} n^{-C/2}\lesssim n^{-C/2},$$
        where the first factor is bounded by a constant by the sub-exponential property, and the second obeys the tail bound mentioned before. Hence:
        $$\norm{\bmu} \lesssim \sqrt{n}n^{-C/2} = n^{(1-C)/2}.$$

        \item Next define the centered version $\tilde{\bx}:=\bar{\bx}-\bmu$. It is centered and bounded almost surely: $\norm{\tilde{\bx}}_{\infty} \le 2C\log n$ hence it is sub-gaussian with $\norm{\tilde{\bx}}_{\psi_2} \lesssim C\log n$. Thus its quadratic form satisfies a tail bound as proposed in lemma D.1 of \citep{wang2023pseudo} for a universal constant $C'>0$, $\forall \delta \in (0, \frac{2}{e}]$, with probability at least $1-\frac{\delta}{2}$: $$\tilde{\bx}^{\top} \bA \tilde{\bx} \le C' \norm{\tilde{\bx}}_{\psi_2}^2 \Tr(\bA)\log(2/\delta) \lesssim \log^2(n) \Tr(\bA) \log(2/\delta).$$
        Denote this event $\mathcal{B}$.

        \item Finally, we can bound the difference $\abs{\tilde{\bx}^{\top} \bA \tilde{\bx} - \bar{\bx}^{\top} \bA \bar{\bx}} = \abs{\bmu^{\top} \bA (2\bar{\bx}-\bmu)} \le \norm{\bA}(2\norm{\bmu}\norm{\bx} + \norm{\bmu}^2) \le \norm{\bA}(n^{1-C} + n^{1-C/2}\log n)$.
    \end{itemize}

Putting everything together, under the event $\mathcal{A}\cap \mathcal{B}$ which holds with probability larger than $1-2n^{1-C}-\frac{\delta}{2}$ by union bound, we get:
$\bx^{\top} \bA \bx = \tilde{\bx}^{\top} \bA \tilde{\bx} + (\bx^{\top} \bA \bx - \bar{\bx}^{\top} \bA \bar{\bx}) + (\bar{\bx}^{\top} \bA \bar{\bx} - \tilde{\bx}^{\top} \bA \tilde{\bx}) \lesssim \norm{\bA}(n^{1-C} + n^{1-C/2}\log n) + \log^2n \Tr(\bA)\log(2/\delta)$.

\bigskip
Choosing $C=4$, since $\log(2/\delta) \ge 1$ for $\delta \le 2/e$, and $\norm{\bA} \le \Tr(\bA)$, yields:
$$\bx^{\top} \bA \bx \lesssim \Tr(\bA)\log(\frac{2}{\delta})\paren{\log^2(n)+\frac{1}{n^3} + \frac{\log (n)}{n}}$$
with probability larger than $1-\frac{2}{n^3}-\frac{\delta}{2}$. 
For $n \ge \paren{\frac{4}{\delta}}^{1/3}$, the probability is larger than $1-\delta$ as wanted. For $\delta = 0.05$, this threshold is just $n\ge 5$.
Removing the negligible terms yields the desired result.
    
\end{proof}

\begin{lemma}\label{covarariance concentration}
Let \Cref{setup,kernel assumptions} hold, with $M_2$ the bound on the covariates $\Phi(\bx_i)$. Choose $\gamma \in (0,1)$, $\delta \in (0,1/e]$, and $\lambda \ge \frac{4M_2^2\log(14n/\delta)}{\gamma^2n}$. Then the following event holds \wpd:

$$\Pro{(1-\gamma)(\bSigma + \lambda \bI)\preceq \hSig + \lambda \bI \preceq (1+\gamma)(\bSigma + \lambda \bI)} \ge 1-\delta.$$

Under that event, the following inequalities are true:
\begin{enumerate}
    \item For any vector $\ba \in \mathbb{H}$, $\ip{\ba}{(\hSig + \lambda \bI)^{-1} \ba} \le (1-\gamma)^{-1}\ip{\ba}{(\bSigma + \lambda \bI)^{-1} \ba}$.
    
    \item $\Tr((\hSig + \lambda \bI)^{-1}\hSig) \le M_1^2(1-\gamma)^{-1}\Tr((\bSigma + \lambda \bI)^{-1}\bSigma)$, where $M_1>0$ is the uniform bound on the eigenfunctions.
\end{enumerate}

\end{lemma}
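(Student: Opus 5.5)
We reduce the two-sided operator inequality to a bound on the operator norm of a normalized, centered sum of i.i.d.\ self-adjoint random operators. Set $\bW_i = (\bSigma+\lambda\bI)^{-1/2}\Phi(\bx_i)$ and $\bZ_i = \bW_i\otimes\bW_i$. Then $\mathbb{E}\bZ_i = (\bSigma+\lambda\bI)^{-1/2}\bSigma(\bSigma+\lambda\bI)^{-1/2}$. Conjugating by $(\bSigma+\lambda\bI)^{1/2}$, the event
\[
(1-\gamma)(\bSigma + \lambda \bI)\preceq \hSig + \lambda \bI \preceq (1+\gamma)(\bSigma + \lambda \bI)
\]
is implied by
\[
\Big\|\frac1n\sum_{i=1}^n(\bZ_i-\mathbb{E}\bZ_i)\Big\|\le\gamma .
\]
Indeed, $(\bSigma+\lambda\bI)^{-1/2}(\hSig+\lambda\bI)(\bSigma+\lambda\bI)^{-1/2}-\bI$ equals exactly this centered average. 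So the first task is to establish this operator-norm bound with probability at least $1-\delta$.

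For this we use a Bernstein inequality for sums of bounded self-adjoint operators with an intrinsic-dimension (effective rank) factor. This is the Hilbert-space version of Tropp's/Minsker's inequality, which avoids any dependence on the ambient (possibly infinite) dimension. The required inputs are as follows.
\begin{itemize}
\item \textbf{Uniform bound.} $\|\bZ_i\|=\|\bW_i\|^2\le M_2^2/\lambda$ almost surely, since $\|\Phi(\bx)\|\le M_2$ and $(\bSigma+\lambda\bI)^{-1}\preceq\lambda^{-1}\bI$.
\item \textbf{Variance.} $\mathbb{E}\bZ_i^2\preceq (M_2^2/\lambda)\,\mathbb{E}\bZ_i$, and $\|\mathbb{E}\bZ_i\|\le 1$. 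Hence the variance proxy is at most $M_2^2/(n\lambda)$.
\item \textbf{Intrinsic dimension.} $\Tr(\mathbb{E}\bZ_i)/\|\mathbb{E}\bZ_i\|$ is bounded crudely by $\Tr(\bSigma)/\lambda\cdot\|\mathbb{E}\bZ_i\|^{-1}$, which is polynomial in $n$ for the allowed range of $\lambda$.
\end{itemize}
Bernstein's inequality then gives a deviation of order
\[
\sqrt{\frac{M_2^2\log(d_{\rm int}/\delta)}{n\lambda}}+\frac{M_2^2\log(d_{\rm int}/\delta)}{n\lambda}.
\]
The condition $\lambda\ge 4M_2^2\log(14n/\delta)/(\gamma^2 n)$ makes this at most $\gamma$.

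The main obstacle is the intrinsic-dimension factor. To obtain $\log(14n/\delta)$ rather than $\log(\Tr(\bSigma)/(\lambda\delta))$, we need $d_{\rm int}\lesssim n$. This holds when $\|\mathbb{E}\bZ_i\|\gtrsim 1/2$, i.e.\ when $\lambda\le\mu_1$; otherwise the claim is easy because $\hSig\preceq M_2^2\bI\preceq\gamma\lambda\bI$-type bounds dominate. This case split is the step where the constant $14$ has to be tracked, and it is handled exactly as in \citet{wang2023pseudo}. I would cite their covariance-concentration lemma for this step if it applies verbatim.

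Finally, the two consequences are derived under the event. For item 1, operator monotonicity of inversion gives $(\hSig+\lambda\bI)^{-1}\preceq(1-\gamma)^{-1}(\bSigma+\lambda\bI)^{-1}$, and we take the quadratic form in $\ba$. For item 2, write
\[
\Tr\big((\hSig+\lambda\bI)^{-1}\hSig\big)=\frac1n\sum_i\ip{\Phi(\bx_i)}{(\hSig+\lambda\bI)^{-1}\Phi(\bx_i)}.
\]
Applying item 1 to each term and then the last part of \cref{effective dimension}, each summand is at most $(1-\gamma)^{-1}M_1^2\,r(\lambda)=(1-\gamma)^{-1}M_1^2\Tr((\bSigma+\lambda\bI)^{-1}\bSigma)$. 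Averaging over $i$ gives the claim.
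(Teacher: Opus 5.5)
Your proof is essentially the paper's. The paper likewise imports the sandwich inequality from Corollary D.1 of \citet{wang2023pseudo} and gets item 1 from operator monotonicity of inversion. For item 2 it first writes $\Tr((\hSig+\lambda\bI)^{-1}\hSig)\le(1-\gamma)^{-1}\Tr((\bSigma+\lambda\bI)^{-1}\hSig)$ and then bounds each $\|(\bSigma+\lambda\bI)^{-1/2}\Phi(\bx_i)\|^2=\sum_j\frac{\mu_j}{\mu_j+\lambda}\phi_j(\bx_i)^2\le M_1^2\Tr((\bSigma+\lambda\bI)^{-1}\bSigma)$; this is your per-sample argument with the two steps in the other order.

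One thing to correct is your side remark that the regime $\lambda>\mu_1$ is easy. A deterministic bound of the type $\hSig\preceq M_2^2\bI\preceq\gamma\lambda\bI$ needs $\lambda\gtrsim M_2^2/\gamma$. The hypothesis only gives $\lambda\gtrsim M_2^2\log(n/\delta)/(\gamma^2n)$, so the range $\mu_1<\lambda\lesssim M_2^2/\gamma$ is left uncovered. No case split is needed, however. Use the intrinsic-dimension Bernstein inequality with the enlarged variance proxy $\frac{M_2^2}{n\lambda}\big[(\bSigma+\lambda\bI)^{-1}\bSigma+\frac{\lambda}{\mu_1+\lambda}\boldsymbol{P}\big]$, where $\boldsymbol{P}$ is the projection onto a top eigenvector of $\bSigma$. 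Its norm is exactly $M_2^2/(n\lambda)$, and its effective rank is at most $1+\Tr(\bSigma)/\lambda\le 1+M_2^2/\lambda\le 1+n/4$ for every admissible $\lambda$.
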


\begin{proof}
    The first part is just a repetition of corollary D.1 in \citep{wang2023pseudo}. Then, the first inequality is a direct corollary from the main sandwich inequality, by definition of the PSD order.
    Finally, for the second inequality, observe that the sandwich inequality implies: 
    $$\Tr((\hSig + \lambda \bI)^{-1}\hSig) \le \frac{1}{1-\gamma}\Tr((\bSigma + \lambda \bI)^{-1}\hSig).$$
    Define $\tilde{\Phi}_i := (\bSigma + \lambda \bI)^{-1/2}\Phi(\bx_i)$ such that $$\Tr((\bSigma + \lambda \bI)^{-1}\hSig) = \Tr((\bSigma + \lambda \bI)^{-1/2}\hSig (\bSigma + \lambda \bI)^{-1/2}) = \frac{1}{n}\sum_{i=1}^n \norm{\tilde{\Phi}_i}^2.$$
    Since by uniform boundedness of the eigenfunctions, $$\norm{\tilde{\Phi}_i}^2 = \sum_{j=1}^{\infty}\frac{\mu_j}{\mu_j + \lambda}\abs{\phi_j(\bx_i)}^2 \le M_1^2 \sum_{j=1}^{\infty}\frac{\mu_j}{\mu_j + \lambda} = M_1^2\Tr((\bSigma + \lambda \bI)^{-1}\bSigma),$$ we get:
    $$\Tr((\bSigma + \lambda \bI)^{-1}\hSig) \le M_1^2 \Tr((\bSigma + \lambda \bI)^{-1}\bSigma),$$
    and the result follows.
    
\end{proof}

\subsection{Relating the source covariance operator to the target's via the effective sample size}\label[appendix]{sec: use of neff}

Through the effective dimension defined as $n_{\text{eff}} = \sup \bigg\{ t\le n | t\bSigma_0 \preceq n\bSigma + \frac{\mu^2}{k}\bI \bigg\}$ in equation \cref{effective sample size}, we can relate expressions depending on $\bSigma$ to $\bSigma_0$-only expressions. Take $\lambda \ge \frac{\mu^2}{n}$, and let $r=\frac{n_{\text{eff}}}{n}$. On the one hand we have
$$ 2( k \bSigma + \lambda \bI) \succeq  (k \bSigma + \mu^2/n \bI) +  \lambda\bI \succeq rk\bSigma_0 +  \lambda\bI = r(k\bSigma_0 +  r^{-1}\lambda\bI),$$ hence
\begin{equation}\label{eq: sigma sigma0 neff}
    (k \bSigma + \lambda \bI)^{-1} \preceq 2r^{-1}(k\bSigma_0 +  r^{-1}\lambda\bI)^{-1},
\end{equation}

and 
\begin{equation}\label{eq: trace s_lambda}
    \Tr(\bS_\lambda) \le 2r^{-1}\Tr((k\bSigma_0 + \lambda r^{-1}\bI)^{-1}\bSigma_0).
\end{equation}

On the other hand, we have simply:
$$k \bSigma + \lambda \bI \succeq rk\bSigma_0,$$ hence:
\begin{equation}\label{eq: operator norm s_lambda}
    \norm{\bS_\lambda} \le r^{-1}k^{-1}.
\end{equation}

\section{Complement on the numerical experiments from \cref{sec: experiments}}\label[appendix]{app: numericals}
\subsection{Implementation details}\label[appendix]{app: algo details}

We optimize the ridge-regularized kernel GLM \cref{regularized glm bis} in function space $\textbf{f} = \bK\balpha$ via Fisher scoring \citep{mccullagh1989generalized}, i.e.\ Newton's method with the Hessian replaced by the expected Hessian (Fisher information). At each iteration, with current evaluations $\eta_i = f(\bx_i)$, mean $\mu_i=\mathbb E[y_i\mid \eta_i]$, and variance $V_i=\text{Var}[y_i\mid \eta_i]$, we compute the standard GLM weights $\bW = \diag((\frac{d\mu_i}{d\eta_i})^2/V_i)$ and pseudo-responses $z_i = \eta_i + (y_i-\mu_i)/\frac{d\mu_i}{d\eta_i}$. The resulting iteratively reweighted least squares (IRLS) update requires solving $(\bK \bW + n\lambda \bI)\textbf{f} = \bK \bW \bz$. $\bK\in\mathbb R^{n\times n}$ is the kernel matrix introduced in equation \cref{eq: alpha space}.

To ensure stable convergence of our conjugate gradient \citep{hestenes_stiefel_1952} (CG) solver, we avoid extreme weight scalings by defining $\bS = \bW^{1/2}$ and substituting $\bu = \bS\textbf{f}$ to solve the equivalent symmetric positive definite system:
\[
(\bS \bK \bS + n\lambda \bI) \bu = \bS \bK \bW \bz
\]
We then recover $\textbf{f} = \bS^{-1}\bu$. 

For large-scale problems, forming the dense $n \times n$ matrix $\bK$ introduces an $\mathcal{O}(n^2)$ memory bottleneck. We circumvent this by implementing the inner CG solver using KeOps \citep{keops} for on-the-fly GPU kernel-vector products. This strategy reduces the memory footprint to $\mathcal{O}(nd)$ without altering the exact IRLS update equations. Full algorithmic details and solvers are available at \url{https://github.com/nathanweill/KRGLM/}.

\subsection{Other synthetic data result}\label[appendix]{app: synthetic data exp}
Figure \ref{fig:reslog2} shows the results for the same experiments as in \cref{sec: experiments synth}, with covariate shift strength $B=n^{0.45}$. Estimates of the $\alpha$ coefficients are $0.434(0.049)$, $0.451 (0.053)$ and $0.360 (0.047)$ for the pseudo-labeling, oracle, and naive methods, respectively.
\begin{figure}[h]
    \centering    \includegraphics[width=0.5\linewidth]{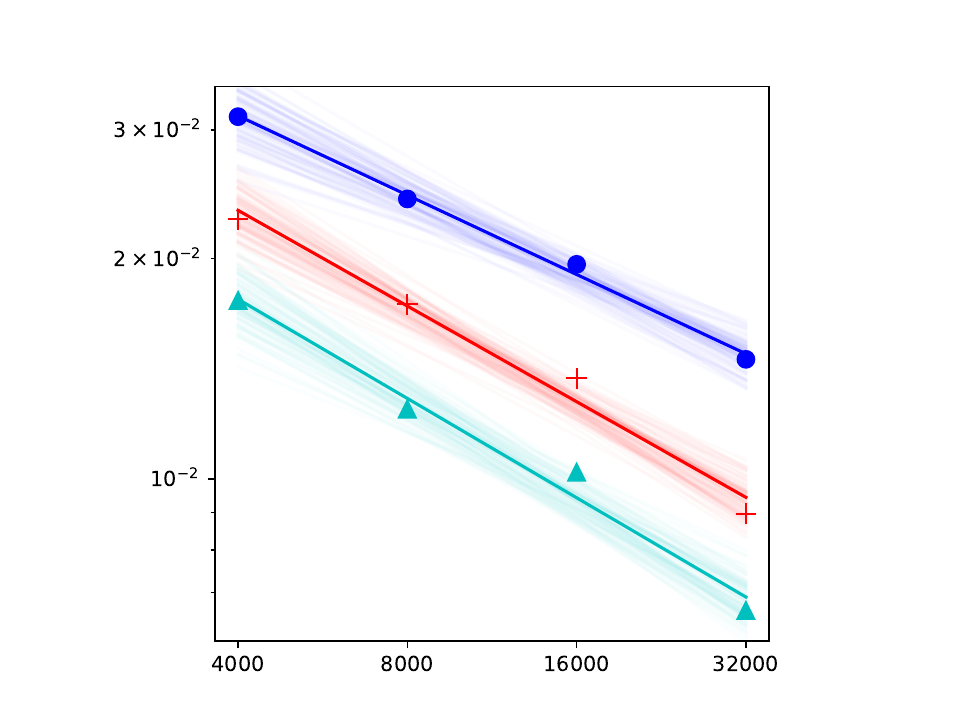}
    \caption{$B=n^{0.45}$}
    \label{fig:reslog2}
\end{figure}

\subsection{Details about the real data experiments}\label[appendix]{app: real data exp}

Let us explain in more details the experiments of \cref{sec: experiments real}. After sub-sampling, we have labeled in-distribution (ID) data $\mathcal D=\{(x_i,y_i)\}_{i=1}^n$ and labeled out-of-distribution (OOD) data $\mathcal D_0=\{(x_{0,i},y_{0,i})\}_{i=1}^{n_0}$ (OOD labels are used \emph{only} for oracle benchmarking and final evaluation).
We first split the OOD sample into a selection set and a test set,
$\mathcal D_0=\mathcal D_{0,\mathrm{sel}}\cup \mathcal D_{0,\mathrm{test}}$, and report all final metrics on
$\mathcal D_{0,\mathrm{test}}$.

Recall that for a score function $f:\mathcal X\to\mathbb R$, the empirical logistic risk on a finite set $S$ with targets $t\in[0,1]$ is
\[
R_S(f;t)\;:=\;\frac{1}{|S|}\sum_{x\in S}\Bigl(a(f(x)) - t(x)\,f(x)\Bigr),
\qquad a(u)=\log(1+e^u),
\]
where $t(x)\in\{0,1\}$ yields standard log-loss and $t(x)\in[0,1]$ yields the corresponding \emph{soft-label} risk.

\medskip
As mentioned in \cref{sec: experiments real}, due to the small sample size, we use a slightly modified selection procedure compared to \cref{alg:pseudo-krr} and \cref{sec: experiments synth}, to deflate the variance of the final estimator. It is detailed in \cref{alg:practical-cv}. Each fitting in steps 2 and 5 is a ridge logistic regression task. 

\begin{algorithm}[h]
\caption{Repeated $K$-Fold Cross-Validated Pseudo-Labeling (Practical Variant)}
\label{alg:practical-cv}
\begin{algorithmic}[1]
\STATE \textbf{Input}: Labeled ID data $\cD = \{(x_i, y_i)\}_{i=1}^n$, labeled OOD data $\mathcal{D}_0 = \{(x_{0i}, y_{0i})\}_{i=1}^{n_0}$ split into $\mathcal{D}_{0,\mathrm{sel}}$ and $\mathcal{D}_{0,\mathrm{test}}$. OOD labels are \emph{not} used by the pseudo-labeling method; they serve only for the oracle benchmark and final evaluation. Candidate grid $\Lambda = \{\lambda_1, \dots, \lambda_m\}$, imputer penalty $\tilde{\lambda}$, number of folds $K$, number of repeats $R$.

\STATE \textbf{Step 1 (Repeated $K$-fold partition).} For each repeat $r \in [R]$, draw a fresh stratified (i.e. preserving class proportions) $K$-fold partition of the ID indices:
\[
\{1,\dots,n\} = \bigsqcup_{k=1}^K F_k^{(r)}, \qquad F_k^{(r)} \cap F_{k'}^{(r)} = \emptyset.
\]
Define the candidate and imputer index sets for each fold $(r,k)$:
\[
I_{\mathrm{cand}}^{(r,k)} := F_k^{(r)}, \qquad I_{\mathrm{imp}}^{(r,k)} := \{1,\dots,n\} \setminus F_k^{(r)}.
\]

\STATE \textbf{Step 2 (Per-fold training and evaluation).} For each fold $(r,k)$:

\STATE \quad (a) \textbf{Imputer.} Fit $\tilde{f}^{(r,k)}$ on $\{(x_i, y_i)\}_{i \in I_{\mathrm{imp}}^{(r,k)}}$ with penalty $\tilde{\lambda}$. Compute soft pseudo-labels:
\[
p^{(r,k)}(x) := \sigma(\tilde{f}^{(r,k)}(x)), \quad x \in \mathcal{D}_{0,\mathrm{sel}}, \quad \sigma(u) = (1+e^{-u})^{-1}.
\]

\STATE \quad (b) \textbf{Candidates.} For each $\lambda_j \in \Lambda$, fit $f_{\lambda_j}^{(r,k)}$ on $\{(x_i, y_i)\}_{i \in I_{\mathrm{cand}}^{(r,k)}}$.

\STATE \quad (c) \textbf{Risk curves.} For each $\lambda_j$, compute:
\begin{align*}
\widehat{R}_{\mathrm{naive}}^{(r,k)}(\lambda_j) &:= R_{\{x_i : i \in I_{\mathrm{imp}}^{(r,k)}\}}\bigl(f_{\lambda_j}^{(r,k)};\; y\bigr), \\
\widehat{R}_{\mathrm{pseudo}}^{(r,k)}(\lambda_j) &:= R_{\mathcal{D}_{0,\mathrm{sel}}}\bigl(f_{\lambda_j}^{(r,k)};\; p^{(r,k)}\bigr), \\
\widehat{R}_{\mathrm{oracle}}^{(r,k)}(\lambda_j) &:= R_{\mathcal{D}_{0,\mathrm{sel}}}\bigl(f_{\lambda_j}^{(r,k)};\; y_0\bigr).
\end{align*}

\STATE \textbf{Step 3 (Aggregation).} Average over all folds and repeats:
\begin{equation}\label{eq: selection risk curve}
\widehat{R}_{\star}(\lambda_j) := \frac{1}{KR} \sum_{r=1}^R \sum_{k=1}^K \widehat{R}_{\star}^{(r,k)}(\lambda_j), \qquad \star \in \{\mathrm{naive},\, \mathrm{pseudo},\, \mathrm{oracle}\}.
\end{equation}

\STATE \textbf{Step 4 (Selection).} Select regularization parameters independently:
\[
\hat{\lambda}_{\star} \in \arg\min_{\lambda \in \Lambda} \widehat{R}_{\star}(\lambda), \qquad \star \in \{\mathrm{naive},\, \mathrm{pseudo},\, \mathrm{oracle}\}.
\]

\STATE \textbf{Step 5 (Refit).} For each $\star \in \{\mathrm{naive},\, \mathrm{pseudo},\, \mathrm{oracle}\}$, refit a final model $\hat{f}_{\star}$ on \emph{all} labeled ID data $\cD$.

\STATE \textbf{Step 6 (Evaluation).} Evaluate each final model on the held-out OOD test set:
\[
R_{\mathrm{test}}(\hat{f}_{\star}) := R_{\mathcal{D}_{0,\mathrm{test}}}\bigl(\hat{f}_{\star};\; y_0\bigr).
\]

\STATE \textbf{Output}: Test risks $R_{\mathrm{test}}(\hat{f}_{\mathrm{naive}})$, $R_{\mathrm{test}}(\hat{f}_{\mathrm{pseudo}})$, $R_{\mathrm{test}}(\hat{f}_{\mathrm{oracle}})$.
\end{algorithmic}
\end{algorithm}

\medskip

\paragraph{Hyperparameters of the Raisin experiment}
In the specific case of the Raisin dataset, we chose $\lambda_1=10^{-4}$ to $\lambda_m=10^2$ for the candidates grid, and $\tilde \lambda = 10^{-4}$ for the imputer. Then we ran \cref{alg:practical-cv} over 100 random seeds. In \cref{sec: experiments real}, we reported the results for $K=2, R=6$. \cref{tab:risk_results_5_fold} provides the results for the same experiment, but using $K=5, R=2$ instead.

\begin{table}[t]
  \caption{Risk mean estimate across seeds with confidence interval (CI) and standard error (SE).}
  \label{tab:risk_results_5_fold}
  \begin{center}
    \begin{small}
      \begin{sc}
        \begin{tabular*}{\columnwidth}{@{\extracolsep{\fill}} lccc @{}}
          \toprule
          Method & Mean & 95\% CI & SE \\
          \midrule
          Naive           & 0.444 & [0.414, 0.474] & 0.015 \\
          Pseudo-labeling & 0.385 & [0.373, 0.396] & 0.006 \\
          Oracle          & 0.376 & [0.363, 0.390] & 0.007 \\
          \bottomrule
        \end{tabular*}
      \end{sc}
    \end{small}
  \end{center}
  \vskip -0.1in
\end{table}

\bigskip
Finally for this 5-fold (2 repeats) cross-validation experiment, we plotted the risk curves used for model selection for each of the three methods, described in step 3's equation \cref{eq: selection risk curve}, averaged over seeds in \cref{fig: section risk curves}. The pseudo-labeling method constructs a risk curve consistent with the oracle, thus achieving good model selection for the target. In contrast, the naive risk curve stays flat.

\begin{figure}[h]
    \centering    \includegraphics[width=0.8\linewidth]{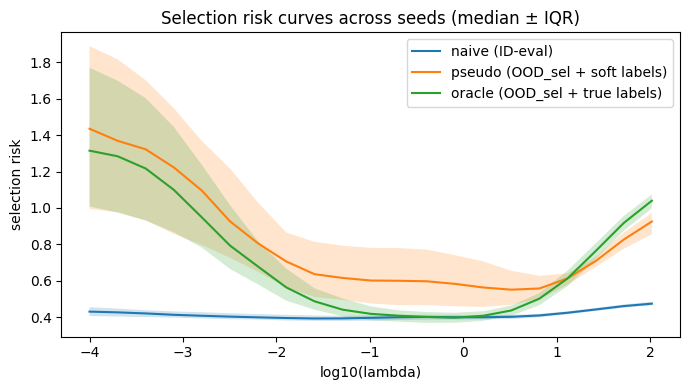}
    \caption{Selection risk curves of the three methods.}
    \label{fig: section risk curves}
\end{figure}

\end{appendices}

\end{document}